\def\eqref#1{equation~\ref{#1}}
\def\1{\bm{1}}
\DeclareMathAlphabet{\mathsfit}{\encodingdefault}{\sfdefault}{m}{sl}
\SetMathAlphabet{\mathsfit}{bold}{\encodingdefault}{\sfdefault}{bx}{n}
\newtheorem{theorem}{Theorem}[section]
\newtheorem{proposition}[theorem]{Proposition}
\newtheorem{remark}[theorem]{Remark}
\newtheorem{lemma}[theorem]{Lemma}
\newtheorem{examplee}[theorem]{Example}
\title{Partial Gromov-Wasserstein Metric}
\author[1]{Yikun Bai}
\author[2]{Rocio Diaz Martin\textsuperscript{*}}
\author[1]{Abihith Kothapalli\textsuperscript{*}}
\author[3]{\\Hengrong Du}
\author[1]{Xinran Liu}
\author[1]{Soheil Kolouri}
\affil[1]{Department of Computer Science, Vanderbilt University}
\affil[2]{Department of Mathematics, Tufts University}
\affil[3]{Department of Mathematics, University of California, Irvine}
\affil[1]{\href{mailto:yikun.bai@vanderbilt.edu}{yikun.bai}, \href{mailto:abi.kothapalli@vanderbilt.edu}{abi.kothapalli}, \href{mailto:xinran.liu@vanderbilt.edu}{xinran.liu}, \href{mailto:soheil.kolouri@vanderbilt.edu}{soheil.kolouri@vanderbilt.edu}}
\affil[2]{\href{mailto:rocio.diaz_martin@tufts.edu}{rocio.diaz\_martin@tufts.edu}}
\affil[3]{\href{mailto:hengrond@uci.edu}{hengrond@uci.edu}}
\renewcommand{\eqref}[1]{(\ref{#1})}
\renewcommand{\cite}[1]{\citep{#1}}
\begin{document}
\maketitle
\renewcommand{\thefootnote}{*}
\footnotetext{These authors contributed equally to this work.}
\renewcommand{\thefootnote}{\arabic{footnote}} % Reset footnote numbering to default

% \footnotetext[*]{These authors contributed equally to this work.}

\begin{abstract}\label{sec:abstract}
\vspace{-1em}
The Gromov-Wasserstein (GW) distance has gained increasing interest in the machine learning community in recent years, as it allows for the comparison of measures in different metric spaces. To overcome the limitations imposed by the equal mass requirements of the classical GW problem, researchers have begun exploring its application in unbalanced settings. However, Unbalanced GW (UGW) can only be regarded as a discrepancy rather than a rigorous metric/distance between two metric measure spaces (mm-spaces). In this paper, we propose a particular case of the UGW problem, termed Partial Gromov-Wasserstein (PGW). We establish that PGW is a well-defined metric between mm-spaces and discuss its theoretical properties, including the existence of a minimizer for the PGW problem and the relationship between PGW and GW, among others. We then propose two variants of the Frank-Wolfe algorithm for solving the PGW problem and show that they are mathematically and computationally equivalent. Moreover, based on our PGW metric, we introduce the analogous concept of barycenters for mm-spaces. Finally, we validate the effectiveness of our PGW metric and related solvers in applications such as shape matching, shape retrieval, and shape interpolation, comparing them against existing baselines. Our code is available at \url{https://github.com/mint-vu/PGW_Metric}.

%between mm-spaces,  

% In addition, we demonstrate that the PGW problem can be transformed into a variant of the balanced Gromov-Wasserstein problem. Based on this reformulation, we propose two Frank-Wolfe solvers for PGW that are mathematically and computationally equivalent. 
\end{abstract}
\vspace{-1em}

\section{Introduction}\label{sec:intro}
\vspace{-0.5em}
The classical optimal transport (OT) problem \cite{Villani2009Optimal} seeks to match two probability measures while minimizing the expected transportation cost. At the heart of classical OT theory lies the principle of mass conservation, which aims to optimize the transfer between two probability measures, assuming they have the same total mass and strictly preserving it. Statistical distances that arise from OT, such as Wasserstein distances, have been widely applied across various machine learning domains, ranging from generative modeling \cite{arjovsky2017wasserstein,gulrajani2017improved} to domain adaptation \cite{courty2017joint} and representation learning \cite{kolouri2020wasserstein}. Recent advancements have extended the OT problem to address certain limitations within machine learning applications. These advancements include: 1) facilitating the comparison of non-negative measures that possess different total masses via unbalanced \cite{chizat2018unbalanced} and partial OT \cite{figalli2010optimal}, and 2) enabling the comparison of probability measures across distinct metric spaces through Gromov-Wasserstein distances \cite{memoli2011gromov}, with applications spanning from quantum chemistry \cite{gilmer2017neural} to natural language processing \cite{alvarez2018gromov}.

Regarding the first aspect, many applications in machine learning involve comparing non-negative measures (often empirical measures) with varying total amounts of mass, e.g., domain adaptation \cite{fatras2021unbalanced}. Moreover, OT distances (or dissimilarity measures) are often not robust against outliers and noise, resulting in potentially high transportation costs for outliers. Many recent publications have focused on variants of the OT problem that allow for comparing non-negative measures with unequal mass. For instance, the optimal partial transport problem \cite{figalli2010optimal,caffarelli2010free,figalli2010new, nguyen2024POT,georgiou2008metrics,piccoli2014generalized}, Kantorovich--Rubinstein norm \cite{guittet2002extended,heinemann2023kantorovich,lellmann2014imaging}, and the Hellinger--Kantorovich distance \cite{chizat2018interpolating,Liero2018Optimal}. Recent works formulating the metric properties of partial OT with total variation constraints include \cite{raghvendra2024new,nietert2023robust}.
These methods fall under the broad category of ``unbalanced optimal transport.'' In this regard, we also highlight \cite{balaji2020robust,nguyen2023unbalanced,le2021robust}, which enhance OT's robustness in the presence of outliers. 

 Regarding the second aspect, comparing probability measures across different metric spaces is essential in many machine learning applications, ranging from computer graphics, where shapes and surfaces are compared 
\cite{bronstein2006generalized,memoli2009spectral}, to graph partitioning and matching problems \cite{xu2019scalable}. Source and target distributions often arise from varied conditions, such as different times, contexts, or measurement techniques, creating substantial differences in intrinsic distances among data points. The conventional OT framework necessitates a meaningful distance across diverse domains, a requirement that is not always achievable. To circumvent this issue, the Gromov-Wasserstein (GW) distances were proposed in \cite{memoli2011gromov,memoli2009spectral} as an adaptation of the Gromov-Hausdorff distance, which measures the discrepancy between two metric spaces \cite{edwards1975structure,gromov1981structures, gromov1981groups, burago2001course}. The GW distance \cite{memoli2011gromov,sturm2023space} extends OT-based distances to metric measure spaces (mm-spaces) up to isometries. Its invariance across isomorphic mm-spaces makes the GW distance particularly valuable for applications like shape comparison and matching, where invariance to rigid motion transformations is crucial.

%However, 
%Over the past decade, there has been extensive work on devising fast and efficient solvers for the OT problem and its unbalanced version, involving various techniques such as linear programming, Sinkhorn iterations, dynamic programming, and slicing (see, e.g.,  \cite{guittet2002extended,cuturi2013sinkhorn,papadakis2014optimal,benamou2014numerical,benamou2015iterative,peyre2019computational,chizat2018scaling, Bonneel2019sliced,bai2022sliced}). 
The main computational challenge of the GW metric is the non-convexity of its formulation \cite{memoli2011gromov}. The conventional computational approach relies on the Frank-Wolfe (FW) algorithm \cite{frank1956algorithm,lacoste2016convergence}. Optimal transport (OT) computational methods \cite{guittet2002extended,cuturi2013sinkhorn,papadakis2014optimal,benamou2014numerical,benamou2015iterative,peyre2019computational,chizat2018scaling, Bonneel2019sliced,bai2022sliced}, such as the Sinkhorn algorithm, can be incorporated into FW iterations, which yields the classical GW solvers \cite{peyre2016gromov,xu2019gromov,titouan2019optimal}.

Given that the GW distance is limited to the comparison of probability mm-spaces, recent works have introduced unbalanced and partial variations \cite{sejourne2021unbalanced,chapel2020partial,de2022entropy}. These variations %variations, known as unbalanced or partial GW problems, 
have been applied in diverse contexts, including partial graph matching for social network analysis \cite{liu2020partial} and the alignment of brain images \cite{thual2022aligning}. Although solving these unbalanced variants of the GW problem yields notions of \textit{discrepancies} between mm-spaces, their \textit{metric} properties remain unclear in the literature.

Motivated by the emerging applications of the GW problem in unbalanced settings, this paper focuses on developing a metric between general (not necessarily probability) mm-spaces and providing efficient solvers for its computation. Our proposed metric arises from formulating a variant of the GW problem for unbalanced contexts, rooted in the framework provided by \cite{sejourne2021unbalanced}, which we named the \textit{Partial Gromov-Wasserstein} (PGW) problem. In contrast to \cite{sejourne2021unbalanced}, which introduces a KL-divergence penalty and a Sinkhorn solver, we employ a total variation penalty, demonstrate the resulting metric properties, and provide novel, efficient solvers for this problem.
%We compare our PGW problem with its \textit{`Lagrangian form'} presented in \cite{chapel2020partial} as a mass-constrained approach, which we refer to as the \textit{Mass-Constrained Partial Gromov-Wasserstein} (MPGW) problem. Since, similar to the classical GW problem, the unbalanced variants (including PGW and MPGW) are non-convex problems, PGW and MPGW will result in non-equivalent problems. 
To the best of our knowledge, this paper presents the first metric for non-probability mm-spaces based on the GW distance. 

\textbf{Contributions.} Our specific contributions to this paper are:
\begin{itemize}
\item \textbf{GW metric in unbalanced settings.} We propose the Partial Gromov-Wasserstein (PGW) problem and prove that it gives rise to a metric between arbitrary mm-spaces. 
\item \textbf{PGW solver. \footnote{Rigorously speaking, due to the non-convexity of the GW problem and its variants, current methods achieve only local minima rather than global minima. We use the term ``solver'' following the convention of previous related works \cite{sejourne2021unbalanced,chapel2020partial}. However, it should be emphasized that the proposed methods aim to find local minima rather than global minima, similar to related and classical computational GW works.}} Analogous to the technique presented in \cite{caffarelli2010free}, we show that the PGW problem can be turned into a variant of the GW problem. Based on this relation, we propose two mathematically equivalent, but distinct in numerical implementation, Frank-Wolfe solvers for the discrete PGW problem. Inspired by the results of \cite{lacoste2016convergence}, we prove that similar to the Frank-Wolfe solver presented in \cite{chapel2020partial}, our proposed solvers for the PGW problem converge linearly to a stationary point. 
    
\item \textbf{Numerical experiments.} We demonstrate the performance of our proposed algorithms in terms of computation time and efficacy on a series of tasks: shape-matching with outliers between 2D and 3D objects, shape retrieval between 2D shapes, and shape interpolation using the concept of PGW barycenters. We compare the performance of our proposed algorithms against existing baselines for each task.

\end{itemize}
\vspace{-1em}
\section{Background}\label{sec: background} 
\vspace{-0.5em}
In this section, we review the basics of OT theory, one of its variants in unbalanced contexts called Partial OT (POT), and their connection as established in \cite{caffarelli2010free}. We then introduce the GW distance. 

\vspace{-1em}
\subsection{Optimal Transport and Partial Optimal Transport}\label{subsec: ot pot}
\vspace{-0.5em}
Let $\Omega\subseteq\mathbb{R}^d$ be, for simplicity, a compact subset of $\mathbb{R}^d$, and $\mathcal{P}(\Omega)$ be the space of probability measures defined on the Borel $\sigma$-algebra of $\Omega$.%, and $\mathcal{P}_p(\Omega) :=\{\sigma\in\mathcal{P}(\Omega): \int_{\Omega}\|x\|^pd\sigma(x)<\infty\}$ if they have finite $p$-moment. 
%Given a (measurable) mapping  $T:\Omega\to \Omega$, $T_\#\sigma$ is defined as $T_\#\sigma(B):=\sigma(T^{-1}(B))$, where $T^{-1}(B):=\{x: T(x)\in B\}$ is the preimage of $B$ by $T$. 

\textbf{The Optimal Transport (OT) problem} for $\mu,\nu\in\mathcal{P}(\Omega)$,  with transportation cost $c(x,y): \Omega\times \Omega\to \mathbb{R}_+$ being a
lower-semi continuous function is defined as: 
\begin{equation}
OT(\mu,\nu):=\min_{\gamma\in \Gamma(\mu,\nu)}\gamma(c), \qquad \text{ where } \quad 
\gamma(c):=\int_{\Omega^2}c(x,y) \, d\gamma(x,y) 
\label{eq: OT}
% OT(\mu,\nu):=\min_{\gamma\in \Gamma(\mu,\nu)}\left(\int_{\Omega^2}c(x,y)d\gamma(x,y)\right)^{1/p} \label{eq: OT}
\end{equation}
and where $\Gamma(\mu,\nu)$ denotes the set of all joint probability measures on $\Omega^2:=\Omega\times \Omega$ with marginals $\mu,\nu$, i.e., %$  \Gamma(\mu,\nu):=\{\gamma\in\mathcal{P}(\Omega^2): \pi_{1\#}\gamma=\mu,  \pi_{2\#}\gamma=\nu\}$, 
$\gamma_1:=\pi_{1\#}\gamma=\mu,
 \gamma_2:= \pi_{2\#}\gamma=\nu$,
where $\pi_1,\pi_2:\Omega^2\to \Omega$ are the canonical projections  $\pi_1(x,y):=x,\pi_2(x,y):=y$. %To simplify our notation, we define $\gamma_i:=\pi_{i\#}\gamma$ for $i=1,2$. 
A minimizer for \eqref{eq: OT} always exists \cite{Villani2009Optimal,villani2021topics} and when $c(x,y)=\|x-y\|^p$, for $p\ge 1$, %and $\mu,\nu\in \mathcal{P}_p(\Omega)$ 
it defines a metric on $\mathcal{P}(\Omega)$, which is referred to as the ``$p$-Wasserstein distance'':
\begin{align}
W_p^p(\mu,\nu):=\min_{\gamma\in\Gamma(\mu,\nu)}\int_{\Omega^2}\|x-y\|^pd\gamma(x,y). \label{eq: Wp}
\end{align}
% The above formulation \eqref{eq: OT} (or \eqref{eq: Wp}) requires equal mass assumption, which limits its applications to many practical scenarios. To address this limitation, a relaxation named \textbf{partial optimal transport} (or \textbf{optimal partial transport}) 
\textbf{The Partial Optimal Transport (POT) problem} \cite{chizat2018unbalanced, figalli2010new,piccoli2014generalized} extends the OT problem to the set of Radon measures $\mathcal{M}_+(\Omega)$, i.e., non-negative and finite measures. For $\lambda>0$ and $\mu,\nu\in \mathcal{M}_+(\Omega)$, the POT problem is defined as:
\begin{equation}
%\text{POT}(\mu,\nu;\lambda):=&\inf_{\gamma\in\mathcal{M}_+(\Omega^2)}\int_{\Omega^2}c(x,y)d\gamma(x,y)\nonumber\\ 
%&\qquad \qquad+\lambda(|\mu-\gamma_1|+|\nu-\gamma_2|) 
POT(\mu,\nu;\lambda):=\inf_{\gamma\in\mathcal{M}_+(\Omega^2)}\gamma(c)+\lambda(|\mu-\gamma_1|+|\nu-\gamma_2|),
\label{eq: opt}
\end{equation}
where, in general, $|\sigma|$ denotes the total variation norm of a measure $\sigma$, i.e., $|\sigma|:=\sigma(\Omega)$.  
The constraint $\gamma\in \mathcal{M}_+(\Omega^2)$ in \eqref{eq: opt} can be further restricted to $\gamma \in \Gamma_\leq(\mu,\nu)$: 
$$\Gamma_\leq(\mu,\nu):=\{\gamma\in \mathcal{M}_+(\Omega^2): \, \gamma_1\leq \mu,\gamma_2\leq \nu\},$$ denoting $\gamma_1\leq \mu$ if for any Borel set $B\subseteq\Omega$, $\gamma_1(B)\leq \mu(B)$ (respectively, for $\gamma_2\leq \nu$) \cite{figalli2010optimal}. Roughly speaking, the linear penalization indicates that if the classical transportation cost exceeds 
$2\lambda$, it is better to create/destroy' mass (see \cite{bai2022sliced} for further details).  

\noindent\textbf{The relationship between POT and OT.}
By using the techniques in \cite{caffarelli2010free}, the POT problem can be transferred into an OT problem, and thus, OT solvers (e.g., network simplex) can be employed to solve the POT problem. %In short, given the POT problem \eqref{eq: opt},  

\begin{proposition}\label{pro: opt and ot}\cite{caffarelli2010free,bai2022sliced} Given $\mu,\nu\in\mathcal{M}_+(\Omega)$, construct the following measures on $\hat\Omega:=\Omega\cup \{\hat\infty\}$,  for an auxiliary point $\hat\infty$:
\begin{equation}\label{eq: mu_hat}
\hat\mu=\mu+|\nu|\delta_{\hat\infty}   \quad \text{ and } \quad
\hat\nu=\nu+|\mu|\delta_{\hat\infty} .      
\end{equation}
% Define the cost 
% \begin{equation}\label{eq: cost_hat}
%  \hat{c}(x,y)
%  :=\begin{cases}
%      c(x,y)-2\lambda &\text{ if }x,y\in\Omega,\\ 
%      0  &\text{ elsewhere},
% \end{cases} 
% \end{equation}
Consider the following OT problem 
\begin{equation}\label{eq: OT variant}
    \text{OT}(\hat\mu,\hat\nu)=\min_{\hat\gamma\in\Gamma(\hat\mu,\hat\nu)}\hat{\gamma}(\hat{c}),%=\min_{\hat\gamma\in\Gamma(\hat\mu,\hat\nu)}\int_{\hat{\Omega}^2}\hat{c}(x,y) \, d\hat\gamma(x,y). 
    \qquad \text{where}  \quad \hat{c}(x,y)
 :=\begin{cases}
     c(x,y)-2\lambda &\text{ if }x,y\in\Omega,\\ 
     0  &\text{ elsewhere}.
\end{cases} 
\end{equation}
Then, there exists a bijection $F: \Gamma_\leq(\mu,\nu)\to\Gamma(\hat\mu,\hat\nu)$ 
given by 
\begin{equation}\label{eq: ot and opt plan}
F(\gamma):=%\hat\gamma:=
\gamma+(\mu-\gamma_1)\otimes\delta_{\hat\infty}+\delta_{\hat\infty}\otimes (\nu-\gamma_2)+|\gamma|\delta_{\hat\infty,\hat\infty}.
\end{equation}
%be such that maps $\gamma \mapsto \hat\gamma$, where
% \begin{equation}\label{eq: ot and opt plan}
% \hat\gamma:=\gamma+(\mu-\gamma_1)\otimes\delta_{\hat\infty}+\delta_{\hat\infty}\otimes (\nu-\gamma_2)+|\gamma|\delta_{\hat\infty,\hat\infty}.
% \end{equation}
such that $\gamma$ is optimal for the POT problem \eqref{eq: opt} if and only if $F(\gamma)$ is optimal for the OT problem \eqref{eq: OT variant}.   
\end{proposition}

% In practice, when using classical OT (or POT) relies on the requirement that $\mu,\nu$ are defined in the same space. This is because defining a reasonable distance between two spaces is difficult in general. To relax this requirement, in the next section, we will review the fundamentals of the \textit{Gromov-Wasserstein} problem \cite{memoli2011gromov} and its unbalanced version. 

%Finally, 
It is worth noting that instead of considering the same underlying space $\Omega$ for both measures $\mu$ and $\nu$, the OT and POT problems can be formulated in the scenario where $\mu$ and $\nu$ are defined on different metric spaces
$X$ and $Y$, respectively. In this setting, one needs a cost function $c:X\times Y\to\mathbb{R}_+$ to formulate the OT and POT problems. However, in practice, it is usually difficult to define reasonable `distance' or \textit{ground cost} $c(\cdot,\cdot)$ between the two spaces $X$ and $Y$. In particular, the $p$-Wasserstein distance cannot be adopted if $\mu,\nu$ are defined on different spaces.  
To relax this requirement, in the next section, we will review the fundamentals of the \textit{Gromov-Wasserstein} problem \cite{memoli2011gromov}.%, %which relies on intra-domain distances and is invariant under rigid transformations (rotations and translations), 
%together with its unbalanced version.
\vspace{-0.75em}
\subsection{The Gromov-Wasserstein (GW) Problem}\label{subsec: GW}
\vspace{-0.5em}
A metric measure space (mm-space) consists of a set $X$ endowed with a metric structure, that is, a notion of distance $d_X$ between its elements, and equipped with a Borel measure $\mu$. 
%A set $X$ together with a notion of distance $d_X$ between its elements and equipped with a Borel measure $\mu$ is called a metric measure space (mm-space). 
As in \citet[Ch. 5]{memoli2011gromov}, we will assume that $X$ is compact and that  $\operatorname{supp}(\mu)=X$.
Given two probability mm-spaces $\mathbb{X}=(X,d_X,\mu)$, $\mathbb{Y}=(Y,d_Y,\nu)$, with $\mu\in \mathcal{P}(X)$ and $\nu\in\mathcal{P}(Y)$, and a non-negative lower semi-continuous cost function  $L: \mathbb{R}^2\to \mathbb{R}_+$  (e.g., the Euclidean distance or the KL-loss),  the Gromov-Wasserstein (GW) matching problem is defined as:
\begin{equation}
GW^L(\mathbb{X},\mathbb{Y}):=\inf_{\gamma\in \Gamma(\mu,\nu)}\gamma^{\otimes 2}(L(d_X(\cdot,\cdot),d_Y(\cdot,\cdot))), %=\inf_{\gamma\in \Gamma(\mu,\nu)}\int_{(X\times Y)^2}L(d_X(x,x'),d_Y(y,y')) 
%\, d\gamma(x,y)d\gamma(x',y'),
\label{eq:gw}
\end{equation}
 where, for brevity, we employ the notation $\gamma^{\otimes 2}$ for the product measure $d\gamma^{\otimes 2}((x,y),(x',y'))=d\gamma(x,y)d\gamma(x',y')$. %$1\leq p,q<\infty$, and 
%$d_X^q(\cdot,\cdot)=d_X^q(\cdot,\cdot)$, $d_Y^q(\cdot,\cdot)=d_Y^1(\cdot,\cdot)$ is defined similarly. 
If 
$L(a,b)=|a-b|^p$, 
%$L(\cdot,\cdot)=D^p(\cdot,\cdot)$, where 
for $1\leq p< \infty$, %and $D(\cdot,\cdot)$ is a metric in $\mathbb{R}$, we use the notation $GW_q^p(\mathbb{X},\mathbb{Y})$ for \eqref{eq:gw}. 
we denote $GW^L(\cdot,\cdot)$ simply by $GW^p(\cdot,\cdot)$.
In this case, the expression \eqref{eq:gw} defines an equivalence relation $\sim$ among probability mm-spaces, i.e.,  $\mathbb{X}\sim\mathbb{Y}$ if and only if  $GW^p(\mathbb{X},\mathbb{Y})=0$\footnote{Moreover, given two probability mm-spaces $\mathbb{X}$ and $\mathbb{Y}$, $GW(\mathbb{X},\mathbb{Y})=0$ if and only if there exists a bijective isometry $\phi:X\to Y$ such that $\phi_\#\mu=\nu$. 
In particular, the GW distance is invariant under rigid transformations (translations and rotations) of a given probability mm-space.}. A minimizer of the GW problem \eqref{eq:gw} always exists, and thus, we can replace $\inf$ by $\min$. Moreover,  similar to OT, the above GW problem defines a distance for probability mm-spaces after taking the quotient under $\sim$. For details, we refer to \citet[Ch. 5 and 10]{memoli2011gromov}. 
% Moreover, let us define a subset $\mathcal{G}_q^1$ of mm-spaces 
% \begin{align}
%   \mathcal{G}_q:=\{&\mathbb{X}=(X,d_X,\mu):  \,  \int_X d_X^q(x,x_0)d\mu(x)<\infty,  \emptyset \not= X\subseteq\mathbb{R}^d 
%   \text{ for some dimension }d\}. \label{eq:G_q} 
% \end{align}
% Note that dimension $d$ is not fixed. In $\mathcal{G}_q$, we define relation $\sim$ as follows:

% When $L=|r_1-r_2|^p$, given $\mathbb{X}=(X,d_X,\mu), \mathbb{Y}=(Y,d_Y,\nu) \in\mathcal{G}_q$, we use $GW^p(\cdot,\cdot)$ to denote the above GW problem $GW^L(\cdot,\cdot)$. Then $\mathbb{X}\sim \mathbb{Y}$ 
% if $GW(\mathbb{X},\mathbb{Y})=0$.  The relation $\sim$ is an equivalence relation. Furthermore,  $(GW_q^p(\cdot,\cdot))^{1/p}$ defines a metric in the quotient space $\mathcal{G}_q/\sim$. 

% Also, in general, given a measure $\sigma$, we denote
% $\sigma(\varphi):=\int \varphi(z) \ d\sigma(z).$
% Thus, \eqref{eq:gw} can be rewritten as $$GW(\mathbb{X},\mathbb{Y}):=\inf_{\gamma\in \Gamma(\mu,\nu)}\gamma^{\otimes 2}(L(d_X(\cdot,\cdot),d_Y(\cdot,\cdot))).$$

% In this article, for the readers' convenience, unless we specifically point out otherwise, we default to $q=2, L(r_1,r_2)=|r_1-r_2|^p,p=2$. In addition, for brevity, we use the product measure $\gamma^{\otimes 2}$ to denote $d\gamma^{\otimes 2}((x,y),(x',y'))=d\gamma(x,y)d\gamma(x,' y').$
% Furthermore, we employ
% $$\gamma^{\otimes2}(L(d_X^q,d_Y^q))$$ to denote the above integration of $L(d_X^q(x,x'),d_Y^q(y,y'))$ with respect to measure $\gamma^{\otimes2}$ in \eqref{eq:gw}. 
\vspace{-0.75em}
\section{The Partial Gromov-Wasserstein (PGW) Problem}\label{sec: pgw}
\vspace{-0.5em}
The Unbalanced Gromov-Wasserstein (UGW) problem for general (compact)
 mm-spaces $\mathbb{X}=(X,d_X,\mu),\mathbb{Y}=(Y,d_Y,\nu)$, with $\mu\in\mathcal{M}_+(X),\nu\in\mathcal{M}_+(Y)$, studied in \cite{sejourne2021unbalanced,kong2024outlier} is defined as:
\begin{align}
UGW^L_\lambda(\mathbb{X},\mathbb{Y}):=&\inf_{\gamma\in \mathcal{M}_+(X\times Y)}\gamma^{\otimes2}(L(d_X,d_Y))
+\lambda(D_\phi(\gamma_1^{\otimes2}
\parallel\mu^{\otimes2})+D_\phi(\gamma_2^{\otimes2}\parallel\nu^{\otimes2})),\label{eq:ugw}
\end{align}
where $\lambda>0$ is a fixed linear penalization parameter, and $D_\phi$ is a  Csiszár or $\phi$-divergence. %As before, in this paper, the sets $X$ and $Y$ are assumed compact. 
The above formulation extends the classical GW problem \eqref{eq:gw} into the unbalanced setting ($\mu$ and $\nu$ are no longer necessarily probability measures but general Radon measures). 

We underline two points: First, as discussed in \cite{sejourne2021unbalanced}, while the above quantity allows us to `compare' the mm-spaces $\mathbb{X}$ and $\mathbb{Y}$, its \textit{metric} property is unclear. Secondly, when $D_\phi$ is the KL divergence, a Sinkhorn solver has been proposed in \cite{sejourne2021unbalanced}. However, a solver for general $\phi$-divergences has not yet been proposed.

In this paper, we will analyze the case when $D_{\phi}$ is the total variation norm. Specifically, for $q\geq 1$, we consider the following problem, which we refer to as the \textit{Partial Gromov-Wasserstein} (PGW) problem:
\begin{equation}
PGW_{\lambda,q}^L(\mathbb{X},\mathbb{Y}):=\inf_{\gamma \in \mathcal{M}_+(X\times Y)}\gamma^{\otimes2}(L(d_X^q,d_Y^q))+\lambda(|\mu^{\otimes2}-\gamma_1^{\otimes2}|+|\nu^{\otimes2}-\gamma_2^{\otimes2}|).\label{eq:pgw_ori}%
\end{equation}

\begin{remark}\label{remark: rewrite pGW}
   If  $\gamma\in\Gamma\leq(\mu,\nu)$, the above cost functional can be rewritten as
\begin{align}
\gamma^{\otimes2}(L(d_X^q,d_Y^q))+\lambda(|\mu^{\otimes2}-\gamma_1^{\otimes2}|+|\nu^{\otimes2}-\gamma_2^{\otimes2}|)=\gamma^{\otimes2}\left(L(d_X^q,d_Y^q)- 2\lambda\right) +\underbrace{\lambda\left(|\mu|^2+|\nu|^2\right)}_{\text{does not depend on $\gamma$}}.\nonumber
\end{align}
\end{remark}

%The above formulation can be simplified by the following proposition: 
\begin{proposition}\label{pro: partial GW version2}
Given mm-spaces $\mathbb{X}=(X,d_X,\mu), \mathbb{Y}=(Y,d_Y,\nu)$, the minimization problem \eqref{eq:pgw_ori} can be restricted to the set
$\Gamma_{\leq}(\mu,\nu)=\{\gamma\in \mathcal{M}_+(X\times Y): \,  \gamma_1\leq \mu,\gamma_2\leq \nu\}$. That is,
\begin{equation}
PGW_{\lambda,q}^L(\mathbb{X},\mathbb{Y})=\inf_{\gamma \in \Gamma_{\leq}(\mu,\nu)} \gamma^{\otimes2}\left(L(d_X^q,d_Y^q)-2\lambda\right)+\lambda(|\mu|^2+|\nu|^2).\label{eq:pgw} 
\end{equation}
\end{proposition}
For the proof inspired by \cite{piccoli2014generalized}, we direct the reader to Appendix \ref{sec: pro: partial GW version2}. 

We notice that a similar Partial Gromov-Wasserstein problem (and its solver) has been studied \cite{chapel2020partial}. Indeed, in \cite{chapel2020partial}, the $\lambda$-penalization in the optimization problem \eqref{eq:pgw} is avoided, but the constraint set is replaced by the subset of all $\gamma\in\Gamma_{\leq}(\mu,\nu)$ such that $|\gamma|= \rho$ for a fixed $\rho\in[0,\min\{|\mu|,|\nu|\}]$. We will call this formulation the \textit{Mass-Constrained Partial Gromov-Wasserstein} (MPGW) problem. In Appendix \ref{sec: related work chapel}, we explore the relations between PGW and MPGW, and in Section \ref{sec: experiments} and Appendices \ref{sec:shape_retrieval_2}, \ref{sec:wall_time}, %\ref{sec:}, 
we analyze the performance of the different solvers through different experiments.

\begin{proposition}\label{pro: pgw minimizer}
%Similar to the GW problem, 
%If $X,Y$ are compact sets, $q\geq 1$, and 
If $L(r_1,r_2)=|r_1-r_2|^p$, for $p\in[1,\infty)$, 
we use $PGW^p_{\lambda,q}$ to denote $PGW^L_{\lambda,q}$. In this case, 
\eqref{eq:pgw_ori} and \eqref{eq:pgw} admit a minimizer.  
\end{proposition}
The proof is given in Appendix \ref{sec: pro: pgw minimizer}: Its idea extends results from \cite{memoli2011gromov} from probability mm-spaces to arbitrary mm-spaces.

Next, we state one of our main results: The PGW problem gives rise to a metric between mm-spaces. The rigorous statement, as well as its proof, is given in Appendix \ref{sec: metric property}: The formal statement is based on the definition of \textit{equivalence classes} among mm-spaces (see Remark \ref{remark: equivalence class}). The most difficult part of the proof is the triangle inequality, and the main technique used relies on the relation between PGW and GW (see Appendix \ref{app: main technique of main proof}).

\begin{proposition}\label{pro: pgw is metric}
Let $\lambda>0$, $1\leq q,p<\infty$ and $L(r_1,r_2)=|r_1-r_2|^p$. Then $(PGW^p_{\lambda,q}(\cdot,\cdot))^{1/p}$ defines a  metric between mm-spaces. 
\end{proposition}

Finally, for consistency, we provide the following result when the penalization tends to infinity. Its proof is given in Appendix \ref{sec: limit lambda}.

\begin{proposition}\label{prop: limit for lambda}
Consider probability mm-spaces $\mathbb{X}=(X,d_X,\mu)$, $\mathbb{Y}=(Y,d_Y,\nu)$, that is,  $|\mu|=|\nu|=1$. Assume that $L$ is a continuous function.
Then 
$\lim_{\lambda\to \infty}PGW_{\lambda,q}^L(\mathbb{X},\mathbb{Y})=GW^L(\mathbb{X},\mathbb{Y}).$
\end{proposition}

\vspace{-1em}
\section{Computation of the Partial GW Distance}\label{sec: computation}
\vspace{-0.5em}
In the discrete setting, consider mm-spaces $\mathbb{X}=(X,d_X,\sum_{i=1}^np_{i}^X\delta_{x_i})$, $\mathbb{Y}=(Y,d_Y,\sum_{j=1}^mq_j^Y\delta_{y_j})$, where  $X=\{x_1,\dots,x_n\}$, $Y=\{y_1,\dots, y_m\}$, the weights $p_i^X$, $q_j^Y$ are non-negative numbers, and the distances $d_X$, $d_Y$ are determined by the matrices $C^X\in\mathbb{R}^{n\times n}$, $C^{Y}\in\mathbb{R}^{m\times m}$ defined by 
\begin{equation}\label{eq: C^X}
  C^{X}_{i,i'}:=d_X^q(x_i,x_{i'}) \quad  \forall i,i'\in[1:n]\quad \text{and} \quad C^{Y}_{j,j'}:=d_Y^q(y_j,y_{j'})\quad  \forall j,j'\in[1:m].  
\end{equation}
Let $\mathrm{p}:=[q_1^X,\dots,q_n^X]^\top$ and $\mathrm{q}:=[q_1^Y,\dots,q_m^Y]^\top$ denote the weight vectors corresponding to the given discrete measures.
We view the sets of transportation plans $\Gamma(\mathrm{p},\mathrm{q})$ and $\Gamma_{\leq}(\mathrm{p},\mathrm{q})$ for the GW and PGW problems, respectively, as the subsets of $n\times m$ matrices
\begin{equation}\label{eq: discrete plans}
  \Gamma(\mathrm{p},\mathrm{q}):=\{\gamma\in\mathbb{R}_+^{n\times m}:\,  \gamma 1_m= \mathrm{p}, \gamma^\top1_n= \mathrm{q} \},  \quad \text{if }  |\mathrm{p}|=\sum_{i=1}^n{p_i^X}=1=\sum_{j=1}^m{q_j^Y}=|\mathrm{q}|; %\, \text{ and}
\end{equation}
 % if $|\mathrm{p}|=\sum_{i=1}^n{p_i^X}=1=\sum_{j=1}^m{p_j^Y}=|\mathrm{q}|$; and
\begin{equation}\label{eq: partial discrete plans}
  \Gamma_\leq(\mathrm{p},\mathrm{q}):=\{\gamma\in\mathbb{R}_+^{n\times m}:\,  \gamma 1_m\leq \mathrm{p}, \gamma^\top1_n\leq \mathrm{q} \},   
 \end{equation}
for any pair of non-negative vectors $\mathrm{p}\in \mathbb{R}^n_+$, $\mathrm{q}\in\mathbb{R}^m_+$,
where $1_n$ is the vector with all ones in $\mathbb{R}^n$ (resp. $1_m$), and  $\gamma 1_m\leq \mathrm{p}$ means that component-wise the $\leq$ relation holds.

Given by a non-negative function $L:\mathbb{R}^{n\times n}\times\mathbb{R}^{m\times m}\to\mathbb{R}_+$, the transportation cost $M$ and the `partial' transportation cost $\tilde{M}$ are represented by the $n\times m\times n\times m$ tensors:  
\begin{equation}\label{eq: tensor M}
M_{i,j,i',j'}=L(C_{i,i'}^{X},C_{j,j'}^{Y}) \qquad \text{ and } \qquad \tilde{M}:=M-2\lambda:=M-2\lambda 1_{n,m,n,m},
\end{equation}
% Let \begin{equation}\label{eq: tilde M}
%   \tilde{M}:=M-2\lambda:=M-2\lambda 1_{n,m,n,m},
% \end{equation}
where $1_{n,m,n,m}$ is the tensor with ones in all its entries.
For each $n\times m\times n\times m$ tensor $M$ and each $n\times m$ matrix $\gamma$, we define tensor-matrix multiplication
$M\circ \gamma\in \mathbb{R}^{n\times m}$
 by
 $$(M\circ \gamma)_{ij}=\sum_{i',j'}(M_{i,j,i',j'})\gamma_{i',j'}.$$
Then, the Partial GW problem in  \eqref{eq:pgw} can be written as 
\begin{equation}
PGW_{\lambda}^L(\mathbb{X},\mathbb{Y})=\min_{\gamma\in\Gamma_\leq(\mathrm{p},\mathrm{q})}\mathcal{L}_{\tilde{M}}(\gamma)+\lambda(|\mathrm{p}|^2+|\mathrm{q}|^2), \quad \text{ where}\label{eq:pgw_discrete}
\end{equation}    
%where 
\begin{equation}
\mathcal{L}_{\tilde{M}}(\gamma):=\tilde M \gamma^{\otimes2}:=\sum_{i,j,i',j'}\tilde{M}_{i,j,i',j'}\gamma_{i,j}\gamma_{i',j'}=\sum_{ij}(\tilde{M}\circ\gamma)_{ij}\gamma_{ij}  =:\langle \tilde{M}\circ\gamma,\gamma\rangle_F, \label{eq:L_tilde}
\end{equation}
and $\langle\cdot,\cdot\rangle_F$ stands for the Frobenius dot product. The constant term $\lambda(|\mathrm{p}|^2+|\mathrm{q}|^2)$ will be ignored in the rest of this paper since it does not depend on $\gamma$. 
\vspace{-0.75em}
\subsection{Frank-Wolfe for the PGW Problem -- Solver 1}\label{subsec: alg 1}
\vspace{-0.5em}
In this section, we discuss the Frank-Wolfe (FW) algorithm for the PGW problem \eqref{eq:pgw_discrete}. A second variant of the FW solver is provided in the Appendix \ref{sec:solver2}. 

As a summary, in our proposed method, we address the discrete PGW problem \eqref{eq:pgw_discrete}, highlighting that the \textit{direction-finding subproblem} in the Frank-Wolfe (FW) algorithm is a POT problem for \eqref{eq:pgw_discrete}. %and an OT problem for \eqref{eq: variant gw discrete}. 
Specifically, \eqref{eq:pgw_discrete} is treated as a discrete POT problem in our Solver 1, where we apply Proposition \ref{pro: opt and ot} to solve a discrete OT problem. 

For each iteration $k$, the procedure is summarized in three steps detailed below.

The convergence analysis, detailed in Appendix \ref{sec: convergence}, applies the results from \cite{lacoste2016convergence} to our context, showing that the FW algorithm achieves a stationary point at a rate of $\mathcal{O}(1/\sqrt{k})$ for non-convex objectives with a Lipschitz continuous gradient in a convex and compact domain.

\vspace{-0.75em}
\subsection*{Step 1. Computation of gradient and optimal direction.}
\vspace{-0.5em}

It is straightforward to verify that 
the gradient of the objective function \eqref{eq:L_tilde} in \eqref{eq:pgw_discrete} is given by  
\begin{align}
\nabla\mathcal{L}_{\tilde{M}}(\gamma)&=2\tilde{M}\circ \gamma \label{eq: gradient of L}.
\end{align} 
The classical method to compute $M\circ \gamma$ is the following: 
First, convert $M$ into an $(n\times m)\times (n\times m)$ matrix, denoted as $v(M)$, and convert $\gamma$ into an $(n\times m)\times 1$ vector $v(\gamma)$. 
Then, the computation of $M\circ \gamma$ is equivalent to the matrix multiplication $v(M)v(\gamma)$.  
The computational cost and the required storage space are  $\mathcal{O}(n^2m^2)$. In certain conditions, the above computation can be reduced to $\mathcal{O}(n^2+m^2)$. We refer to Appendices \ref{sec: tensor prod} and \ref{sec:gradient} for details. 

Next, we aim to solve the following problem: 
\begin{equation*}    
\gamma^{(k)'}\gets\arg\min_{\gamma\in \Gamma_\leq(\mathrm{p},\mathrm{q})}\langle  \nabla \mathcal{L}_{\tilde M}(\gamma^{(k)}),\gamma\rangle_F,\label{eq: tilde gamma k version 1}  
\end{equation*}
which is a discrete POT problem since it is equivalent to
$$\min_{\gamma\in\Gamma_\leq(\mathrm{p},\mathrm{q})}\langle 2M \circ \gamma^{(k)},\gamma\rangle_F+\lambda |\gamma^{(k)}|(|\mathrm{p}|+|\mathrm{q}|-2|\gamma|).$$
The solver can be obtained by firstly converting the POT problem into an OT problem via Proposition \ref{pro: opt and ot} and then solving the proposed OT problem. 

\vspace{-0.75em}
\subsection*{Step 2: Line search method.} 
\vspace{-0.5em}

In this step, at the $k$-th iteration, we need to determine the optimal step size:
$$\alpha^{(k)}=\arg\min_{\alpha\in[0,1]}\{\mathcal{L}_{\tilde{M}}((1-\alpha)\gamma^{(k)}+\alpha \gamma^{(k)'})\}.$$
The optimal  $\alpha^{(k)}$ takes the following values (see Appendix \ref{sec:line_search_v1} for details): 
\begin{equation}
\text{Let } \alpha^{(k)} =\begin{cases}
0 &\text{if }a\leq 0, a+b>0,\\
1 &\text{if }a\leq 0,a+b\leq 0,\\
\operatorname{clip}(\frac{-b}{2a},[0,1]) &\text{if }a>0,
\end{cases}
\text{ where }
\begin{cases}
&\delta \gamma^{(k)}=\gamma^{(k)'}-\gamma^{(k)}, \\
&a=\langle \tilde{M}\circ \delta\gamma^{(k)},\delta\gamma^{(k)} \rangle_F \\
&b=2\langle \tilde{M}\circ  \gamma^{(k)},\delta\gamma^{(k)}\rangle_F.    
\end{cases}, \label{eq:a,b,v1}
\end{equation}
and
$\operatorname{clip}(\frac{-b}{2a},[0,1])=\min\{\max \{-\frac{b}{2a},0\},1 \}$.

\vspace{-0.75em}
\subsection*{Step 3: Update $\gamma^{(k+1)}\gets (1-\alpha^{(k)})\gamma^{(k)}+\alpha^{(k)}\gamma^{(k)'}$.} 

\begin{algorithm}[bt]
   \caption{Frank-Wolfe Algorithm for PGW, ver 1}
   \label{alg:pgw_v1}
\begin{algorithmic}
   \STATE {\bfseries Input:}
   $\mu=\sum_{i=1}^np_i^X\delta_{x_i}, \nu=\sum_{j=1}^mq_j^Y\delta_{y_j},\gamma^{(1)}$
   %\REPEAT
   \STATE {\bfseries Output:}
   $\gamma^{(final)}$
   \STATE Compute $C^X,C^Y$\\ 
   \FOR{$k=1,2,\ldots$}
   \STATE 
$G^{(k)}\gets 2\tilde{M}\circ \gamma^{(k)}$ // Compute gradient 
   \STATE $\gamma^{(k)'}\gets \arg\min_{\gamma\in \Gamma_\leq(\mathrm{p},\mathrm{q})}\langle G^{(k)}, \gamma\rangle_F$ // Solve the POT problem.  \\ 
   \STATE Compute $\alpha^{(k)}\in[0,1]$ via  \eqref{eq:a,b,v1} //  Line Search
   \STATE $\gamma^{(k+1)}\gets (1-\alpha^{(k)})\gamma^{(k)}+\alpha^{(k)} \gamma^{(k)'}$//  Update $\gamma$  
   \STATE if convergence, break
   \ENDFOR
\STATE $\gamma^{(final)}\gets \gamma^{(k)}$
   %\UNTIL{$noChange$ is $true$}
\end{algorithmic}
\end{algorithm}
\vspace{-0.75em}
\subsection{Numerical Implementation Details}
\vspace{-0.5em}
\paragraph{The initial guess, $\gamma^{(1)}$.} In the GW problem, if there is no prior knowledge, the initial guess is set to $\gamma^{(1)} = \mathrm{p}\,\mathrm{q}^\top$.

In PGW, however, as $\mu,\nu$ may not necessarily be probability measures (i.e., $\sum_i p_i^X, \sum_j q_j^Y \neq 1$ in general), we set
$\gamma^{(1)}=\frac{\mathrm{p}\mathrm{q}^\top}{\max(|\mathrm{p}|,|\mathrm{q}|)}.$
It is straightforward to verify that $\gamma^{(1)}\in \Gamma_\leq(\mathrm{p},\mathrm{q})$ as
\begin{align}
 \gamma^{(1)}1_m=\frac{|\mathrm{q}|\mathrm{p}}{\max(|\mathrm{p}|,|\mathrm{q}|)}\leq \mathrm{p},~~ 
 \gamma^{(1)\top}1_n=\frac{|\mathrm{p}|\mathrm{q}}{\max(|\mathrm{p}|,|\mathrm{q}|)}\leq \mathrm{q}.\nonumber
\end{align}
%\vspace{-0.5em}
\textbf{Column/Row-Reduction.}
According to the interpretation of the penalty weight parameter in the Partial OT problem (e.g. see Lemma 3.2 in \cite{bai2022sliced}), during the POT solving step, 
for each $i\in[1:n]$ (or $j\in[1:m]$), if the $i^{th}$ row ($j^{th}$ column) of $\tilde M\circ \gamma^{(k)}$ contains a non-negative entry, all the mass of $p_i^X$ ($q_j^Y$) will be destroyed (created). Thus, we can remove the corresponding row (column) to improve the computational efficiency.

\vspace{-1em}
\section{Experiments}\label{sec: experiments}
\vspace{-0.5em}

% In addition to the three experiments detailed here, we also perform a wall-clock time comparison of our proposed PGW solvers in Appendix \ref{sec:wall_time} and a positive-unlabeled (PU) experiment in Appendix \ref{sec:}.
% demonstrate the wall-clock time comparison and PU experiment of our PGW solvers in appendices \ref{sec:wall_time} and \ref{s}, respectively. 

\vspace{-0.75em}
\subsection{Toy Example: Shape Matching with Outliers}\label{sec:shape_mathcing}
\vspace{-0.5em}
We use the \href{https://scikit-learn.org/stable/modules/generated/sklearn.datasets.make_moons.html}{moon} dataset and synthetic 2D/3D spherical data in this experiment. 
Let $\{x_i\}_{i=1}^n,\{y_j\}_{j=1}^n$ denote the source and target point clouds. In addition, we add $\eta n$ (where $\eta=20\%$) outliers to the target point cloud. See Figure \ref{fig:2D_shape} for visualization. 

We visualize the transportation plans given by the GW \cite{memoli2011gromov}, MPGW \cite{chapel2020partial}, UGW \cite{sejourne2021unbalanced}, and our proposed PGW problems. For MPGW, UGW, and PGW, we set the mass to be 1 for each point in the source and target point clouds. For GW, we normalize the mass of these points so that the source and target have the same total mass. From Figure \ref{fig:2D_shape}, we observe that PGW and MPGW induce a one-by-one relation in both cases, and no outlier points are matched to the source point cloud. Meanwhile, GW matches all of the outliers. For UGW, as it applies the Sinkhorn algorithm, we observe mass-splitting transportation plans in both cases. Moreover, we observe that some mass from the outliers has been matched, which is not desired.

\begin{figure}[h!]
\centering
\includegraphics[width=1.0\textwidth]{pics/shape_match.png}
\caption{The set of red points comprises the source point cloud. The union of the dark blue (outliers) and light blue points comprises the target point cloud. For UGW, MPGW, and PGW, we set the mass for each point to be the same. For GW, we normalize the mass for the balanced mass constraint setting.}
\label{fig:2D_shape}
\end{figure}

\begin{figure}[t]
    \begin{subfigure}[t]{0.325\textwidth}
    \includegraphics[width=\columnwidth]{pics/bone_star_data.pdf}
        \vspace{-.2in}
    \end{subfigure}
    \begin{subfigure}[t]{0.675\textwidth}
        \centering
    \includegraphics[width=\columnwidth]{pics/bone_star_dists.pdf}
        \vspace{-.2in}
    \end{subfigure}
    
    \begin{subfigure}[t]{0.325\textwidth}
    \includegraphics[width=\columnwidth]{pics/rectangle_house_data.pdf}
        \vspace{-.2in}
    \end{subfigure}
    \begin{subfigure}[t]{0.675\textwidth}
        \centering
        \includegraphics[width=\columnwidth]{pics/rectangle_house_dist.pdf}
        \vspace{-.2in}
    \end{subfigure}
    \caption{In each row, the first figure visualizes an example shape from each class, and the second figure visualizes the resulting pairwise distance matrices. The first row corresponds to Dataset I, and the second corresponds to Dataset II.}
    \label{fig:shape_retrieval}
    \vspace{-1em}
\end{figure} 
%\vspace{-1em}

\vspace{-0.75em}
\subsection{Shape Retrieval}
\vspace{-0.5em}

\textbf{Experiment setup.} We now employ the PGW distance to distinguish between 2D shapes, as done in \cite{beier2022linear}, and use GW, MPGW, and UGW as baselines for comparison. Given a series of 2D shapes, we represent the shapes as mm-spaces $\mathbb{X}^i= (\mathbb{R}^2, \| \cdot \|_2, \mu^i)$, where $\mu^i = \sum_{k=1}^{n^i} \alpha^i \delta_{x^i_k}$. For the GW method, we normalize the mass for the balanced mass constraint setting (i.e., $\alpha^i = \frac{1}{n^i}$), and for the remaining methods, we let $\alpha^i =\alpha$ for all the shapes, where $\alpha>0$ is a fixed constant. In this manner, we compute the pairwise distances between the shapes.

We then use the computed distances for nearest neighbor classification. We do this by choosing a representative at random from each class in the dataset and then classifying each shape according to its nearest representative. This is repeated over 10,000 iterations, and we generate a confusion matrix for each distance used. Finally, using the approach given by \cite{beier2022linear, vay2019fgw}, we combine each distance with a support vector machine (SVM), applying stratified 10-fold cross-validation. In each iteration of cross-validation, we train an SVM using $\exp(-\sigma D)$ as the kernel, where $D$ is the matrix of pairwise distances (w.r.t. one of the considered distances) restricted to 9 folds, and compute the accuracy of the model on the remaining fold. We report the accuracy averaged over all 10 folds for each model.

\textbf{Dataset setup.} We test two datasets in this experiment, which we refer to as Dataset I and Dataset II. We construct Dataset I by adapting the 2D shape dataset given in \cite{beier2022linear}, consisting of 20 shapes in each of the classes: bone, goblet, star, and horseshoe. For each class, we augment the dataset with an additional class by selecting either a subset of points from each shape of that class (rectangle/bone, trapezoid/goblet, disk/star) or adding additional points to each shape of that class (annulus/horseshoe). Hence, the final dataset consists of 160 shapes across 8 total classes. This dataset is visualized in Figure \ref{fig:sr_data_1}.

For Dataset II, we generate 20 shapes for each of the classes: rectangle, house, arrow, double arrow, semicircle, and circle. These shapes were generated in pairs, such that each shape of class rectangle is a subset of the corresponding shape of class house, and similarly for arrow/double arrow and semicircle/circle. This dataset is visualized in Figure \ref{fig:sr_data_2}.

\begin{table}[t]
    \centering
    \begin{subtable}[t]{0.4\textwidth}
        \centering
        \begin{tabular}{lcc}
            \toprule
            Distance & Dataset I & Dataset II \\
            \midrule
            GW   & \textbf{0.9813} & 0.8083 \\
            MPGW & 0.2375 & 0.2500 \\
            UGW  & 0.89375 & 0.9000 \\
            PGW (ours)  & 0.9625 & \textbf{1.0000} \\
            \bottomrule
        \end{tabular}
        \caption{Mean accuracy of SVM using each distance in the kernel.}
        \label{tb:svm_accs}
    \end{subtable}
    ~\hspace{.45in}~
    \begin{subtable}[t]{0.4\textwidth}
        \begin{tabular}{lcc}
        \toprule
        Distance & Dataset I & Dataset II \\
        \midrule
        GW   & 49.02s & 137.12s \\
        MPGW & 49.10s & 93.90s \\
        UGW  & 1484.49s & 519.91s \\
        PGW (ours)  & \textbf{35.92s} & \textbf{79.27s}  \\
        \bottomrule
        \end{tabular}
        \caption{Wall-clock time comparison.}
        \label{tb:svm_time}
    \end{subtable}
\caption{Accuracy and wall-clock time comparison for shape retrieval experiment.}
\end{table}

\textbf{Performance analysis}. 
We refer to Appendix \ref{sec:shape_retrieval_2} for full numerical details, parameter settings, and the visualization of the resulting confusion matrices. We visualize the two considered datasets and the resulting pairwise distance matrices in Figure \ref{fig:shape_retrieval}. For the SVM experiments, GW achieves the highest accuracy on Dataset I, 98.13\%, while the second best method is PGW, 96.25\%. For Dataset II, PGW achieves the highest accuracy, correctly classifying 100\% of the samples. The complete set of accuracies for all considered distances on each dataset is reported in Table \ref{tb:svm_accs}.

In addition, we report the wall-clock time required to compute all pairwise distances for each distance in Table \ref{tb:svm_time}. We observe that GW, MPGW, and PGW have similar wall-clock times across both experiments (30-50 seconds for Dataset I, 80-140 seconds for Dataset II), with PGW admitting a slightly faster runtime in both cases. Meanwhile, UGW requires almost 1500 seconds on the experiment with Dataset I and over 500 seconds on the experiment with Dataset II.

\vspace{-0.75em}
\subsection{Partial Gromov-Wasserstein Barycenter and Shape Interpolation}
\vspace{-0.5em}

By \cite{peyre2016gromov}, Gromov-Wasserstein can be applied to interpolate two shapes via the concept of \textit{Gromov-Wasserstein Barycenters}. In this paper, we introduce \textit{Partial Gromov-Wasserstein Barycenters} by extending the GW Barycenter to the setting of PGW as follows.

Consider the discrete mm-spaces $\mathbb{X}^1,\ldots, \mathbb{X}^K$, where $\mathbb{X}^k=(X^k,\|\cdot\|_{\mathbb{R}^{d_k}},\sum_{i=1}^{n_k}p_i^k\delta_{x^k_i})$, with $X^k=\{x_i^k\}_{i=1}^{n_k}\subset \mathbb{R}^{d_k}$. We denote $C^k=[\|x^k_i-x^{k}_{i'}\|^2]_{i,i'}$ and $\mathrm{p}^k=[p_1^k,\dots,p^k_{n_k}]$. Given positive constants $\lambda_1,\ldots, \lambda_K>0$, the PGW Barycenter is defined by: 
\begin{align}
\min_{C,\gamma_k}\sum_k\xi_k\langle M(C,C^k)\circ \gamma^k,\gamma^k \rangle-2\lambda_k |\gamma^k|^2\label{eq:PGW_bpc_discrete0}
\end{align}
where each $\gamma^k\in \Gamma_\leq(\mathrm p,\mathrm p^k)$.
We refer to Appendix \ref{sec:barycenter} for the solver of \eqref{eq:PGW_bpc_discrete0} and details. 

\textbf{Experiment setup.} We apply the PGW barycenter to the following problem: 
Given two shapes $X=\{x_i\}_{i=1}^n\subset\mathbb{R}^{d_1}$ and  $Y=\{y_i\}_{i=1}^m\subset \mathbb{R}^{d_2}$,  modeled as mm-spaces $\mathbb{X}=(X,\|\cdot\|_{\mathbb{R}^{d_1}},\sum_{i=1}^n\delta_{x_i})$ and $\mathbb{Y}=(Y,\|\cdot\|_{\mathbb{R}^{d_2}},\sum_{i=1}^m\delta_{y_i})$, we wish to find interpolations between them. In addition, we assume $\mathbb{Y}$ is corrupted by noise, i.e., $\mathbb{Y}$ is redefined as 
$\mathbb{Y}=(\tilde Y,\|\cdot\|_{\mathbb{R}^{d_2}},\sum_{i=1}^m\delta_{y_i} +  \sum_{i=1}^{m\eta}\delta_{\tilde{y}_{i}})$ with $\tilde Y= Y\cup\{\tilde y_i\}_{i=1}^m$, 
where $\eta\in [0, 1]$ is the noise level and each $\tilde{y}_i$ is randomly selected from a particular region $\mathcal{R}\subset \mathbb{R}^{d_2}$.

\begin{figure}[h!]
    \centering
    \includegraphics[width=\textwidth]{pics/interporlation_sub.pdf}
    \caption{In the first column, the first and second figures are the source and target point clouds in the first experiment ($\eta=5\%$); the third and fourth figures are the source and target point clouds in the second experiment ($\eta=10\%$).}
    \label{fig:barycenter_sub}
    \vspace{-1em}
\end{figure}

\textbf{Dataset setup.} We adapt the dataset given in \cite{peyre2016gromov}. See Appendix \ref{sec:shape_interpolation_2} for further details on the dataset. In this experiment, we test $\eta=5\%, 10\%$. We visualize the barycenter interpolation from $t=0/7$ to $t=7/7$, where $(1-t),t$ are the weight of the source $\mathbb{X}$ and the target $\mathbb{Y}$, respectively, in the barycenter \eqref{eq:PGW_bpc_discrete0}. The visualization given in Figure \ref{fig:barycenter_sub} is obtained by applying SMACOF MDS (multidimensional scaling) of the minimizer $C$.

\textbf{Performance analysis}. From Figure \ref{fig:barycenter_sub}, we observe that in these two scenarios, the interpolation derived from GW is clearly disturbed by the noise data points. For example, in rows $1,3$, columns $t=1/7, 2/7, 3/7$, we see that the point clouds reconstructed by MDS have significantly different width-height ratios from those of the source and target point clouds. In contrast, PGW is significantly less disturbed, and the interpolation is more natural. The width-height ratio of the point clouds generated by the PGW barycenter is consistent with that of the source/target point clouds.

\vspace{1em}

\vspace{-1em}
\section{Summary}
\vspace{-0.75em}
In this paper, we propose the Partial Gromov-Wasserstein (PGW) problem and introduce two Frank-Wolfe solvers for it. As a byproduct, we provide pertinent theoretical results, including the relation between PGW and GW, the metric property of PGW, and the PGW barycenter formulation. Furthermore, we demonstrate the efficacy of the PGW solver in solving shape matching, shape retrieval, and shape interpolation tasks. For the shape retrieval experiment, we observe that due to their metric property, PGW and GW have similar accuracy and outperform the other methods evaluated. In the shape matching and point cloud interpolation experiments, we demonstrate that PGW admits a more robust result when the data are corrupted by outliers/noisy data.

\vspace{-0.75em}
\section*{Acknowledgments}
\vspace{-0.5em}
This research was partially supported by the NSF CAREER Award No. 2339898.

\clearpage
\bibliography{refs,iclr2025_conference}
\bibliographystyle{iclr2025_conference}
\newpage 
\appendix
% \onecolumn

\section{Notation and Abbreviations}
\begin{itemize}
\item OT: Optimal Transport.
\item POT: Partial Optimal Transport.
\item GW: Gromov-Wasserstein.
\item PGW: Partial Gromov-Wasserstein.
\item FW: Frank-Wolfe.
\item MPGW: Mass-Constrained Partial Gromov-Wasserstein.
%\item $S\subset \mathbb{R}^{d_1},X\subset\mathbb{R}^{d_2},Y\subset\mathbb{R}^{d_3}$: subsets in Euclidean spaces.
\item $\|\cdot\|$: Euclidean norm.
\item $X^2=X\times X$.
\item $\mathcal{M}_+(X)$: set of all positive (non-negative) Randon (finite) measures defined on $X$. 
\item $\mathcal{P}_2(X)$: set of all probability measures defined on $X$, whose second moment is finite. 
\item $\mathbb{R}_+$: set of all non-negative real numbers.
\item $\mathbb{R}^{n\times m}$: set of all $n\times m$ matrices with real coefficients.
\item $\mathbb{R}^{n\times m}_+$ (resp. $\mathbb{R}^{n}_+$): set of all $n\times m$ matrices (resp., $n$-vectors) with non-negative coefficients.
\item $\mathbb{R}^{n\times m\times n\times m}$: set of all $n\times m\times n\times m$ tensors with real coefficients.
\item $1_n, 1_{n\times m}, 1_{n\times m\times n\times m}$: vector, matrix, and tensor of all ones.
\item $\mathbbm{1}_{E}$: characteristic function of a measurable set $E$
\begin{equation*}
    \mathbbm{1}_E (z)= \begin{cases}
        1 & \text{ if } z\in E,\\
        0 & \text{ otherwise.}
    \end{cases}
\end{equation*}
\item $\mathbb{X},\mathbb{Y}$: metric measure spaces (mm-spaces): $\mathbb{X}=(X,d_X,\mu), \, \mathbb{Y}=(Y,d_Y,\nu).$
\item $C^X$: given a discrete mm-space $\mathbb{X}=(X,d_X,\mu)$, where $X=\{x_1,\dots,x_n\}$, the symmetric matrix $C^X\in\mathbb{R}^{n \times n}$ is defined as $C^{X}_{i,i'}=d_X^q(x_i,x_i')$.
    \item $\mu^{\otimes 2}$: product measure $\mu\otimes \mu$.

    \item $T_\# \sigma$: $T:X\to Y$ is a measurable function and $\sigma$ is a measure on $X$. $T_\#\sigma$ is the push-forward measure of $\sigma$, i.e., its is the measure on $Y$ such that for all Borel set $A\subset Y$, $T_\# \sigma(A)=\sigma(T^{-1}(A))$.

    \item $\gamma,\gamma_1,\gamma_2$: $\gamma$ is a joint measure defined in a product space  having $\gamma_1,\gamma_2$ as its first and second marginals, respectively. In the discrete setting, they are viewed as matrices and vectors, i.e., $\gamma\in\mathbb{R}_+^{n\times m}$, and $\gamma_1=\gamma 1_m\in\mathbb{R}_+^n$, $\gamma_2=\gamma^\top 1_n\in\mathbb{R}_+^m$.
\item $\pi_1:X\times Y\to X$, canonical projection mapping, with $(x,y)\mapsto x$.
Similarly, $\pi_2: X\times Y\to Y$ is canonical projection mapping, with $(x,y)\mapsto y$. 

\item $\pi_{1,2}: S\times X\times Y\to X\times Y$, canonical projection mapping, with $(s,x,y)\to (x,y)$.
Similarly, $\pi_{0,1}$ maps $(s,x,y)$ to $(s,x)$;
$\pi_{0,2}$ maps $(s,x,y)$ to $(s,y)$.

\item $\Gamma(\mu,\nu)$, where $\mu\in \mathcal{P}_2(X),\nu\in \mathcal{P}_2(Y)$ (where 
$X, Y$ may not necessarily be the same set): it is the set of all the couplings (transportation plans) between $\mu$ and $\nu$, i.e., 
$\Gamma(\mu,\nu):=\{\gamma\in \mathcal{P}_2(X\times Y): \, \gamma_1=\mu,\gamma_2=\nu\}.$
\item $\Gamma(\mathrm{p},\mathrm{q})$: set of all the couplings between the discrete probability measures $\mu=\sum_{i=1}^n p_i^X\delta_{x_i}$ and $\nu=\sum_{j=1}^m q_j^Y\delta_{y_j}$
with weight vectors 
\begin{equation}\label{eq: vectors p q}
 \mathrm{p}=[p_1^X,\dots,p_n^X]^\top \qquad \text{ and  } \qquad \mathrm{q}=[q_1^Y,\dots, q_m^Y]^\top.   
\end{equation}
That is, $\Gamma(\mathrm{p},\mathrm{q})$ coincides with $\Gamma(\mu,\nu)$, but it is viewed as a subset of $n\times m$ matrices defined in \eqref{eq: discrete plans}.
\item $p,q$: real numbers $1\leq p,q<\infty$. 
\item $\mathrm{p},\mathrm{q}$:
vectors of weights as in \eqref{eq: vectors p q}.
\item $\mathrm{p}=[p_1,\dots,p_n]\leq \mathrm{p}'=[p'_1,\dots,p'_n]$ if $p_j\leq p_j'$ for all $1\leq j\leq n$.
\item $|\mathrm{p}|=\sum_{i=1}^np_i$ for $\mathrm{p}=[p_1,\dots,p_n]$. 
\item $c(x,y): X\times Y\to \mathbb{R}_+$ denotes the cost function used for classical and partial optimal transport problems. %It is a %non-convex, 
lower-semi continuous function. 
\item $OT(\mu,\nu)$: it is the classical optimal transport (OT) problem between the probability measures $\mu$ and $\nu$ defined in \eqref{eq: OT}.
\item $W_p(\mu,\nu)$: it is the $p$-Wasserstein distance between the probability measures $\mu$ and $\nu$ defined in \eqref{eq: Wp}, for $1\leq p<\infty$.
\item $POT(\mu,\nu;\lambda)$: the Partial Optimal Transport (OPT) problem defined in \eqref{eq: opt}. 
\item $|\mu|$: total variation norm of the positive Randon (finite) measure $\mu$ defined on a measurable space $X$, i.e., $|\mu|=\mu(X)$.
\item $\mu\leq \sigma$: denotes that for all Borel set $B\subseteq X$ we have that the measures $\mu,\sigma\in\mathcal{M}_+(X)$ satisfy $\mu(B)\leq \sigma(B)$.
\item $\Gamma_{\leq}(\mu,\nu)$, where $\mu\in \mathcal{M}_+(X),\nu\in \mathcal{M}_+(Y)$:  set of all ``partial transportation plans'' 
$$\Gamma_{\leq}(\mu,\nu):=\{\gamma\in \mathcal{M}_+(X\times Y): \, \gamma_1\leq\mu,\gamma_2\leq\nu\}.$$
\item $\Gamma_{\leq}(\mathrm{p},\mathrm{q})$: set of all the ``partial transportation plans'' between the discrete probability measures $\mu=\sum_{i=1}^n p_i^X\delta_{x_i}$ and $\nu=\sum_{j=1}^m q_j^Y\delta_{y_j}$
with weight vectors $\mathrm{p}=[p_1^X,\dots,p_n^X]$ and $\mathrm{q}=[q_1^Y,\dots, q_m^Y]$. That is, $\Gamma_{\leq}(\mathrm{p},\mathrm{q})$ coincides with $\Gamma_{\leq}(\mu,\nu)$, but it is viewed as a subset of $n\times m$ matrices defined in \eqref{eq: partial discrete plans}.
\item $\lambda>0$: positive real number.
\item $\hat{\infty}$: auxiliary point.
\item $\hat X= X\cup \{\hat{\infty}\}$.
\item $\hat\mu, \hat\nu$: given in \eqref{eq: mu_hat}.
\item $\hat{\mathrm{p}},\hat{\mathrm{q}}$: given in \eqref{eq: hat p hat q}.
\item $\hat\gamma$: given in \eqref{eq: ot and opt plan}.
\item $\hat{c}(\cdot,\cdot):\hat X\times \hat Y\to \mathbb{R}_+$: cost as in \eqref{eq: OT variant}.
\item $L:\mathbb{R}\times\mathbb{R}\to\mathbb{R}$: cost function for the GW problems.
\item $D:\mathbb{R}\times\mathbb{R}\to\mathbb{R}$: generic distance on $\mathbb{R}$ used for GW problems. 
\item $GW^L(\cdot,\cdot)$: GW optimization problem given in \eqref{eq:gw}.
\item $GW^p(\cdot,\cdot)$: GW optimization problem given in \eqref{eq:gw} when $L(a,b)=|a-b|^p$.
\item $GW^L_q(\cdot,\cdot)$: general GW optimization problem for $g\geq 1$ given in \eqref{eq: prob G^L_q}.
\item $GW^p_q(\cdot,\cdot)$: general GW optimization problem for $q\geq 1$ and $L(a,b)=|a-b|^p$ given in \eqref{eq: prob G^p_q}.
\item $GW^p_{\lambda,q}(\cdot,\cdot)$: generalized GW problem given in \eqref{eq: GW general}.
\item $\widehat{GW}$: GW-variant  problem given in \eqref{eq: GW variant} for the general case, and in \eqref{eq: variant gw discrete} for the discrete setting.
\item $\hat L$: cost given in \eqref{eq:L_tilde} for the GW-variant problem.
\item $d:\hat X\times \hat X\to\mathbb{R}_+\cup\{\infty\}$: ``generalized'' metric given in \eqref{eq: d_hat} for $\hat X$.
%\item $\mathcal{G}_q$: subset of mm-spaces given in \eqref{eq:G_q}. 

\item $\mathbb{X}\sim\mathbb{Y}$:  equivalence relation in for mm-spaces,  $\mathbb{X}\sim\mathbb{Y}$ if and only if 
they have the same total mass and 
$GW_q^p(\mathbb{X},\mathbb{Y})=0$.
\item $PGW_{\lambda,q}^L(\cdot,\cdot)$: partial GW optimization problem given in \eqref{eq:pgw_ori} or, equivalently, in \eqref{eq:pgw}.
\item $PGW_{\lambda,q}^p(\cdot,\cdot)$:
 partial GW optimization problem given in \eqref{eq:pgw} when 
$L(a,b)=|a-b|^p$.
\item $PGW_\lambda(\cdot,\cdot)$: is is the PGW problem $PGW_{\lambda,q}^p(\cdot,\cdot)$ for the case when $p=2=q$.

\item $\mu(\phi)$: given a measure $\mu$ and a function $\phi$, 
$$\mu(\phi):=\int \phi(x)d\mu(x).$$

% \item $C(\gamma;\mu,\nu)$: the transportation cost induced by transportation plan $\gamma\in\Gamma(\mu,\nu)$ in the GW problem, 
% \begin{align}
%   C(\gamma;\mu,\nu)&:=\gamma^{\otimes2}(L(d_X^q,d_Y^q))  \nonumber\\
%   &=\int_{(X\times Y)^2}L(d_X^q(x,x'),d_Y^q(y,y')) \, d\gamma^{\otimes2}((x,y),(x',y')).
% \end{align}

\item $C(\gamma;\lambda,\mu,\nu)$: the transportation cost induced by transportation plan $\gamma\in\Gamma_\leq(\mu,\nu)$ in the Partial GW problem \ref{eq:pgw}, 
$$C(\gamma;\lambda,\mu,\nu):=\gamma^{\otimes2}(L(d_X^q,d_Y^q))+\lambda(|\mu|^2+|\nu|^2-2|\gamma|^2).$$
%where $\gamma\in\Gamma_\leq(\mu,\nu)$. 

% \item $C(\gamma;\mathbb{X},\mathbb{Y},\lambda)$: the transportation cost induced by transportation plan $\gamma$ in Partial-GW problem \ref{eq:pgw}. In particular, 
% $$C(\gamma;\mathbb{X},\mathbb{Y},\lambda):=\gamma^{\otimes2}(L(d_X^q,d_Y^q))+\lambda(|\mu|^2+|\nu|^2-2|\gamma|^2).$$
% where $\gamma\in\Gamma_\leq(\mu,\nu)$. 

%\item $PGW_{\lambda}^p(\cdot,\cdot)$: partial GW problem $PGW_{\lambda,q}^L(\cdot,\cdot)$  for which $L(r_1,r_2)=D(r_1,r_2)^p$ for a metric $D$ in $\mathbb{R}$, and $1\leq p<\infty$. Typically $D(r_1,r_2)=|r_1-r_2|$, and $p=2$. It is explicitly given in \eqref{eq: pgw for metric}.
%\item $PGW_\lambda(\cdot,\cdot)$: discrete partial GW problem given in \eqref{eq: pgw discrete}. 
\item $\mathcal{L}$: functional for the optimization problem $PGW_\lambda(\cdot,\cdot)$.
\item $M$, $\tilde M$, and  $\hat M$: see \eqref{eq: tensor M}, %\eqref{eq: tilde M}, 
and \eqref{eq: M_hat}. Notice that, $(M-2\lambda)_{i,i',j,j'}:=M_{i,i',j,j'}-2\lambda.$ %,\forall i,i'\in[1:n],j,j'\in[1:m].$ %, respectively.
% \item $\tilde M=M-2\lambda:=M  1_{n,n,m,m}-2\lambda 1_{n,n,m,m}$: an $n\times n\times m\times m$ tensor with 
% $$(M-2\lambda)_{i,i',j,j'}:=M_{i,i',j,j'}-2\lambda,\forall i,i'\in[1:n],j,j'\in[1:m].$$

\item $\langle\cdot, \cdot\rangle_F$: Frobenius inner product for matrices, i.e., $\langle A, B\rangle_F=\mathrm{trace}(A^\top B)=\sum_{i,j}^{n,m} A_{i,j}B_{i,j}$ for all $A,B\in\mathbb{R}^{n\times m}$. 
\item $M\circ \gamma$: product between the tensor $M$ and the matrix $\gamma$.
\item $\nabla$: gradient. %We might use $\nabla_x$ to indicate the gradient with respect to the variable $x$.
\item $[1:n]=\{1,\dots,n\}$.
\item $\alpha$: step size based on the line search method. 
\item $\gamma^{(1)}$: initialization of the algorithm.
\item $\gamma^{(k)}$, $\gamma^{(k)'}$: previous and new transportation plans before and after step 1 in the $k-$th iteration of version 1 of our proposed FW algorithm. 
\item $\hat{\gamma}^{(k)}$, $\hat{\gamma}^{(k)'}$: previous and new transportation plans before and after step 1 in the $k-$th iteration of version 2 of our proposed FW algorithm. 
\item $G=2\tilde M\circ \gamma$, $\hat G=2\hat M\circ \hat \gamma$: Gradient of the objective function in version 1 and version 2, respectively, of our proposed FW algorithm for solving the discrete version of partial GW problem.
\item $(\delta\gamma,a,b)$ and $(\delta\hat\gamma,a,b)$: given in \eqref{eq:a,b,v1} and \eqref{eq: a,b,v2} for versions 1 and 2 of the algorithm, respectively.
\item $C^1$-function: continuous and with continuous derivatives.
\item $MPGW_\rho(\cdot,\cdot)$: Mass-Constrained Partial Gromov-Wasserstein defined in \eqref{eq:mpgw}.
\item $\Gamma^\rho_\leq(\mu,\nu)$: set transportation plans defined in \eqref{eq:mpgw plans} for the  Mass-Constrained Partial Gromov-Wasserstein problem.
%\item $\Gamma_{PU,\pi}(\mathrm{p},\mathrm{q})$: defined in \eqref{eq:pot_init_constraint}.
\end{itemize}

% \section{Proof in Section \ref{sec: background}}
% \begin{proof}[Proof of Proposition \ref{pro: BP and GW}]
    
% \end{proof}

\section{Proof of Proposition \ref{pro: partial GW version2}}\label{sec: pro: partial GW version2}

The idea of the proof is inspired by the proof of Proposition 1 in \cite{piccoli2014generalized}. 

The goal is to verify that
\begin{align}
P&GW_{\lambda,q}^L(\mathbb{X},\mathbb{Y})\notag \\&:=
\inf_{\gamma \in \mathcal{M}_+(X,Y)}\underbrace{\int_{(X\times Y)^2}L(d_{X}^q(x,x'),d_{Y}^q(y,y'))
d\gamma^{\otimes 2} }_{\text{transport GW cost}}+\underbrace{\lambda\left(|\mu^{\otimes 2}-\gamma_1^{\otimes 2}|+|\nu^{\otimes 2}-\gamma_2^{\otimes2}|\right)}_{\text{mass penalty}}\notag\\
&={\inf_{\gamma \in \Gamma_{\leq}(\mu,\nu)}\int_{(X\times Y)^2}L(d_{X}^q(x,x'),d_{Y}^q(y,y'))
d\gamma^{\otimes 2}} +{\lambda\left(|\mu^{\otimes 2}-\gamma_1^{\otimes 2}|+|\nu^{\otimes 2}-\gamma_2^{\otimes2}|\right)}. \label{eq: equality gamma leq}
\end{align}

Consider $\gamma\in\mathcal{M}_+(X\times Y)$ such that $\gamma_1 \leq \mu$ does not hold.
Then we can write the Lebesgue decomposition of $\gamma_1$ with respect to $\mu$: 
$$\gamma_1=f\mu+\mu^\perp,$$ where 
$f\ge 0$ is the Radon-Nikodym derivative of $\gamma_1$ with respect to $\mu$, and $\mu^\perp,\mu$ are mutually singular, that is, 
there exist measurable sets $A,B$ such that $A\cap B=\emptyset$, $X=A\cup B$ and $\mu^\perp(A)=0,\mu(B)=0$. Without loss of generality, we can assume that the support of $f$ lies on $A$, since
$$\gamma_1(E)=\int_{E\cap A} f(x) \, d\mu(x)+\mu^\perp(E\cap B) \qquad \forall E\subseteq X \text{ measurable}.$$
Define $A_1=\{x\in A: \, f(x)>1\}, A_2=\{x\in A: \, f(x)\leq 1\}$ (both are measurable, since $f$ is measurable), and  define $\bar{\mu}=\min\{f,1\}\mu$. Then,
$$\bar\mu\leq \mu \qquad \text{ and } \qquad \bar\mu\leq f\mu\leq f\mu+\mu^\perp= \gamma_1.$$
There exists a $\bar\gamma\in \mathcal{M}_+(X\times Y)$ such that $\bar \gamma_1=\bar\mu,\bar\gamma\leq \gamma$, and $\bar\gamma_2\leq \gamma_2$. Indeed, we can construct $\bar \gamma$ in the following way: First, let $\{\gamma^x\}_{x\in X}$ be the set of conditional measures (disintegration) such that for every measurable (test) function $\psi: X\times Y\to \mathbb{R}$ we have 
 $$\int \psi(x,y) \, d\gamma(x,y)=\int_X \int_Y\psi(x,y) \, d\gamma^x(y) \, d\gamma_1(x).$$
Then, define $\bar\gamma$ as 
$$\bar{\gamma}(U):=\int_X\int_Y \mathbbm{1}_{U}(x,y) \, d\gamma^x(y) \, d\bar\mu(x) \qquad \forall
 U\subseteq X\times Y \text{ Borel}.$$
Then, $\bar \gamma$ verifies that $\bar\gamma_1=\bar\mu$, and since $\bar\mu\leq \gamma_1$, we also have that 
$\bar \gamma\leq \gamma$, which implies  $\bar\gamma_2\leq \gamma_2$. 

Since $|\gamma_1|=|\gamma_2|$ and $|\bar\gamma_1|=|\bar\gamma_2|$, then  we have $|\gamma_1^{\otimes2}-\bar\gamma_1^{\otimes 2}|=|\gamma_2^{\otimes 2}-\bar\gamma_2^{\otimes2}|$. 

We claim that 
\begin{equation}\label{eq: claim}
    |\mu^{\otimes 2}-\gamma_1^{\otimes 2}|\ge |\mu^{\otimes 2}-\bar\gamma_1^{\otimes 2}|+|\gamma_1^{\otimes 2}-\bar\gamma_1^{\otimes 2}|.
\end{equation}

\begin{itemize}
    \item \textit{Left-hand side of \eqref{eq: claim}}:
    Since $\{A,B\}$ is a partition of $X$, we first split the left-hand side of \eqref{eq: claim} as
    \begin{align*}
  |\mu^{\otimes 2}-\gamma_1^{\otimes 2}| 
&=\underbrace{(\mu^{\otimes 2}-\gamma_1^{\otimes 2})(A\times A)}_{(I)}+\underbrace{(\mu^{\otimes 2}-\gamma_1^{\otimes 2})(A\times B)+(\mu^{\otimes 2}-\gamma_1^{\otimes 2})(B\times A)}_{(II)}\\
&\quad+\underbrace{(\mu^{\otimes2}-\gamma_1^{\otimes2})(B\times B)}_{(III)}.      
    \end{align*}
Then we have
\begin{align*}
&(III)=(\mu^{\otimes 2}-\gamma_1^{\otimes 2})(B\times B)=\mu^\perp\otimes\mu^\perp(B\times B)=|\mu^\perp|^2,\\
    &(II)=(\mu^{\otimes 2}-\gamma_1^{\otimes 2})(A\times B)+(\mu^{\otimes 2}-\gamma_1^{\otimes 2})(B\times A)=2|\mu^\perp|(\mu-\gamma_1)(A).
\end{align*}
Since $\gamma_1=f\mu$ in $A$, then $\bar\gamma_1=\gamma_1$ 
in $A_2$ and $\bar\gamma_1=\mu$ in $A_1$, so we have
\begin{align*}
    (\mu-\gamma_1)(A)&=(\mu-\gamma_1)(A_1)+(\mu-\gamma_1)(A_2)=(\gamma_1-\bar\gamma_1)(A_1)+(\mu-\bar\gamma_1)(A_2)\\
    &=(\gamma_1-\bar\gamma_1)(A)+(\mu-\bar\gamma_1)(A).
\end{align*}
Thus, 
\begin{align*}
(II)&=2|\mu^\perp|((\gamma_1-\bar\gamma_1)(A)+(\mu-\bar\gamma_1)(A)),
\end{align*}
and we also get that
\begin{align*}
(I)&=  (\mu^{\otimes 2}-\gamma_1^{\otimes 2})(A\times A)%=\int_{A\times A}| f(x)f(x')-1| \, d\mu(x)d\mu(x')\\
  %&=\int_{A_1\times A_1}(f(x)f(x')-1) \, d\mu(x)d\mu(x')+
  %\int_{A_2\times A_2}(1- f(x)f(x')) \, d\mu(x)d\mu(x')+\\
  %&\qquad+\int_{A_1\times A_2}| f(x)f(x')-1| \, d\mu(x)d\mu(x')+\int_{A_2\times A_1}| f(x)f(x')-1| \, d\mu(x)d\mu(x')\\
  \\
  &=(\mu^{\otimes 2}-\gamma_1^{\otimes 2})({A_1\times A_1})+(\mu^{\otimes 2}-\gamma_1^{\otimes 2})(A_2\times A_2)+
(\mu^{\otimes 2}-\gamma_1^{\otimes 2})(A_1\times A_2)\\
&\qquad +(\mu^{\otimes 2}-\gamma_1^{\otimes 2})(A_2\times A_1)\\
  &=(\gamma_1^{\otimes 2}-\bar\gamma_1^{\otimes 2})({A_1\times A_1})+(\mu^{\otimes 2}-\bar\gamma_1^{\otimes 2})(A_2\times A_2)+\\
  &\qquad +
|\bar\gamma_1\otimes\mu-\gamma_1\otimes\bar\gamma_1|(A_1\times A_2)+|\mu\otimes\bar\gamma_1-\bar\gamma_1\otimes\gamma_1|(A_2\times A_1)\\
  &=(\gamma_1^{\otimes 2}-\bar\gamma_1^{\otimes 2})({A_1\times A_1})+(\mu^{\otimes 2}-\bar\gamma_1^{\otimes 2})(A_2\times A_2)+
2(\bar\gamma_1-\gamma_1)(A_1)(\mu-\bar\gamma_1)(A_2)\\
  &=(\gamma_1^{\otimes 2}-\bar\gamma_1^{\otimes 2})({A\times A})+(\mu^{\otimes 2}-\bar\gamma_1^{\otimes 2})(A\times A)+
\underbrace{2(\bar\gamma_1-\gamma_1)(A_1)(\mu-\bar\gamma_1)(A_2)}_{\geq 0}.
\end{align*}

\item \textit{Right-hand side of \eqref{eq: claim}:}
First notice that
\begin{equation*}
    (\gamma_1-\bar\gamma_1)(B)=(\gamma_1-\bar\gamma_1)(B)\leq\gamma_1(B)=|\mu^\perp|,
\end{equation*}
and since $\bar\gamma_1\leq \mu$ and $\mu(B)=0$, we have
\begin{equation*}
    (\mu-\bar\gamma_1)(B)=0.
\end{equation*}
Then, 
\begin{align*}
&|\mu^{\otimes 2}-\bar\gamma_1^{\otimes 2}|+|\gamma_1^{\otimes 2}-\bar\gamma_1^{\otimes 2}|=\\
&=(\mu^{\otimes 2}-\bar\gamma_1^{\otimes 2})(A\times A)+(\gamma_1^{\otimes 2}-\bar\gamma_1^{\otimes 2})(A\times A)+(\mu^{\otimes 2}-\bar\gamma_1^{\otimes 2})(B\times B)\\
&\quad+(\gamma_1^{\otimes 2}-\bar\gamma_1^{\otimes 2})(B\times B)+{(\mu^{\otimes 2}-\bar\gamma_1^{\otimes 2})(A\times B)}+(\gamma_1^{\otimes 2}-\bar\gamma_1^{\otimes 2})(A\times B)\\
&\quad+{(\mu^{\otimes 2}-\bar\gamma_1^{\otimes 2})(B\times A)}+(\gamma_1^{\otimes 2}-\bar\gamma_1^{\otimes 2})(B\times A)\\
&\leq \underbrace{(\mu^{\otimes 2}-\bar\gamma_1^{\otimes 2})(A\times A)+(\gamma_1^{\otimes 2}-\bar\gamma_1^{\otimes 2})(A\times A)}_{\leq(I)} + \underbrace{|\mu^\perp|^2}_{=(III)}+\underbrace{2|\mu^\perp|(\gamma_1-\bar\gamma_1)(A)}_{= (II)}.
\end{align*}
\end{itemize}
Thus, \eqref{eq: claim} holds.
% \begin{align}
% &|\mu^{\otimes 2}-\gamma_1^{\otimes 2}| 
% =|\mu^{\otimes 2}-\gamma_1^{\otimes 2}|(A\times A)+|\mu^{\otimes 2}-\gamma_1^{\otimes 2}|(A\times B)+|\mu^{\otimes 2}-\gamma_1^{\otimes 2}|(B\times A)+|\mu^{\otimes2}-\gamma_1^{\otimes2}|(B\times B)\nonumber\\
% &=\int_{A^2}|f(x_1)f(x_2)-1|d\mu(x_1) d\mu(x_2)+\int _{A\times B}|d\mu-df\mu|\otimes d\mu^\perp+\int_{B\times A} d\mu^\perp \otimes |d\mu-df\mu|+\int _{B^2}d\mu^\perp d\mu^\perp
% \label{pf: bar_gamma_1 and gamma_1}
% \end{align}

% \begin{align}
% &|\mu^{\otimes 2}-\bar\gamma_1^{\otimes 2}|&\\
% =\int |1-(f\wedge 1)(x_1)f(\wedge 1)(x_2)| d\mu(x_1)d\mu(x_2)\nonumber\\
% &=\int_{A^2_1}(1-f(x_1)f(x_2))d\mu(x_1)d\mu(x_2) \label{pf: bar_gamma_1 amd gamma_1 term 1}\nonumber\\
% &|\gamma_1^{\otimes 2}-\bar\gamma_1^{\otimes 2}|\nonumber=
% \end{align}

We finish the proof of the proposition by noting that 
\begin{align}
|\mu^{\otimes 2}-\bar \gamma_1^{\otimes 2}|+|\nu^{\otimes 2}-\bar\gamma_2^{\otimes 2}|&\leq |\mu^{\otimes 2}-\gamma_1^{\otimes 2}|-|\gamma_1^{\otimes 2}-\bar \gamma_1^{\otimes 2}|+|\nu^{\otimes 2}-\bar \gamma_2^{\otimes 2}|\nonumber\\
&=|\mu^{\otimes 2}-\gamma_1^{\otimes 2}|-|\gamma_2^{\otimes 2}-\bar \gamma_2^{\otimes 2}|+|\nu^{\otimes 2}-\bar \gamma_2^{\otimes 2}|\nonumber\\
&\leq |\mu^{\otimes 2}-\gamma_1^{\otimes 2}|+|\nu^{\otimes 2}-\gamma_2^{\otimes 2}| \nonumber 
\end{align}
where the first inequality follows from \eqref{eq: claim}, and the second inequality holds from the fact the total variation norm $|\cdot|$ satisfies triangular inequality. 
Therefore, $\bar \gamma$ induces a smaller transport GW cost than $\gamma$ (since $\bar \gamma\leq \gamma$), and also $\bar\gamma$ decreases the mass penalty in comparison that corresponding to $\gamma$. 
Thus, $\bar\gamma$ is a better GW transportation plan, which satisfies $\bar\gamma_1\leq \mu$. Similarly, we can further construct $\bar\gamma'$ based on $\bar\gamma$ such  that $\bar\gamma'_1\leq \mu, \bar\gamma'_2\leq \nu$. Therefore, we can restrict the minimization in \eqref{eq:pgw_ori} from $\mathcal{M}_+(X\times Y)$ to $\Gamma_{\leq}(\mu,\nu)$.
Thus, the equality \eqref{eq: equality gamma leq} is satisfied.

\begin{proof}[Proof of Remark \ref{remark: rewrite pGW}]
   Given $\gamma\in\Gamma_\leq(\mu,\nu)$, since $\gamma_1\leq \mu$, $\gamma_2\leq \nu$, and $\gamma_1(X)=|\gamma_1|=|\gamma|=|\gamma_2|=\gamma_2(Y)$, we have
\begin{align*}
  |\mu^{\otimes 2}-\gamma_1^{\otimes 2}|+|\nu^{\otimes 2}-\gamma_2^{\otimes2}|&= \mu^{\otimes 2}(X^2)-\gamma_1^{\otimes 2}(X^2)+\nu^{\otimes 2}(Y^2)-\gamma_2^{\otimes2}(Y^2)\\
  &=|\mu|^2+|\nu|^2 -2|\gamma|^2, 
\end{align*}
and so the transportation cost in partial GW problem \eqref{eq:pgw} becomes
\begin{align}
C&(\gamma;\lambda,\mu,\nu)\notag\\
&:=\int_{(X\times Y)^2}L(d_{X}^q(x,x'),d_{Y}^q(y,y')) \,
d\gamma(x,y)d\gamma(x',y') +\lambda\left(|\mu^{\otimes 2}-\gamma_1^{\otimes 2}|+|\nu^{\otimes 2}-\gamma_2^{\otimes2}|\right)\notag \\
&=\int_{(X\times Y)^2}L(d_{X}^q(x,x'),d_{Y}^q(y,y')) \, 
d\gamma(x,y)d\gamma(x',y') +\lambda\left(|\mu|^2+|\nu|^2-2|\gamma|^2\right) \notag\\
&=\int_{(X\times Y)^2}\left(L(d_{X}^q(x,x'),d_{Y}^q(y,y')- 2\lambda\right)\, 
d\gamma(x,y)d\gamma(x',y') +\underbrace{\lambda\left(|\mu|^2+|\nu|^2\right)}_{\text{does not depend on $\gamma$}}.\label{eq:pgw_cost}
\end{align}
\end{proof}

\section{Proof of Proposition \ref{pro: pgw minimizer}}\label{sec: pro: pgw minimizer}
In this section, we discuss the minimizer of the Partial GW problem \eqref{eq:pgw_ori}. 
Trivially,  $\Gamma_\leq(\mu,\nu)\subseteq\mathcal{M}_+(X\times Y)$ and 
by using Proposition \ref{pro: partial GW version2} it is enough to show that a minimizer for problem \eqref{eq:pgw} exists. %,and that this minimizer also solves \eqref{eq:pgw_ori}. 

We refer the reader to \citet[Chapters 5 and 10]{memoli2011gromov} for similar ideas. 

\subsection{Formal Statement of Proposition \ref{pro: pgw minimizer}}
{\itshape 
Suppose $X,Y$ are compact sets, then exists compact set $[0,\beta]\subset \mathbb{R}$,  such that 
$$d(x,x'), \, d(y,y')\in [0,\beta], \qquad \forall x,x'\in X,y, \, y'\in Y$$
Let $A=[0,\beta^q]$. 
Let $L_{A^2}$ denote the restriction of $L$ on $A^2$, i.e. 
$L_{A^2}:A^2\to \mathbb{R}$ with 
$L_{A^2}(r_1,r_2)=L(r_1,r_2)$, $\forall r_1,r_2\in A$. 
Suppose $L$ satisfies the following: 
there exists $0<K<\infty$ such that for every $r_1,r_1',r_2,r_2'\in A$, 
\begin{align}
|L_{A^2}(r_1,r_2)-L_{A^2}(r_1',r_2)|\leq K|r_1-r_1'|, \, \, \, 
|L_{A^2}(r_1,r_2)-L_{A^2}(r_1,r_2')|\leq K|r_2-r_2'| \label{pf:L_cond} 
\end{align}
(i.e., $L_{A^2}$ is Lipschitz on each variable).
Then $PGW^L_\lambda(\cdot,\cdot)$ admits a minimizer. 
}

Note, the condition \eqref{pf:L_cond} contains the case $L(r_1,r_2)=|r_1-r_2|^p$ as a special case: 

% \begin{lemma}\label{lem:L_cond1}
% If $(r_1,r_2)\mapsto \frac{\partial}{\partial r_1}L(r_1,r_2)$, $(r_1,r_2)\mapsto \frac{\partial}{\partial r_1}L(r_1,r_2)$ are continuous, then $L$ satisfies the condition \eqref{pf:L_cond}. 
% \end{lemma}
% \begin{proof}
% We have 
% the function 
% $$(r_1,r_2)\mapsto \frac{\partial}{\partial r_1}L_{A^2}(r_1,r_2),$$ 

% is continuous and thus bounded on $A^2$.
% That is 
% $$|\frac{\partial}{\partial r_1}L_{A^2}(r_1,r_2)|\leq K,\forall r_1,r_2.$$ 

% Thus, 
% $$|L_{A^2}(r_1,r_2)-L_{A^2}(r_1',r_2)|\leq K|r_1-r_1'|. $$

% Similarly, 
% $$|\frac{\partial}{\partial r_2}L_{A^2}(r_1,r_2)|\leq K,\forall r_1,r_2,$$ 
% and thus 
% $$|L_{A^2}(r_1,r_2)-L_{A^2}(r_1,r_2')|\leq K|r_2-r_2'|. $$
% \end{proof}

\begin{lemma}
If $L(r_1,r_2)=|r_1-r_2|^p$, for $1\leq p<\infty$, then $L$ satisfies the condition \eqref{pf:L_cond}. 
\end{lemma}
\begin{proof}
%Without loss of generality, we suppose $q=2$. 
Assume that $L$ is defined on an interval of the form $[0,M]$, for some  $M>0$. 
Consider $r_1,r_1',r_2,r_2'\in [0,M]$. 
If $p=1$, by triangle inequality, we have
\begin{align}
|L(r_1,r_2)-L(r_1',r_2)|&=||r_1-r_2|-|r_1'-r_2||\leq |r_1-r_1'| \nonumber 
\end{align}
and similarly, 
\begin{align}
|L(r_1,r_2)-L(r_1,r_2')|&
\leq |r_2-r_2'|.\nonumber
\end{align}
From \citet[page 473]{memoli2011gromov}, since for $1\leq p<\infty$, the function $t\mapsto t^p$, for $t\in[0,M]$, is Lipschitz with constant bounded
by $pM^{p-1}$,  we have  
\begin{align}
|L(r_1,r_2)-L(r_1',r_2)|
&\leq %pM^{p-1}(|r_1-r_1'|+|r_2-r_2|)=
pM^{p-1}|r_1-r_1'|.\nonumber 
\end{align}
and similarly, 
\begin{align}
|L(r_1,r_2)-L(r_1,r_2')|
&\leq pM^{p-1}|r_2-r_2'|. \nonumber     
\end{align}
\end{proof}
\begin{lemma}\label{lem:lip_func_q}
Given $q\ge 1$, consider $\beta>0$. Then $[0,\beta]\ni c\mapsto c^q \in [0,\beta^q]$ is a Lipschitz function. 
\end{lemma}
\begin{proof}
Given $c_1,c_2\in [0,\beta]$, we have 
\begin{align}
|c_1^q-c_2^q|\leq q\beta^{q-1}|c_1-c_2|
\end{align}
Thus, $c\mapsto c^q$ is a Lipschitz function. 
\end{proof}

\subsection{Convergence Auxiliary Result}

If a sequence $\{\gamma^n\}$ converges weakly to $\gamma$, we write $\gamma^n\overset{w}{\rightharpoonup}\gamma$. In this setting, if $\gamma^n\overset{w}{\rightharpoonup}\gamma$, it does not imply that  $(\gamma^n)^{\otimes2}\overset{w}{\rightharpoonup}\gamma^{\otimes2}$. Thus, the technique used in classical OT for proving the existence of a minimizer for the optimal transport optimization problem as a consequence of the Stone-Weierstrass theorem does not apply directly in the Gromov-Wasserstein context.

Inspired by \cite{memoli2011gromov}, we introduce the following lemma. 
\begin{lemma}\label{lem:pgw_minima}
Given metric space $(Z,d_Z)$, suppose $\phi:Z^2\to \mathbb{R}$ is a Lipschitz continuous function with respect to $(Z^2,d_Z^{+})$, where 
$$d_Z^{+}((z_1,z_2),(z_1',z_2')):=d_Z(z_1,z_1')+d_Z(z_2,z_2'), \qquad \forall (z_1,z_2),(z_1',z_2')\in Z^2.$$
Given  $\gamma\in\mathcal{M}_+(Z)$, and a sequence $\{\gamma^n\}_{n\ge 1}\in\mathcal{M}_+(Z)$ such that converges weakly to $\gamma$, $$\gamma^n \overset{w}{\rightharpoonup}\gamma \qquad ({n\to\infty}).$$ 
Finally, consider the mapping 
$$Z\ni z\mapsto \gamma(\phi(z,\cdot)):=\int_Z \phi(z,z')d\gamma(z')\in \mathbb{R}.$$
Then we have the following results: 
\begin{enumerate}[(1)]
\item   $\gamma^n(\phi(z,\cdot))\to \gamma(\phi(z,\cdot))$ uniformly (when $n\to\infty$). 
\item  
$(\gamma^n)^{\otimes2}(\phi(\cdot,\cdot))\to \gamma^{\otimes2}(\phi(\cdot,\cdot))$ (when $n\to\infty$). 
\item If $\mathcal{M}\subset \mathcal{M}_+(Z)$ is compact for the weak convergence, then 
$\inf_{\gamma\in \mathcal{M}}\gamma^{\otimes2}(\phi(\cdot,\cdot))$ admits a minimizer.  
\end{enumerate}
\end{lemma}
\begin{proof}
The main idea of the proof is similar to \citet[Lemma 10.3]{memoli2011gromov}:  we extend it from $\mathcal{P}_+(Z)$ to $\mathcal{M}_+(Z)$. 
\begin{enumerate}[(1)]
    \item  
    Since $\gamma^n \overset{w}{\rightharpoonup}\gamma$, and $Z$ is compact, we have $|\gamma^n|\to |\gamma|$.  Then, given $\epsilon>0$, for $n$ sufficiently large we have $|\gamma^n|\leq |\gamma|+\epsilon$. 
    
Let us denote by $\|\phi\|_{Lip}$ the Lipschitz constant of $\phi$. For any $z_1,z_2\in Z$, we have: 
\begin{align}
|\gamma^n(\phi(z_1,\cdot))-\gamma^n(\phi(z_2,\cdot))|
&\leq \int_Z |\phi(z_1,z)-\phi(z_2,z)| \gamma^n(z)\nonumber\\
&\leq \max_{z\in Z} |\phi(z_1,z)-\phi(z_2,z)|(|\gamma|+\epsilon)\nonumber\\
&\leq (|\gamma|+\epsilon)\|\phi\|_{Lip} \, d_Z(z_1,z_2)
= K d_Z(z_1,z_2) \nonumber, 
\end{align}
where $K=(|\gamma|+\epsilon)\|\phi\|_{Lip}$ is a finite positive value.
Note that the above inequality also holds if we replace $\gamma^n$ with $\gamma$. 

Since $(Z,d_Z)$ is compact, $Z=\bigcup_{i=1}^N B(z_i,\epsilon/K)$ for some $z_1,\ldots, z_N\in Z$, where $B(z_i,\epsilon/3K)=\{z\in Z:\, d_Z(z,z_i)\leq \epsilon/3K\}$ is the closed ball centered at $z_i$, with radius $\epsilon/K$. By definition of weak convergence, when $n$ is sufficiently large, 
$$|\gamma^n(\phi(z_i,\cdot))-\gamma(\phi(z_i,\cdot))|<\epsilon/3,\qquad \text{for each } i\in[1:N].$$
Given $z\in Z$, then $z\in B(z_i)$ for some $z_i$. For sufficiently large $n$, we have: 
\begin{align}
&|\gamma^n(\phi(z,\cdot))-\gamma(\phi(z,\cdot))|\nonumber\\
&\leq |\gamma^n(\phi(z,\cdot))-\gamma^n(\phi(z_i,\cdot))|+|\gamma^n(\phi(z_i,\cdot))-\gamma(\phi(z_i,\cdot))|+|\gamma(\phi(z_i,\cdot))-\gamma(\phi(z,\cdot))| \nonumber\nonumber\\
&\leq K d(z,z_i)+\epsilon/3 +K d(z,z_i)=\epsilon/3+\epsilon/3+\epsilon/3=\epsilon. 
\end{align}
Thus, we prove the first statement. 

\item We recall that we do not have $(\gamma^n)^{\otimes2}\overset{w}{\rightharpoonup}\gamma^{\otimes2}$.

Consider an arbitrary  $\epsilon>0$. We have, 
\begin{align}
0&\leq \limsup_{n\to \infty}|(\gamma^n)^{\otimes2}(\phi)-(\gamma)^{\otimes2}(\phi)| \label{pf:weak_converge_A_B}\\
&\leq \limsup_{n\to \infty}\underbrace{|(\gamma^n\otimes\gamma^n)(\phi)-(\gamma\otimes \gamma^n)(\phi)|}_{A_n}+\limsup_{n\to\infty}
\underbrace{|(\gamma^n\otimes\gamma)(\phi)-(\gamma\otimes \gamma)(\phi)|}_{B_n}.\nonumber
\end{align}
For the first term, when $n$ is sufficiently large, by statement (1), we have: 
\begin{align}
A_n&=\int\left(\gamma^n(\phi(z,\cdot))-\gamma(\phi(z,\cdot)\right)d\gamma^n(z)\nonumber\\
&\leq \max_z |\gamma^n(\phi(z,\cdot))-\gamma(\phi(z,\cdot)| |\gamma^n|\nonumber\\
&\leq \epsilon (|\gamma|+\epsilon)\label{pf:weak_converge_A_B1}
\end{align}
Thus, $\limsup_n A=\lim _n A=0$. 

Similarly, for the second term, when $n$ is sufficiently large, we have 
\begin{align}
B_n&:=\int (\gamma^n(\phi(z,\cdot))-\gamma(\phi(z,\cdot)))d\gamma(z)\leq \epsilon |\gamma|. \label{pf:weak_converge_A_B2}
\end{align}
Thus, $\limsup_n B_n=\lim_n B_n=0$.

Therefore, from \eqref{pf:weak_converge_A_B}, \eqref{pf:weak_converge_A_B1} and \eqref{pf:weak_converge_A_B2}, we obtain 
\begin{align}
\limsup_{n\to \infty}|(\gamma^n)^{\otimes2}(\phi)-(\gamma)^{\otimes2}(\phi)|=\lim_{n\to \infty}|(\gamma^n)^{\otimes2}(\phi)-(\gamma)^{\otimes2}(\phi)|=0.
\end{align}

\item Let $\gamma^n\in \mathcal{M}$ be a sequence such that $(\gamma^n)^{\otimes2}(\phi)$   (weakly) converges to $\inf_{\gamma\in \mathcal{M}}\gamma^{\otimes2}(\phi)$. 
Since $\mathcal{M}$ is compact, there exists a sub-sequence $\gamma^{n_k}\overset{w}{\rightharpoonup}\gamma$ for some $\gamma\in \mathcal{M}$. 
Then, by statement (2), we have: 
$$\gamma^{\otimes2}(\phi)=\lim_k (\gamma^{n_k})^{\otimes2}(\phi)=\inf_{\gamma\in\mathcal{M}}\gamma^{\otimes2}(\phi),$$
and we complete the proof. 
\end{enumerate}
\end{proof}

\subsection{Proof of the Formal Statement for Proposition \ref{pro: pgw minimizer}}
%We prove it for the case $q=1$, the general case $q\ge 1$ can be proved similarly. 

The proof follows the ideas of \citet[Corollary 10.1]{memoli2011gromov}.

Define $(Z,d_Z)$ as 
$Z:=X\times Y$, with  $d_Z((x,y),(x',y')):=d_X(x,x')+d_Y(y,y')$. 

We claim that the following mapping
\begin{align}
(X\times Y)^2=Z^2&\to \mathbb{R}\nonumber\\
((x,y),(x',y'))&\mapsto \phi((x,y),(x',y')):=L(d_X^q(x,x'),d_Y^q(y,y'))-2\lambda\nonumber 
\end{align}
is a Lipschitz function with respect to $d_Z^+$, where $L$ satisfies \eqref{pf:L_cond}. 
Indeed, given $((x_1,y_1),(x'_1,y'_1)),((x_2,y_2),(x'_2,y'_2))\in Z^2$, we have: 
\begin{align}
&|\phi((x_1,y_1),(x'_1,y'_1))-\phi((x_2,y_2),(x'_2,y'_2))|\nonumber\\
&=|L(d_X(x_1,x_1'),d_Y(y_1,y_1'))-L(d_X(x_2,x_2'),d_Y(y_2,y_2'))|\nonumber\\
&\leq |L(d_X(x_1,x_1'),d_Y(y_1,y_1'))-L(d_X(x_2,x_2'),d_Y(y_1,y_1'))|\nonumber\\
&\quad +|L(d_X(x_2,x_2'),d_Y(y_1,y_1'))-L(d_X(x_2,x_2'),d_Y(y_2,y_2'))| \nonumber\\
&\leq K|d_X^q(x_1,x'_1)-d_X^q(x_2,x'_2)|+K|d_Y^q(y_1,y'_1)-d_Y^q(y_2,y'_2)|\nonumber\\
&\leq K' |d_X(x_1,x'_1)-d_X(x_2,x'_2)|+K'|d_Y(y_1,y'_1)-d_Y(y_2,y'_2)| \label{pf:lip_x^q} \\
&\leq 
K'(d_X(x_1,x'_2)+d_X(x_1',x'_2))+K'(d_Y(y_1,y_2)+d_Y(y_1',y_2'))\label{pf:lipschitz_ineq}\\
&=K'\left[\left((d_X(x_1,x_2)+d_Y(y_1,y_2)\right)+\left((d_X(x_1',x_2')+d_Y(y_1',y_2')\right)\right]\nonumber\\
&=K'\left[d_Z((x_1,y_1),(x_2,y_2))+d_Z((x_1',y_1'),(x_2',y_2'))\right]\nonumber\\
&=K'd_Z^+(((x_1,y_1),(x_2,y_2)),((x_1,y_1),(x_2,y_2)))\nonumber 
\end{align}
where in \eqref{pf:lip_x^q}, $K'=q\beta^{q-1}K$; the inequality holds by lemma \ref{lem:lip_func_q}; The inequality  \eqref{pf:lipschitz_ineq} follows from the triangle inequality: 
\begin{align}
d_X(x_1,x'_1)-d_X(x_2,x_2')
&\leq d_X(x_1,x_2)+d_X(x_2,x_2')+d_X(x_2',x_1')-d_X(x_2,x_2')\nonumber\\
&=d_X(x_1,x_2)+d_X(x_1',x_2'), \nonumber 
\end{align}
and similarly, 
$$d_X(x_2,x'_2)-d_X(x_1,x_1')\leq d_X(x_1,x_2)+d_X(x_1',x_2').$$

Let $\mathcal{M}=\Gamma_\leq(\mu,\nu)$.
From \citet[Proposition B.1]{liu2023ptlp}, we have that $\Gamma_\leq(\mu,\nu)$ is a compact set with respect to the weak convergence topology.  

By Lemma \eqref{lem:pgw_minima} part (3), we have the PGW problem, which can be written as 
\begin{align}
\inf_{\gamma\in\Gamma_\leq(\mu,\nu)}\gamma^{\otimes2}(\phi)+\lambda(|\mu|^2+|\nu|^2) \nonumber
\end{align}
admits a solution, i.e., a minimizer $\gamma\in \Gamma_{\leq}(\mu,\nu)$. Therefore, we end the proof of Proposition \ref{pro: pgw minimizer}.

% \section{$(PGW_\lambda^p)^{1/p}$ problem, when $p\to\infty$.}

% In this section, we set $L(r_1,r_2)=|r_1-r_2|^p$ where $p\ge1$. We consider the following PGW problem: 
% \begin{align}
% PGW_{\lambda,\infty}(\mathbb{X},\mathbb{Y}):&=\lim_{p\to \infty}\inf_{\gamma\in\Gamma_\leq(\mu,\nu)}\left(\int |d_X^2(x,x')-d_Y^2(y,y')|^p d\gamma^{2}+\lambda(|\mu|^2+|\nu|^2-2|\gamma|^2)\right)^{1/p}.\label{eq:pgw_infty}
% \end{align}

% \begin{lemma}
% Similar to classical GW (see \citet[Eq. 5.11]{memoli2011gromov}, \eqref{eq:pgw_infty} can be written as
% \begin{align}
% PGW_{\lambda,\infty}(\mathbb{X},\mathbb{Y}):=\inf_{\gamma\in\Gamma_\leq(\mu,\nu)}\sup_{(x,y),(x',y')\in\text{supp}(\gamma)}\left( |d_X^2(x,x')-d_Y^2(y,y')|^p-2\lambda \right) +\lambda(|\mu|^2+|\nu|^2-2|\gamma|^2)
% \end{align}
% \end{lemma}

\section{Proof of Proposition \ref{pro: pgw is metric}: Metric Property of Partial GW}\label{sec: metric property}

Let $L(r_1,r_2)=D^p(r_1,r_2)$ for a metric $D$ on $\mathbb{R}$, and since all the metrics in $\mathbb{R}$ are equivalent, for simplicity, consider $D(r_1,r_2)=|r_1-r_2|$. (Notice that this satisfies the hypothesis of Proposition \ref{pro: tensor product gw} used in the experiments). 

Consider the GW problem, for $q\geq 1$,
\begin{equation}\label{eq: prob G^L_q}
GW^L_q(\mathbb{X},\mathbb{Y}):=\inf_{\gamma\in\Gamma(\mu,\nu)}\int_{(X\times Y)^2}L(d_X^q(x,x'),d_Y^q(y,y')) 
 \, d\gamma^{\otimes 2},
\end{equation}
or, in particular,
\begin{equation}\label{eq: prob G^p_q}
GW^p_q(\mathbb{X},\mathbb{Y}):=\inf_{\gamma\in\Gamma(\mu,\nu)}\int_{(X\times Y)^2}|d_X^q(x,x')-d_Y^q(y,y')|^p 
 \, d\gamma^{\otimes 2}.
\end{equation}
For probability mm-spaces we have the equivalence relation $\mathbb{X}\sim\mathbb{Y}$ if and only if $GW^p_q(\mathbb{X},\mathbb{Y})=0$. 
%By Remark \ref{remark: rewrite pGW}, the PGW problem can be formulated as in 
%\eqref{eq: pgw for metric}, and we denote it here by $PGW_{\lambda}(\cdot,\cdot)$.

% To discuss the metric property, we first introduce the isometric property in the family of all $mm$-spaces, i.e. 
% $$\mathcal{F}:=\{\mathbb{S}=(S,d_S,\sigma): S\subset\mathbb{R}^d \text{ for some }d, S\neq\emptyset, \sigma\in \mathcal{M}_+(S), d_S \text{ is a metric}\}.$$
% Given $\mathbb{S},\mathbb{X}\in \mathcal{F}$, we say $\mathbb{S}$ is isometric to $\mathbb{X}$, if there exists $\gamma\in\Gamma(\sigma,\mu)$ such that 
% $$\int_{(s,x)^2}|d_S(s,s')-d_X(x,x')|^2d\gamma(s,x)d\gamma(s',x')=0,$$
% or equivalently, $GW(\mathbb{S},\mathbb{X})=0$. For convenience, we denote as $\mathbb{S}\sim \mathbb{X}$. It is straightforward to show that ``$\sim$'' is an equivalent relation. 
% Next, we discuss the proof of proposition \ref{pro: pgw is metric}. We first restate it rigorously. 
% \begin{proposition}
% The partial GW defined in \eqref{eq:pgw} defines a metric in set $\mathcal{F}$ with identity defined by $\sim$. 
% \end{proposition}

%

By \citet[Chapter 5]{memoli2011gromov}, $\mathbb{X}\sim \mathbb{Y}$ is equivalent to the following: there exists a bijective isometry mapping $\phi: X\to Y$, such that 
\begin{align}
 &d_X(x,x')-d_Y(\phi(x),\phi(x'))=0,\quad \mu^{\otimes2}-a.s. \nonumber\\
 &\phi_\#\mu=\nu \nonumber .
\end{align}

\begin{remark}\label{remark: equivalence class}
In the literature, the case where $q=1$ is the most frequently considered problem. In particular, in  \cite{memoli2011gromov} it is stated the equivalence relation 
$\mathbb{X}\sim\mathbb{Y}$ if and only if there exists $\phi: X\to Y$ such that $\phi_\#\mu=\nu$ and 
 $d_X(x,x')=d_Y(\phi(x),\phi(x'))$ $\mu^{\otimes2}-a.s.$ if and only if $GW^p_1(\mathbb{X},\mathbb{Y})=0$. Thus,  $\mathbb{X}\sim\mathbb{Y}$ is also equivalent to have $\phi: X\to Y$ such that $\phi_\#\mu=\nu$ and 
 $d_X(x,x')=d_Y(y,y')$ $\gamma^{\otimes2}-a.s.$ where $\gamma$ is a minimizer for $GW^p_1(\mathbb{X},\mathbb{Y})$. So, in this situation we also  have $d_X^q(x,x')=d_Y^q(y,y')$ $\gamma^{\otimes2}-a.s.$ for any given $q\geq 1$. Therefore, $\mathbb{X}\sim\mathbb{Y}$ if and only if  $GW^p_q(\mathbb{X},\mathbb{Y})=0$.
\end{remark}

\subsection{Formal Statement of Proposition \ref{pro: pgw is metric}}

We first introduce the formal statement of Proposition \ref{pro: pgw is metric}. To do so, we extend the equivalence relation $\sim$ to all mm-spaces (not only probability mm-spaces):
Given arbitrary mm-spaces $\mathbb{X}=(X,d_X,\mu)$, $\mathbb{Y}=(Y,d_Y,\nu)$, where $X,Y$ are compact and $\mu\in\mathcal{M}_+(X)$, $\nu\in\mathcal{M}_+(Y)$,  we write $\mathbb{X}\sim \mathbb{Y}$ if and only if they have the same total mass (i.e., $|\mu|=\mu(X)=\nu(Y)=|\nu|$) and $GW^p_q(\mathbb{X},\mathbb{Y})=0$.

{\itshape 
% Let $\mathcal{G}_q$ denote the set of all mm-spaces: 
% \begin{align}
% \mathcal{G}_q:=\{\mathbb{X}=(X,d_X,\mu):
% &\emptyset\neq X\subset \mathbb{R}^d: \text{for some }d\in \mathbb{N}\nonumber\\
% &\mu\in \mathcal{M}_+(X), \exists x_0\in X, \mu(d^q(x,x_0))<\infty.\}.\label{eq:G_q}
% \end{align}

\textbf{Formal statement of Proposition \ref{pro: pgw is metric}:} Given $\lambda>0$, $1\leq p,q<\infty$, then $(PGW_{\lambda,q}^p(\cdot,\cdot))^{1/p}$ defines a metric among mm-spaces under taking quotient with respect to the equivalence relation $\sim$. 
}

Next, we discuss its proof.%the \textbf{Proof of Proposition \ref{pro: pgw is metric}}
%\begin{proof}[Proof of Proposition \ref{pro: pgw is metric}]
\subsection{Non-Negativity and Symmetry Properties}

%Consider $\mathbb{X}=(X,d_X,\mu),\mathbb{Y}=(Y,d_Y,\nu)\in\mathcal{G}_2$. 
It is straightforward to verify $PGW_{\lambda,q}^p(\mathbb{X},\mathbb{Y})\ge 0$, and that $PGW_{\lambda,q}^p(\mathbb{X},\mathbb{Y})=PGW_{\lambda,q}^p(\mathbb{Y},\mathbb{X})$. In what follows, we will concentrate on proving $PGW_{\lambda,q}^p(\mathbb X,\mathbb Y)=0$ if and only if $\mathbb X\sim \mathbb Y$:

If $\mathbb{X}\sim \mathbb{Y}$, then $|\mu|=|\nu|$, and  we have 
$$0\leq PGW_{\lambda,q}^p(\mathbb{X},\mathbb{Y})\leq GW_q^p(\mathbb{X},\mathbb{Y})=0,$$
where the inequality follows from the fact 
$\Gamma(\mu,\nu)\subseteq \Gamma_\leq(\mu,\nu)$. 
Thus, $PGW_{\lambda,q}^p(\mathbb{X},\mathbb{Y})=0$.

For the other direction, suppose that $PGW_{\lambda,q}^p(\mathbb{X},\mathbb{Y})=0$.
We claim that $|\mu|=|\nu|$ and that there exist an optimal plan $\gamma$ for $PGW_{\lambda,q}^p(\mathbb{X},\mathbb{Y})$ such that  $|\mu|=|\gamma|=|\nu|$. Let us prove this by contradiction. 
Assume $|\mu|< |\nu|$. For convenience, suppose $|\mu|^2\leq |\nu|^2-\epsilon$, for some $\epsilon>0$. Then, for each $\gamma\in\Gamma_\leq(\mu,\nu)$, we have $|\gamma^{\otimes 2}|\leq |\mu|^2\leq |\nu|^2-\epsilon$, and so 
\begin{align*}
PGW_{\lambda,q}^p(\mathbb{X},\mathbb{Y}) \ge \lambda(|\mu|^2+|\nu|^2-2|\gamma|^2)\ge \lambda(|\nu^2|-|\gamma|^2)\ge \lambda \epsilon >0.
\end{align*}
Thus, $PGW_{\lambda,q}^p(\mathbb{X},\mathbb{Y})>0$, which is a contradiction. So, $|\mu|=|\nu|$. In addition, if $\gamma\in\Gamma_\leq(\mu,\nu)$ is optimal for $PGW_{\lambda,q}^p(\mathbb{X},\mathbb{Y})$, we have $|\gamma|=|\mu|=|\nu|$, thus $\gamma\in\Gamma(\mu,\nu)$. 
%Similarly, assume for all optimal $\gamma$ for $PGW_\lambda$, it holds that $|\gamma|<|\nu|=|\mu|$. Thus, for any of such $\gamma$, we have 
%\begin{align}
%PGW_\lambda(\mathbb{X},\mathbb{Y})   \ge \lambda(|\mu|^2+|\nu|^2-2|\gamma^{\otimes 2}|) >0,
%\end{align}
%which is a contradiction since $PGW_\lambda(\mathbb{X},\mathbb{Y})=0$. Thus there exist an optimal $\gamma$ for $PGW_\lambda$ with $|\gamma|=|\nu|=|\mu|$. 
%Combined with the fact that for $\gamma$ results in having $\gamma\in\Gamma(\nu,\mu)$. 
Therefore, since $PGW_{\lambda,q}^p(\mathbb{X},\mathbb{Y})=0$, 
and for such optimal $\gamma$ we have
$|\gamma|=|\mu|=|\nu|$, we obtain
\begin{align}
\int_{(X\times Y)^2}|d_X^q(x,x')-d_Y^q(y,y')|^p d\gamma^{\otimes2}=0.\nonumber
\end{align}
%To end the proof, we separate it into two cases: (1) If $|\mu|=|\nu|=|\gamma|=0$, then $GW^p_q(\mathbb{X},\mathbb{Y})=0$, and so $\mathbb{X}\sim \mathbb{Y}$. (2)
%If $|\mu|=|\nu|=|\gamma|>0$, we have 
As a result,  $
d_X^q(x,x')=d_Y^q(y,y')$ $\gamma^{\otimes2}-a.s.$, which implies that 
$GW_q^p(\mathbb{X},\mathbb{Y})=0$, and so $\mathbb{X}\sim\mathbb{Y}$. 

\subsection{Triangle Inequality -- Strategy: Convert the PGW Problem into a GW Problem}\label{app: main technique of main proof}
Consider three arbitrary mm-spaces $\mathbb{S}=(S,d_S,\sigma)$, $\mathbb{X}=(X, d_{X},  \mu)$, $\mathbb{Y}=(Y, d_{Y},  \nu)$. % in $\mathcal{G}_2$.
We define
$\hat{\mathbb{S}}=(\hat S,d_{\hat S},\hat \sigma)$, $\hat{\mathbb{X}}=(\hat X, d_{\hat X}, \hat \mu)$, $\hat{\mathbb{Y}}=(\hat Y,d_{\hat Y}, \hat \nu)$ in a similar way to that of Proposition \ref{pro: pgw and gw} but now aiming to have new spaces with equal total mass:

First, introduce auxiliary points $\hat\infty_0,\hat\infty_1,\hat\infty_2$ and set 
\begin{align}
\begin{cases}
\hat{S}&=S\cup 
\{\hat\infty_0,\hat\infty_1,\hat\infty_2\},\nonumber\\
\hat{X}&=X\cup \{\hat\infty_0,\hat\infty_1,\hat\infty_2\},\nonumber \\
\hat{Y}&=Y\cup \{\hat\infty_0,\hat\infty_1,\hat\infty_2\}.\nonumber
\end{cases}
\end{align}
Define $\hat\sigma, \hat\mu,\hat\nu$ as follows: 
\begin{equation}
\begin{cases} 
\hat\sigma&=\sigma+|\mu|\delta_{\hat\infty_1}+|\nu|\delta_{\hat\infty_2},\\
\hat\mu&=\mu+|\sigma|\delta_{\hat\infty_0}+|\nu|\delta_{\hat\infty_2},\\
\hat\nu&=\nu+|\sigma|\delta_{\hat\infty_0}+|\mu|\delta_{\hat\infty_1}.
\end{cases}\label{pf:sigma_hat,mu_hat,nu_hat}    
\end{equation}
Note that $\hat\sigma$ is not supported on point $\hat\infty_0$, similarly, $\hat\mu$ is not supported on $\hat\infty_1$, $\hat\nu$ is not supported on $\hat\infty_2$.
In addition, we have $|\hat\mu|=|\hat\nu|=|\hat\sigma|=|\mu|+|\nu|+|\sigma|$. 
(For a similar idea in classical unbalanced optimal transport, see, for example, \cite{heinemann2023kantorovich}.) 

Finally, define $d_{\hat{S}}:\hat{S}^2\to \mathbb{R}\cup\{\infty\}$ as follows: 
\begin{align}
d_{\hat{S}}(s,s')=\begin{cases}
d_S(s,s') &\text{if }(s,s')\in S^2,\\
\infty &\text{elsewhere. }
\end{cases} \label{pf:d_S_hat}
\end{align}
Note, $d_{\hat{S}}(\cdot,\cdot)$ is not a rigorous metric in $\hat{S}$ since we allow $d_{\hat{S}}=\infty$. 
Similarly, define 
$d_{\hat{X}},d_{\hat{Y}}$. 
As a result, we have constructed new spaces
\begin{align}
\hat{\mathbb{S}}=(\hat{S},d_{\hat{S}},\hat\sigma), \quad \hat{\mathbb{X}}=(\hat{X},d_{\hat{X}},\hat\mu), \quad  \hat{\mathbb{Y}}=(\hat{Y},d_{\hat{Y}},\hat\nu) \label{eq:space_hat_pgw_gw}.
\end{align}

We define the following mapping $D_\lambda: (\mathbb{R}\cup\{\infty\})\times (\mathbb{R}\cup\{\infty\})\to \mathbb{R}_+$: 
\begin{align}\label{eq:D_lambda}
    D^p_{\lambda}(r_1,r_2)=
    \begin{cases}
    |r_1-r_2|^p &\text{if }r_1,r_2<\infty, \\ 
    \lambda  &\text{if }r_1=\infty, r_2<\infty \text{ or vice versa},\\
    0 &\text{if }r_1=r_2=\infty.
    \end{cases}
\end{align}
Note that $D_\lambda$ is not a rigorous metric since it may sometimes violate triangle inequality. See the following lemma for a detailed and precise explanation. 

\begin{lemma}\label{lem:D_lambda_metric}
Let $D_\lambda(\cdot,\cdot)$ denote the function defined in \eqref{eq:D_lambda}. 
For any 
$r_0,r_1,r_2\in \mathbb{R}\cup\{\infty\}$, we have the following: 
\begin{itemize}
    \item $D_\lambda(r_1,r_2)\ge0$. $D_\lambda(r_1,r_2)=0$ if and only if  $r_1=r_2$, where $r_1=r_2$ denotes that 
    $r_1=r_2\in \mathbb{R}$ or $r_1=r_2=\infty$. 
    
    \item Except the case $r_1,r_2\in \mathbb{R},r_0=\infty$, for all other cases, we have $$D_\lambda(r_1,r_2)\leq D_\lambda(r_1,r_0)+D_\lambda(r_2,r_0).$$ 
\end{itemize}

\end{lemma}
\begin{proof}[Proof of Lemma \ref{lem:D_lambda_metric}]
It is straightforward to verify $D_\lambda(\cdot,\cdot)\ge 0$.

Now, consider $r_0,r_1,r_2\in \mathbb{R}\cup\{\infty\}$.
If $r_1=r_2\in \mathbb{R}$ or $r_1=r_2=\infty$, 
we have 
$D_\lambda(r_1,r_2)=0$. 
Otherwise, $D_\lambda(r_1,r_2)>0$. 
So, $D_\lambda(r_1,r_2)=0$ if and only if $r_1=r_2$.

For the second item, we have the following cases: 

Case 1:  $r_1,r_2,r_0\in \mathbb{R}$, 
\begin{align}
D_\lambda(r_1,r_2) 
&=|r_1-r_2| \nonumber\\
&\leq |r_1-r_2|+|r_2-r_0| \nonumber\\
&=D_\lambda(r_0,r_1)+D_\lambda(r_0,r_2) \nonumber
\end{align}

Case 2: $r_1,r_2\in \mathbb{R},r_0=\infty$. We do not need to verify the inequality in this case.

Case 3: $r_1\in \mathbb{R},r_2,r_0=\infty$, or $r_1=\infty, r_2\in \mathbb{R}, r_0=\infty$. 
In this case, we have 
\begin{align}
D_\lambda(r_1,r_2)=D_\lambda(r_1,r_0)=\sqrt{\lambda}, D_\lambda(r_2,r_0)=0\nonumber
\end{align}
and it is straightforward to verify the inequality. 

Case 4: $r_1,r_2=\infty, r_3\in \mathbb{R}$. In this case, we have 
$D_\lambda(r_1,r_2)=0\leq D_\lambda(r_0,r_1)+D_\lambda(r_0,r_2)$. 

Case 5: $r_1,r_2,r_0=\infty$. 
In this case, we have 
\begin{align}
D_\lambda(r_1,r_2)=D_\lambda(r_1,r_0)=D_\lambda(r_2,r_0)=0\nonumber
\end{align}
and it is straightforward to verify the inequality. 
\end{proof}

We construct the following \textit{generalized GW problem}: 
\begin{align}
GW^p_{\lambda,q}(\hat{\mathbb{X}},\hat{\mathbb{Y}})&:=\inf_{\hat\gamma\in\Gamma(\hat\mu,\hat\nu)}\underbrace{\int_{(\hat X\times \hat Y)^2}D_\lambda^p(d^q_{\hat{X}}(x,x'),d^q_{\hat{Y}}(y,y')) \, d\hat \gamma^{\otimes 2}}_{\hat C(\hat\gamma;\lambda,\hat\mu,\hat\nu)} \label{eq: GW general}.
\end{align}
Similarly, we define $GW^p_{\lambda,q}(\hat{\mathbb{X}},\hat{\mathbb{S}})$, and  $GW^p_{\lambda,q}(\hat{\mathbb{S}},\hat{\mathbb{Y}})$.

The mapping \eqref{eq: ot and opt plan} is modified as: 
\begin{align}
&\Gamma_\leq(\sigma,\mu)\ni \gamma^{01}\mapsto\hat\gamma^{01} \in \Gamma(\hat\sigma,\hat\mu), \nonumber\\
&\hat\gamma^{01}:=\gamma^{01}+(\sigma-\gamma_1^{01})\otimes\delta_{\hat\infty_0}+\delta_{\hat\infty_1}\otimes(\mu-\gamma_2^{01})+|\gamma|\delta_{\hat\infty_1,\hat\infty_0}+|\nu|\delta_{\hat\infty_2,\hat\infty_2};\nonumber \\
&\Gamma_\leq(\sigma,\nu)\ni \gamma^{02}\mapsto\hat\gamma^{02} \in \Gamma(\hat\sigma,\hat\nu), \nonumber\\
&\hat\gamma^{02}:=\gamma^{02}+(\sigma-\gamma_1^{02})\otimes\delta_{\hat\infty_0}+\delta_{\hat\infty_2}\otimes(\nu-\gamma_2^{02})+|\gamma|\delta_{\hat\infty_2,\hat\infty_0}+|\mu|\delta_{\hat\infty_1,\hat\infty_1}; \nonumber \\
&\Gamma_\leq(\mu,\nu)\ni \gamma^{12}\mapsto\hat\gamma^{12}\in\Gamma(\hat\mu,\hat\nu), \nonumber\\
&\hat\gamma^{12}:=\gamma^{12}+(\mu-\gamma_1^{12})\otimes\delta_{\hat\infty_1}+\delta_{\hat\infty_2}\otimes(\nu-\gamma_2^{12})+|\gamma|\delta_{\hat\infty_2,\hat\infty_1}+|\mu|\delta_{\hat\infty_0,\hat\infty_0} \label{eq:gamma_pgw_gw}.
\end{align}
\begin{remark}
It is straightforward to verify the above mappings are well-defined. 
In addition, we can observe that, for each $\gamma^{01}\in\Gamma_\leq(\sigma,\mu),\gamma^{02}\in\Gamma_\leq(\sigma,\nu),\gamma^{12}\in\Gamma_\leq(\mu,\nu)$,
\begin{align}
&\hat{\gamma}^{01}(\{\hat\infty_2\}\times X)=\hat{\gamma}^{01}(S\times \{\hat\infty_2\})=0,  \label{pf:gamma01_cond} \\
&\hat{\gamma}^{02}(\{\hat\infty_1\}\times Y)=\hat{\gamma}^{02}(S\times \{\hat\infty_1\})=0,  \label{pf:gamma02_cond} \\
&\hat{\gamma}^{12}(\{\hat\infty_0\}\times Y)=\hat{\gamma}^{12}(X\times \{\hat\infty_0\})=0.  \nonumber %\label{pf:gamma12_cond} 
\end{align}
\end{remark}

\begin{proposition}\label{pro:pgw_gw_metric}
If $\gamma^{12}\in\Gamma_\leq(\mu,\nu)$ is optimal in PGW problem $PGW_{\lambda,q}^p(\mathbb{X},\mathbb{Y})$, then $\hat\gamma^{12}$ defined in \eqref{eq:gamma_pgw_gw} is optimal in generalized GW problem $GW_{\lambda,q}^p(\hat{\mathbb{X}},\hat{\mathbb{Y}})$.
Furthermore, 
$\hat{C}(\hat\gamma^{12};\lambda,\hat{\mu},\hat{\nu})=C(\gamma^{12};\lambda,\mu,\nu),$
and thus, $$PGW_{\lambda,q}^p(\mathbb{X},\mathbb{Y})=
GW_{\lambda,q}^p(\hat{\mathbb{X}},\hat{\mathbb{Y}}).$$
\end{proposition}
\begin{proof}[Proof of Proposition \ref{pro:pgw_gw_metric}]
For each $\gamma\in\Gamma_{\leq}(\mu,\nu)$, define $\hat\gamma$ by \eqref{eq:gamma_pgw_gw}. 

Note that if we merge the points $\hat\infty_1,\hat\infty_2,\hat\infty_3$ as $\hat\infty$, i.e. $$\hat\infty=\hat\infty_1=\hat\infty_2=\hat\infty_3,$$
the value $\hat{C}(\hat\gamma;\lambda,\hat\mu,\hat\nu)$ will not change. Thus, we merge these three auxiliary points. 

We have: 
\begin{align}
& \hat{C}(\hat\gamma;\lambda,\hat\mu,\hat\nu)=\int_{(\hat X\times \hat Y)^2}D_\lambda^p(d^q_{\hat{X}}(x,x'),d^q_{\hat{Y}}(x,x'))d\hat \gamma^{\otimes 2}\nonumber \\
 &=\int_{(X\times Y)^2}|d^q_X(x,x')-d^q_Y(y,y')|^p d\hat\gamma^{\otimes 2}+\int_{(\{\hat\infty\}\times Y)^2 }\lambda d\hat\gamma^{\otimes 2}+\int_{(X\times \{\hat \infty\})^2}\lambda\hat\gamma^{\otimes 2} \notag\\
 & \quad +2\int_{(\{\hat\infty\}\times Y)\times (X\times Y)}\lambda d\hat\gamma^{\otimes 2}+2\int_{(X\times \{\hat \infty\})\times (X\times Y)}\lambda d\hat{\gamma}^{\otimes 2}+\int_{(\{\hat \infty\}\times\{\hat \infty\})^2}D_\lambda^p(\infty,\infty)d\hat\gamma^{\otimes 2}\nonumber\\
 &\quad  +2\int_{(\{\hat \infty\}\times Y)\times (X\times \{\hat \infty \})}D_\lambda^p(\infty,\infty)d\hat\gamma^{\otimes 2}+2\int_{(\{\hat \infty\}\times \{\hat \infty\})\times(X\times Y)}D_\lambda^p(\infty,\infty)d\hat\gamma^{\otimes 2}\notag\\
 &\quad +2\int_{(\{\hat \infty\}\times\{Y\})\times \{\hat\infty\}^2}D_\lambda^p(\infty,\infty)d\hat\gamma^{\otimes 2}+2\int_{(X\times\{\hat \infty\})\times \{\hat\infty\}^2}D_\lambda^p(\infty,\infty)d\hat\gamma^{\otimes 2}\notag\\
 &=\int_{(X\times Y)^2}|d^q_X(x,x')-d^q_Y(y,y')|^p d\gamma^{\otimes 2}\notag\\
 &\quad +2\lambda (|\nu|-|\gamma|)|\gamma|+\lambda (|\nu|-|\gamma|)^2+2\lambda (|\mu|-|\gamma|)|\gamma|+\lambda(|\mu|-|\gamma|)^2\nonumber \\ 
 &=\int_{(X\times Y)^2}|d^q_X(x,y')-d^q_Y(y,y')|^p  \, d\gamma^{\otimes 2})+\lambda(|\nu^2|+|\mu|^2-2|\gamma|^2)=C(\gamma;\lambda,\mu,\nu).\notag
\end{align}
As we merged the points $\hat\infty_1,\hat\infty_2,\hat\infty_3$, by  \citet[Proposition B.1.]{bai2022sliced}, 
the mapping $\gamma\mapsto \hat\gamma$ defined in \eqref{eq:gamma_pgw_gw} is a bijection. 
Then, if $\gamma\in\Gamma_\leq(\mu,\nu)$ is optimal for the PGW problem $PGW_{\lambda,q}^p(\mathbb{X},\mathbb{Y})$ (defined in \eqref{eq:pgw}),  $\hat\gamma\in\Gamma(\hat\mu,\hat\nu)$ is optimal for generalized GW problem $GW_{\lambda,q}^p(\hat{\mathbb{X}},\hat{\mathbb{Y}})$ (defined in 
\eqref{eq: GW general}).  
Therefore, $$GW_{\lambda,q}^p(\hat{\mathbb{X}},\hat{\mathbb{Y}})=PGW_{\lambda,q}^p(\mathbb{X},\mathbb{Y}).$$
\end{proof}

\begin{proposition}[Triangle inequality for $GW_{\lambda,q}^p(\cdot,\cdot)$]\label{pro:ggw_tri}    
Consider the generalized GW problem \eqref{eq: GW general}. %We construct the three mm-spaces $\hat{\mathbb{S}},\hat{\mathbb{X}},\hat{\mathbb{Y}}$ by \eqref{eq:space_hat_pgw_gw}.
Then, for any $p\in[1,\infty)$, we have 
$$GW_{\lambda,q}^p(\hat{\mathbb{X}},\hat{\mathbb{Y}})\leq GW_{\lambda,q}^p(\hat{\mathbb{S}},\hat{\mathbb{X}})+GW_{\lambda,q}^p(\hat{\mathbb{S}},\hat{\mathbb{Y}}).$$
\end{proposition}

\begin{proof}[Proof of Proposition \ref{pro:ggw_tri}]
We prove the case $p=2$. For general $p\ge1$, it can be proved similarly. 

Choose an optimal $\gamma^{12}\in\Gamma_\leq(\mu,\nu)$ for $PGW_{\lambda,q}^2(\mathbb{X},\mathbb{Y})$, an optimal $\gamma^{01}\in \Gamma_\leq(\sigma,\mu)$ for $PGW_{\lambda,q}^2(\mathbb{S},\mathbb{X})$, and an optimal $\gamma^{02}\in\Gamma_\leq(\sigma,\nu)$ for 
$PGW_{\lambda,q}^2(\mathbb{S},\mathbb{Y})$. Construct $\hat\gamma^{12},\hat\gamma^{01},\hat\gamma^{02}$ by \eqref{eq:gamma_pgw_gw}. 

By Proposition \ref{pro:pgw_gw_metric}, we have that
$\hat{\gamma}^{12}$, $\hat{\gamma}^{01}$, $\hat{\gamma}^{02}$ are optimal for $GW^2_{\lambda,q}(\hat{\mathbb{X}},\hat{\mathbb{Y}})$, $GW^2_{\lambda,q}(\hat{\mathbb{S}},\hat{\mathbb{X}})$, $GW^2_{\lambda,q}(\hat{\mathbb{S}},\hat{\mathbb{Y}})$, respectively. 

Define canonical projection mapping 
\begin{align}
\pi_{0,1}: &(\hat S\times \hat X\times \hat Y)\to(\hat S\times \hat X) \nonumber \\
&(s,x,y)\mapsto (s,x)\nonumber. 
\end{align}
Similarly, we define $\pi_{0,2},\pi_{1,2}$. 

By \textit{gluing lemma} (see Lemma 5.5 \cite{santambrogio2015optimal}), there exists $\hat\gamma\in \mathcal{M}_+(\hat{S}\times \hat{X}\times \hat{Y})$, such that $(\pi_{0,1})_\#\hat\gamma=\hat\gamma^{01},(\pi_{0,2})_\#\hat\gamma=\hat\gamma^{02}$.  
Thus, $(\pi_{1,2})_\#\hat\gamma$ is a coupling between $\hat\mu,\hat\nu$. We have 
\begin{align}
GW^2_{\lambda,q}(\mathbb{X},\mathbb{Y})&=\int_{(\hat X\times \hat Y)^2}D^2_\lambda(d^q_{\hat{X}}(x,x'),d^q_{\hat{Y}}(y,y'))d(\hat\gamma^{12})^{\otimes2}\nonumber\\
&\leq \int_{(\hat{S}\times\hat X\times \hat Y)^2}D^2_\lambda(d^q_{\hat{X}}(x,x'),d^q_{\hat{Y}}(y,y'))d\hat{\gamma}^{\otimes2}.\label{pf:pgw_tri_1}
\end{align}
The inequality holds since $(\pi_{1,2})_\#\hat\gamma,\hat\gamma^{12}\in\Gamma(\hat\mu,\hat\nu)$, and $\hat\gamma^{12}$ is optimal. 

Next, we will show that 
\begin{align}
&\int_{(\hat{S}\times\hat X\times \hat Y)^2}D^2_\lambda(d^q_{\hat{X}}(x,x'),d^q_{\hat{Y}}(y,y'))d\hat{\gamma}^{\otimes2}\nonumber\\
&\leq \int_{(\hat{S}\times\hat X\times \hat Y)^2}(D_\lambda(d^q_{\hat{S}}(s,s'),d^q_{\hat{X}}(x,x'))+D_\lambda(d^q_{\hat{S}}(s,s'),d^q_{\hat{Y}}(y,y')))^2d\hat{\gamma}^{\otimes2}.\nonumber
\end{align}
Let $((s,x,y),(s',x',y'))\in (\hat{S},\hat{X},\hat{Y})^2$, and assume that 
\begin{align}
D_\lambda(d_{\hat X}^2(x,x'),d_{\hat Y}^2(y,y'))>D_\lambda(d_{\hat{S}}^2(s,s'),d_{\hat X}^2(x,x'))+D_\lambda(d_{\hat{S}}^2(s,s'),d_{\hat Y}^2(y,y')). \label{pf:tri_inq_violate}
\end{align}

By Lemma \ref{lem:D_lambda_metric},  \eqref{pf:tri_inq_violate} implies  $d_{\hat X}(x,x'),d_{\hat Y}(y,y')\in \mathbb{R},d_{\hat S}(s,s')=\infty$. 
Thus, 
by definition \eqref{pf:d_S_hat}, it also implies \begin{equation}
 (x,x')\in X^2, (y,y')\in Y^2, (s,s')\in \hat{S}^2\setminus S^2. \label{pf:pgw_tri_cond}   
\end{equation}
Define the following sets: 
\begin{align}
A_{\alpha}&=\hat{S}\times X\times Y,\nonumber\\
A_0&=\{\hat\infty_0\}\times X\times Y, \nonumber\\
A_1&=\{\hat\infty_1\}\times X\times Y, \nonumber\\
A_2&=\{\hat\infty_2\}\times X\times Y.\nonumber 
\end{align}
Notice that, \eqref{pf:tri_inq_violate} 
 $\Longrightarrow$ \eqref{pf:pgw_tri_cond} is equivalent to 
\begin{equation}
\eqref{pf:tri_inq_violate} \Longrightarrow
((s,x,y),(s,x',y'))\in A:=\bigcup_{i=0}^2 (A_i\times A_\alpha)\cup \bigcup_{i=0}^2(A_\alpha\times A_i). \label{pf:set_A}    
\end{equation}
Next, we will show $\hat \gamma^{\otimes2}(A)=0$. Indeed,
\begin{align}
&\hat\gamma(A_0)\leq \hat\gamma(\{\infty_0\}\times \hat X\times \hat{Y})=\hat\sigma(\{\infty_0\})=0\nonumber \qquad\text{by definition \eqref{pf:sigma_hat,mu_hat,nu_hat} of $\hat\sigma$ , }\nonumber\\
&\hat\gamma(A_1)\leq \hat\gamma(\{\infty_1\}\times  \hat X\times Y)=\hat\gamma^{02}(\{\hat\infty_1\times Y\})=0 \qquad\text{by  }\eqref{pf:gamma02_cond}, \nonumber \\
&\hat\gamma(A_2)\leq \hat\gamma(\{\infty_2\}\times   X\times \hat Y)=\hat\gamma^{01}(\{\hat\infty_2\times X\})=0 \qquad\text{by }\eqref{pf:gamma01_cond} \nonumber. 
\end{align}
Thus, $\hat{\gamma}^{\otimes2}(A)=0$. By considering 
$B=(\hat{S}\times\hat{X}\times {Y})^2\setminus A,$
we obtain
\begin{align}
&\int_{(\hat{S}\times \hat{X}\times \hat{Y})^2}D_\lambda^2(d^q_{\hat{X}}(x,x'),d^q_{\hat{Y}}(y,y'))d\gamma^{\otimes2}\nonumber\\
&=\int_{B}D_\lambda^2(d^q_{\hat{X}}(x,x'),d^q_{\hat{Y}}(y,y'))d\gamma^{\otimes2}\qquad\text{since }\gamma^{\otimes2}(A)=0\nonumber \\
&\leq \int_{B}\left(D_\lambda(d^q_{\hat{S}}(s,s'),d^q_{\hat{X}}(x,x')+D_\lambda(d^q_{\hat{S}}(s,s'),d^q_{\hat{Y}}(y,y'))\right)^2d\gamma^{\otimes2}\qquad\text{by }\eqref{pf:set_A}\nonumber\\
&\leq \int_{(\hat{S}\times\hat{X}\times\hat{Y})^2}\left(D_\lambda(d^q_{\hat{S}}(s,s'),d^q_{\hat{X}}(x,x')+D_\lambda(d^q_{\hat{S}}(s,s'),d^q_{\hat{Y}}(y,y'))\right)^2d\gamma^{\otimes2}.\label{pf:pgw_tri_2}
\end{align}

Following \eqref{pf:pgw_tri_1} and \eqref{pf:pgw_tri_2}, we have 
\begin{align}
&GW^2_{\lambda,q}(\hat{\mathbb{X}},\hat{\mathbb{Y}})\leq\left(\int_{(\hat{S}\times\hat X\times \hat Y)^2}D^2_\lambda(d^q_{\hat{X}}(x,x'),d^q_{\hat{Y}}(y,y'))d\hat{\gamma}^{\otimes2}\right)^{1/2}\nonumber\\
&\leq \left(\int_{(\hat{S}\times\hat{X}\times\hat{Y})^2}\left(D_\lambda(d^q_{\hat{S}}(s,s'),d^q_{\hat{X}}(x,x'))+D_\lambda(d^q_{\hat{S}}(s,s'),d^q_{\hat{Y}}(y,y'))\right)^2d\gamma^{\otimes2} \right)^{1/2}\nonumber\\
&\leq \left(\int_{(\hat{S}\times\hat{X}\times\hat{Y})^2}D^2_\lambda(d^q_{\hat{S}}(s,s'),d^q_{\hat{X}}(x,x'))d\gamma^{\otimes2}\right)^{1/2}\nonumber\\
&\qquad+\left(\int_{(\hat{S}\times\hat{X}\times\hat{Y})^2}D^2_\lambda(d^q_{\hat{S}}(s,s'),d^q_{\hat{Y}}(y,y'))d\gamma^{\otimes2}\right)^{1/2}\label{pf:Minkowski_inq}\\
&=\left(\int_{(\hat{S}\times\hat{X}\times\hat{Y})^2}D^2_\lambda(d^q_{\hat{S}}(s,s'),d^q_{\hat{X}}(x,x'))d(\gamma^{01})^{\otimes2}\right)^{1/2}\nonumber\\
&\qquad +\left(\int_{(\hat{S}\times\hat{X}\times\hat{Y})^2}D^2_\lambda(d^q_{\hat{S}}(s,s'),d^q_{\hat{Y}}(y,y'))d(\gamma^{02})^{\otimes2}\right)^{1/2}\nonumber\\
&=GW^2_{\lambda,q}(\hat{\mathbb{S}},\hat{\mathbb{X}})+GW^2_{\lambda,q}(\hat{\mathbb{S}},\hat{\mathbb{Y}}), \nonumber%\label{pf:gw_ineq_final}
\end{align}
where in the third inequality \eqref{pf:Minkowski_inq} we used the Minkowski inequality in $L^2((\hat S\times \hat X\times \hat Y)^2,\hat\gamma^{\otimes 2})$.
\end{proof}

Now, we can complete the proof of Proposition \ref{pro: pgw is metric}: 
By the Propositions \ref{pro:pgw_gw_metric}, we have $$PGW^p_{\lambda,q}(\mathbb{X},\mathbb{Y})=GW^p_{\lambda,q}(\hat{\mathbb{X}},\hat{\mathbb{Y}})$$ 
and similarly for $PGW^p_{\lambda,q}$ and $(\mathbb{S},\mathbb{X}),PGW^p_{\lambda,q}(\mathbb{S},\mathbb{Y})$. 
By the Proposition \ref{pro:ggw_tri}, $GW^p_{\lambda,q}(\cdot,\cdot)$ satisfies the triangle inequality, thus
We complete the proof: 
\begin{align}
PGW^p_{\lambda,q}(\mathbb{X},\mathbb{Y})&=GW^p_{\lambda,q}(\hat{\mathbb{X}},\hat{\mathbb{Y}})\nonumber\\
&\leq GW^p_{\lambda,q}(\hat{\mathbb{S}},\hat{\mathbb{X}})+GW^p_{\lambda,q}(\hat{\mathbb{S}},\hat{\mathbb{Y}})\nonumber\\
&=PGW^p_{\lambda,q}(\mathbb{S},\mathbb{X})+PGW^p_{\lambda,q}(\mathbb{S},\mathbb{Y}).\nonumber
\end{align}

%\textbf{Verify the triangle inequality of $GW^p_{\lambda,q}(\cdot,\cdot)$}

% \begin{proposition}\label{pro:ggw_tri}    
% Consider the generalized GW problem \eqref{eq: GW general}. We construct the three mm-spaces $\hat{\mathbb{S}},\hat{\mathbb{X}},\hat{\mathbb{Y}}$ by \eqref{eq:space_hat_pgw_gw}.

% Then, for any $p\in[1,\infty)$, we have 
% $$GW_\lambda(\hat{\mathbb{X}},\hat{\mathbb{Y}})\leq GW_\lambda(\hat{\mathbb{S}},\hat{\mathbb{X}})+GW_\lambda(\hat{\mathbb{S}},\hat{\mathbb{Y}}).$$

% \end{proposition}

\section{Proof of Proposition \ref{prop: limit for lambda}: PGW converges to GW as $\lambda\to\infty$.}\label{sec: limit lambda}
In the main text, we set $\lambda\in \mathbb{R}$. In this section, we discuss the limit case when $\lambda\to \infty$. 

\begin{lemma}\label{lem:gamma_mass}
Suppose $|\mu|\leq |\nu|$, for each $\gamma\in \Gamma_\leq(\mu,\nu)$, there exists $\gamma'\in \Gamma_\leq(\mu,\nu)$ such that $\gamma\leq \gamma'$ and $(\pi_1)_\#\gamma'=\mu$. 
\end{lemma}
\begin{proof}
Let $\gamma\in\Gamma_\leq(\mu,\nu)$. 

If $|\gamma|=|\mu|$, then we have $(\pi_1)_\#\gamma=\mu$.

If $|\gamma|<|\mu|$, let $\mu^r=\mu-(\pi_1)_\#\gamma,\nu^r=\nu-(\pi_2)_\#\gamma$. We have that $\mu^r,\nu^r$ are non-negative measures, with $|\mu^r|=|\mu|-|\gamma|>0$. 
If we define 
\begin{equation}
  \gamma':=\gamma+\frac{1}{|\nu|-|\gamma|}\mu^r\otimes \nu^r,\nonumber%\label{eq:large_gamma}  
\end{equation}
we obtain $\gamma\leq \gamma'$. 
In addition, we have: 
\begin{align}
(\pi_1)_\#\gamma'&=(\pi_1)_\#\gamma+\mu^r \frac{|\nu^r|}{|\nu|-|\gamma|}=(\pi_1)_\#\gamma+\mu^r=\mu,\nonumber\\
(\pi_2)_\#\gamma'&=(\pi_2)_\#\gamma+\nu^r \frac{|\mu^r|}{|\nu|-|\gamma|}\leq (\pi_2)_\#\gamma+\nu^r \frac{|\nu^r|}{|\nu|-|\gamma|}=\nu.\nonumber
\end{align}
Thus, $\gamma'\in\Gamma_\leq(\mu,\nu)$ and $(\pi_1)_\#\gamma'= \mu$. 
\end{proof}

\begin{lemma}\label{lem:large_lambda}
Given general mm-spaces $\mathbb{X}=(X,d_X,\mu)$, $\mathbb{Y}=(Y,d_Y,\nu)$, where $\mu,\nu$ are supported on bounded sets (in general, it is assumed that $X$ and $Y$ are compact, and that $\text{supp}(\mu)=X$, $\text{supp}(\nu)=Y$), consider the problem the problem $PGW^L_{\lambda,q}(\mathbb{X},\mathbb{Y})$
with $L(r_1,r_2)$ a continuous functions. If $\lambda$ is sufficiently large, in particular:
$$\lambda\ge \max_{\substack{x,x'\in \text{supp}(\mu)\\ 
y,y'\in \text{supp}(\nu)}}L(d_X(x,x'),d_Y(y,y')),$$
then there exists optimal $\gamma$ for $PGW_\lambda(\mathbb{X},\mathbb{Y})$ such that $|\gamma|=\min(|\mu|,|\nu|)$.

Furthermore, when $$\lambda> \max_{\substack{x,x'\in \text{supp}(\mu)\\ 
y,y'\in \text{supp}(\nu)}}L(d_X(x,x'),d_Y(y,y')),$$
the for all optimal $\gamma\in\Gamma_\leq(\mu,\nu)$, we have $|\gamma|=\min(|\mu|,|\nu|)$. 
\end{lemma}
\begin{proof}
We prove it for $q=1$. For a general $q\ge 1$, it can be proved similarly. 

Without loss of generality, suppose $|\mu|\leq |\nu|$. 

Since $\mu,\nu$ are supported on bounded sets, there exists $A=[0,M]$ such that  $d_X(x,x'),d_Y(y,y')\in A$ for all $x,x'\in \text{supp}(\mu),y,y'\in \text{supp}(\nu)$.

Thus, the restriction of $L$ on $A^2$, denoted as $L_{A^2}$, is continuous on $A^2$, and thus it is bounded. 
So, consider $$\mathrm{m}:=\max_{r_1,r_2\in A}(L(r_1,r_2))\ge L(d_X(x,x'),d_Y(y,y')),\quad \forall x,x'\in\text{supp}(\mu),y,y'\in\text{supp}(\nu).$$

Suppose $2\lambda\ge \mathrm{m}+1$, and assume that there exists a optimal $\gamma\in\Gamma_\leq (\mu,\nu)$ such that $|\gamma|<|\mu|$. By Lemma \ref{lem:gamma_mass}, there exists $\gamma'$ such that $\gamma\leq \gamma',(\pi_1)_\#\gamma'=\mu$. Thus, we have
\begin{align}
C(\gamma';\lambda,\mu,\nu)-C(\gamma;\lambda,\mu,\nu)\nonumber&=\int_{(X\times Y)} L(d_X(x,x'),d_Y(y,y'))-2\lambda  \, d((\gamma')^{\otimes2}-(\gamma)^{\otimes2})\nonumber\\
&\leq \int_{(X\times Y)}\mathrm{m}-2\lambda  \, d((\gamma')^{\otimes2}-(\gamma)^{\otimes2})\nonumber\\
&=- (|\gamma'|^{2}-|\gamma|^{2})=-(|\mu|^2-|\gamma|^2)< 0,\nonumber
\end{align}
which is contradiction since $\gamma$ is optimal, and so we have completed the proof. 
\end{proof}

\begin{lemma}\label{lem:pgw_leq_gw} Consider probability mm-spaces $\mathbb{X}=(X,d_X,\mu)$, $\mathbb{Y}=(Y,d_Y,\nu)$, that is, with $|\mu|=|\nu|=1$. Then, for each $\lambda>0$, we have 
\begin{align}
PGW_{\lambda,q}^L(\mathbb{X},\mathbb{Y})\leq GW^L_q(\mathbb{X},\mathbb{Y}).\nonumber
\end{align}
\end{lemma}
\begin{proof}
In this setting, we have $\Gamma(\mu,\nu)\subset \Gamma_\leq(\mu,\nu)$, and thus
\begin{align}
&PGW^L_{\lambda,q}(\mathbb{X},\mathbb{Y})\nonumber\\
&=\inf_{\Gamma\in\Gamma_\leq(\mu,\nu)}\int_{(X\times Y)^2}L(d_X^q(x,x'),d_Y^q(y,y'))d\gamma^{\otimes 2}+\lambda(|\mu|^2+|\nu|^2-2|\gamma|^2)\nonumber\\
&\leq \inf_{\gamma\in\Gamma(\mu,\nu)}\int_{(X\times Y)^2}L(d_X^q(x,x'),d_Y^q(y,y'))+\lambda(|\mu|^2+|\nu|^2-2|\gamma|^2)d\gamma^{\otimes 2}\nonumber\\
&=\inf_{\gamma\in\Gamma(\mu,\nu)}\int_{(X\times Y)^2}L(d_X^q(x,x'),d_Y^q(y,y'))d\gamma^{\otimes 2}\nonumber\\
&=GW^L_q(\mathbb{X},\mathbb{Y})\nonumber. 
\end{align}
% Choose $\gamma\in\Gamma(\sigma,\mu^1)$, by lemma \ref{lem:gamma_mass}, there exists $\gamma'\in\Gamma(\sigma,\mu^1)$ such that $\gamma\leq\gamma'$, $(\pi_1)_\#\gamma'=\sigma$. 
% Since $|\gamma|=|\sigma|=|\mu|=1$, and $\gamma'\in\Gamma_\leq(\sigma,\mu)$, we have $(\pi_2)_\#\gamma'=\mu.$
% Let
% \begin{align}
% C(\gamma;\lambda)&=\inf_{\gamma\in\Gamma_\leq(\sigma,\mu)}\int_{(S\times X^1)^2}|d_S(s,s')-d_X(x,x)|^2 d\gamma^{\otimes 2}+\lambda(|(\sigma|^2+|\mu|^2-2|\gamma|^2)\nonumber\\
% C(\gamma') &=\inf_{\gamma\in\Gamma_\leq(\sigma,\mu^1)}\int_{(S\times X^1)^2}|d_S(s,s')-d_X(x^1,x^1{'})|^2 d(\gamma')^{\otimes 2}\nonumber\\
% &=\inf_{\gamma\in\Gamma_\leq(\sigma,\mu^1)}\int_{(S\times X^1)^2}|d_S(s,s')-d_X(x^1,x^1{'})|^2 d(\gamma')^{\otimes 2}+\lambda(|(\sigma|^2+|\mu|^2-2|\gamma'|^2)\nonumber
% \end{align}
% Thus we have 
% \begin{align}
% &C(\gamma;\lambda)-C(\gamma)\nonumber\\
% &=\inf_{\gamma\in\Gamma_\leq(\sigma,\mu^1)}\int_{(S\times X^1)^2}\left(|d_S(s,s')-d_X(x^1,x^1{'})|^2-2\lambda\right) (d\gamma^{\otimes 2}- d(\gamma')^{\otimes 2})\nonumber\\
% &\leq 0. 
% \end{align}
% Take the infimum on both sides over $\Gamma_\leq(\sigma,\mu)$ and $\Gamma(\sigma,\mu)$ respectively, combing with the fact $\Gamma(\sigma,\mu)\subset\Gamma_\leq(\sigma,\mu)$, we obtain 
% $$\widehat{PGW}(\mathbb{S},\mathbb{X})\leq GW(\mathbb{S},\mathbb{X}),$$
%  and complete the proof. 
\end{proof}

Based on the above properties, we can now prove Proposition \ref{prop: limit for lambda}: 
\begin{proposition}[Generalization of Proposition \ref{prop: limit for lambda}]
Consider general probability mm-spaces $\mathbb{X}=(X,d_X,\mu)$, $\mathbb{Y}=(Y,d_Y,\nu)$, that is, with $|\mu|=|\nu|=1$,   where $X,Y$ are bounded. Assume that $L$ is continuous.
Then 
$$\lim_{\lambda\to \infty}PGW_{\lambda,q}^L(\mathbb{X},\mathbb{Y})=GW^L_q(\mathbb{X},\mathbb{Y}).$$
\end{proposition}
\begin{proof}
When $\lambda$ is sufficiently large, by Lemma \ref{lem:large_lambda}, for each optimal  $\gamma_\lambda\in\Gamma_\leq(\mu,\nu)$ of the minimization problem $PGW^L_{\lambda,q}(\mathbb{X},\mathbb{Y})$, we have $|\gamma_\lambda|=\min(|\mu|,|\nu|)=1$. That is, $\gamma_\lambda\in\Gamma(\mu,\nu)$. Plugging $\gamma_\lambda$ into $C(\gamma_\lambda;\lambda,\mu,\nu)$, we obtain: 
\begin{align}
PGW^L_{\lambda,q}(\mathbb{X},\mathbb{Y})&=\int_{(X\times Y)^2}L(d_X^q(x,x'),d_Y^q(y,y'))d\gamma_\lambda^{\otimes2}+\lambda(1^2+1^2-2\cdot 1^2)\nonumber\\
&=\int_{(X\times Y)^2}L(d_X^q(x,x'),d_Y^q(y,y'))d\gamma_\lambda^{\otimes2}\ge GW(\mathbb{X},\mathbb{Y}).\nonumber
\end{align}
 By Lemma \ref{lem:pgw_leq_gw}, we also have
$PGW^L_{\lambda,q}(\mathbb{X},\mathbb{Y})\leq GW^L_q(\mathbb{X},\mathbb{Y})$ and we complete the proof. 
\end{proof}

\section{Tensor Product Computation}\label{sec: tensor prod}
\begin{lemma}\label{lem: tensor prod}
Given a tensor $M\in \mathbb{R}^{n\times m\times n\times n}$ and $\gamma,\gamma'\in \mathbb{R}^{n\times m}$, the tensor product operator 
$M\circ \gamma$ satisfies the following: 
\begin{enumerate}
    \item[(i)] The mapping $\gamma\mapsto M\circ \gamma$ is linear with respect to $\gamma$.
    \item[(ii)] If $M$ is symmetric, in particular, $M_{i,j,i',j'}=M_{i',j',i,j},\forall i,i'\in[1:n],j,j'\in[1:m]$, then 
    $$\langle M\circ \gamma,\gamma' \rangle_F=\langle M\circ \gamma',\gamma\rangle_F.$$
\end{enumerate}
\end{lemma}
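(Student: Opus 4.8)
\textbf{Proof proposal for Lemma \ref{lem: tensor prod}.}

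The plan is to verify both claims directly from the definition $(M\circ\gamma)_{ij}=\sum_{i',j'}M_{i,j,i',j'}\gamma_{i',j'}$ together with the definition of the Frobenius inner product $\langle A,B\rangle_F=\sum_{i,j}A_{ij}B_{ij}$. Both parts are purely algebraic manipulations of finite sums, so there is no analytic subtlety; the only thing to be careful about is keeping the four index pairs straight and applying the symmetry hypothesis at the right moment.

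For part (i), I would fix $\gamma,\gamma'\in\mathbb{R}^{n\times m}$ and scalars $\alpha,\beta\in\mathbb{R}$, and compute the $(i,j)$ entry of $M\circ(\alpha\gamma+\beta\gamma')$. Since $(\alpha\gamma+\beta\gamma')_{i',j'}=\alpha\gamma_{i',j'}+\beta\gamma'_{i',j'}$, linearity of the (finite) sum $\sum_{i',j'}$ immediately gives
$$
(M\circ(\alpha\gamma+\beta\gamma'))_{ij}=\alpha\sum_{i',j'}M_{i,j,i',j'}\gamma_{i',j'}+\beta\sum_{i',j'}M_{i,j,i',j'}\gamma'_{i',j'}=\alpha(M\circ\gamma)_{ij}+\beta(M\circ\gamma')_{ij},
$$
which is the desired identity entrywise, hence as matrices.

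For part (ii), I would expand the left-hand side using both definitions:
$$
\langle M\circ\gamma,\gamma'\rangle_F=\sum_{i,j}(M\circ\gamma)_{ij}\,\gamma'_{ij}=\sum_{i,j}\sum_{i',j'}M_{i,j,i',j'}\,\gamma_{i',j'}\,\gamma'_{ij}.
$$
Now relabel the summation indices by swapping the pair $(i,j)\leftrightarrow(i',j')$ — a legitimate move since we are summing over all of $[1:n]\times[1:m]$ for each pair — to obtain $\sum_{i,j}\sum_{i',j'}M_{i',j',i,j}\,\gamma_{i,j}\,\gamma'_{i',j'}$. Applying the symmetry hypothesis $M_{i',j',i,j}=M_{i,j,i',j'}$ turns this into $\sum_{i,j}\sum_{i',j'}M_{i,j,i',j'}\,\gamma'_{i',j'}\,\gamma_{i,j}=\sum_{i,j}(M\circ\gamma')_{ij}\,\gamma_{ij}=\langle M\circ\gamma',\gamma\rangle_F$, which completes the proof.

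There is no real obstacle here; the ``hard part'' is merely bookkeeping — making sure the index relabeling is applied consistently and that the symmetry of $M$ is invoked exactly once, between the relabeling step and the reassembly of the sum as $M\circ\gamma'$ contracted against $\gamma$. One could alternatively phrase part (ii) by noting that $\langle M\circ\gamma,\gamma'\rangle_F=\sum_{i,j,i',j'}M_{i,j,i',j'}\gamma_{i',j'}\gamma'_{i,j}$ is a bilinear form whose coefficient tensor is symmetric under $(i,j)\leftrightarrow(i',j')$, so it is manifestly symmetric in the pair $(\gamma,\gamma')$; this is the same computation packaged more conceptually.
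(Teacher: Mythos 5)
Your proposal is correct and follows essentially the same route as the paper's proof: part (i) by entrywise linearity of the finite sum defining $M\circ\gamma$, and part (ii) by expanding the Frobenius product into a quadruple sum, relabeling $(i,j)\leftrightarrow(i',j')$, and invoking the symmetry of $M$ once. Your expansion $\sum_{i,j,i',j'}M_{i,j,i',j'}\gamma_{i',j'}\gamma'_{i,j}$ is in fact the more carefully indexed version of what the paper writes, so nothing further is needed.
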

\begin{proof}$ $

\begin{enumerate}
    \item[(i)] For the first part, consider $\gamma,\gamma'\in \mathbb{R}^{n\times m}$ and $k\in\mathbb{R}$.
For each $i,j\in[1:n]\times [1:m]$, we have 
we have 
\begin{align}
(M\circ (\gamma+\gamma'))_{ij}
&=\sum_{i',j'}M_{i,j,i',j'}(\gamma+\gamma')_{i'j'}\nonumber\\
&=\sum_{i',j'}M_{i,j,i',j'}\gamma_{i'j'}+\sum_{i',j'}M_{i,j,i',j'}\gamma'_{i'j'}\nonumber\\
&=(M\circ \gamma)_{ij}+(M\circ \gamma)_{i'j'},\nonumber\\
(M\circ (k\gamma))_{ij}
&=\sum_{i',j'}M_{i,j,i',j'}(k\gamma)_{ij}\nonumber\\
&=k\sum_{i',j'}M_{i,j,i',j'}\gamma_{ij}\nonumber\\
&=k(M\circ \gamma)_{ij}.\nonumber
\end{align}
Thus, $M\circ (\gamma+\gamma')=M\circ \gamma+M\circ \gamma'$ and $M\circ (k\gamma)=k M\circ \gamma$. Therefore, $\gamma\mapsto M\circ \gamma$ is linear. 

\item[(ii)] %For the second bullet, 
For the second part, we have 
\begin{align}
\langle M\circ \gamma,\gamma' \rangle_F&=\sum_{iji'j'}M_{i,j,i',j',}\gamma_{ij}\gamma'_{i'j'}\nonumber\\
&=\sum_{i,j,i',j'}M_{i',j',i,j}\gamma_{i',j'}\gamma_{i,j}\label{pf: tensor prod ex1}\\
&=\langle M\gamma',\gamma \rangle\nonumber  
\end{align}
where \eqref{pf: tensor prod ex1} follows from the fact that $M$ is symmetric.
\end{enumerate} 
\end{proof}

\section{Another Algorithm for Computing PGW Distance -- Solver 2}\label{sec:solver2}

Our Algorithm 2 for solving the proposed PGW problem is based on a theoretical result that relates GW and PGW. The details of our computational method, as well as the proof of Proposition \ref{pro: pgw and gw} stated below,  are provided in Appendix \ref{subsec: alg 2}. Based on such a proposition, we
 extend the PGW problem to a discrete \textit{GW-variant} problem \eqref{eq: variant gw discrete}, leading to a solution for the original PGW problem by truncating the GW-variant solution. 

\begin{proposition}\label{pro: pgw and gw}
Let $\mathbb{X}=(X,d_X,\mu)$ be a mm-space. 
Consider an auxiliary point $\hat\infty$ and let $\hat{\mathbb{X}}=(\hat{X},d_{\hat X}, \hat\mu)$, where $\hat{X}=X\cup \{\hat\infty\}$,  $\hat\mu$ is constructed by \eqref{eq: mu_hat},  and considering $\infty$ as an auxiliary point to $\mathbb{R}$ such that $x\leq \infty$ for every $x\in \mathbb{R}$, we extend $d_X$ into $d_{\hat{X}}: \hat{X}^2\to \mathbb{R}\cup\{\infty\}$ and define $L_{\lambda}:\mathbb{R}\cup\{\infty\}\to \mathbb{R}$ as follows: 
\begin{align}\label{eq: d_hat}
  d_{\hat{X}}(x,x')=\begin{cases}
 d_X(x,x') &\text{ if }x,x'\in X\\ 
    \infty & \text{ otherwise}
\end{cases},  
L_{\lambda}(r_1,r_2):=\begin{cases}
    L(r_1,r_2)-2\lambda  &\text{ if }r_1,r_2\in\mathbb{R}\\ 
    0 &\text{ elsewhere}
\end{cases}.   
\end{align}
Consider the following GW-variant\footnote{$\widehat{GW}^{L_\lambda}(\hat{\mathbb{X}},\hat{\mathbb{Y}})$ is not a rigorous GW problem since $d_{\hat{X}}=\infty$ is possible, thus it is not a metric. Also, $\mathbb{X}$, $\mathbb{Y}$ are not necessarily \textit{probability} mm-spaces} problem: 
\begin{align} 
\widehat{GW}^{L_\lambda}(\hat{\mathbb{X}},\hat{\mathbb{Y}})= \inf_{\hat\gamma\in\Gamma(\hat \mu,\hat\nu)}&\hat\gamma^{\otimes2}(L_{\lambda}(d_{\hat X}^q,d_{\hat Y}^q))\label{eq: GW variant}  
\end{align}
Then, when considering the bijection  $\gamma\mapsto \hat\gamma$ defined in \eqref{eq: ot and opt plan}, we have that $\gamma$ is optimal for PGW problem \eqref{eq:pgw} if and only if $\hat\gamma$ is optimal for the GW-variant problem \eqref{eq: GW variant}.
\end{proposition}
\begin{remark}
Intuitively, the above proposition states that, by introducing auxiliary points $\hat\infty$, we can build an equivalent relation between PGW and GW problem. This idea is firstly discussed in \cite{cagniart2010free} in a classical optimal partial transport setting. In this paper, we extend the technique to the partial GW setting. However, this technique cannot be extended to the MPGW setting; we refer to \citet[Appendix A.3]{chapel2020partial} for details. 
\end{remark}

\begin{proof}
    The mapping $F$ defined by \eqref{eq: ot and opt plan} well-defined bijection, as shown in\cite{bai2022sliced,caffarelli2010free}.

Given $\gamma\in\Gamma_\leq(\mu,\nu)$, we have $\hat\gamma=F(\gamma)\in\Gamma(\hat\mu,\hat\nu)$. 
Let $\hat{C}(\hat{\gamma};\mu,\nu)$ denote the transportation cost in the GW-variant problem \eqref{eq: GW variant}, that is, 
\begin{align*}
\hat{C}(\hat{\gamma};\mu,\nu):=
\int_{(\hat X\times \hat Y)^2}L_{\lambda}(d^q_{\hat X}(x,x'),d^q_{\hat Y}(y,y')) \, d\hat{\gamma}(x,y)d\hat{\gamma}(x',y')  
\end{align*}
% $$\hat{C}(\hat{\gamma})=
% \int_{(X\times Y)^2}\hat{L}(d^q_{\hat X}(x,x'),d^q_{\hat Y}(y,y')) d\hat{\gamma}(x,y)d\hat{\gamma}(x',y')=\int_{(\hat{S}\times \hat{Y})^2}|d_S(s,s')-d_X(x,x')|^2_{2\lambda}-2\lambda d\hat\gamma^{\otimes 2}.$$
Then, we have 
\begin{align}
&C(\gamma;\lambda,\mu,\nu)\nonumber\\
&=\int_{(X\times Y)^2}(L(d_{X}^q(x,x'),d_{Y}^q(y,y'))-2\lambda)
 \, d\gamma^{\otimes 2} +\underbrace{\lambda(|\mu|+|\nu|)}_{\text{does not depend on $\gamma$}}\nonumber   \\
 &=\int_{(X\times Y)^2}(L(d_{X}^q(x,x'),d_{Y}^q(y,y'))-2\lambda)
 \, d\hat\gamma^{\otimes 2} +{\lambda(|\mu|+|\nu|)}\nonumber \qquad (\text{since } \hat\gamma|_{X\times Y}=\gamma)\\
&=\int_{(X\times Y)^2}(L(d_{\hat X}^q(x,x'),d_{\hat Y}^q(y,y'))-2\lambda) \, d\hat\gamma^{\otimes 2} +\lambda(|\mu|+|\nu|)\nonumber \quad (\text{as } d_{\hat X}|_{X\times X}=d_X, \, d_{\hat Y}|_{Y\times Y}=d_Y)\\
&=\int_{(X\times Y)^2} L_{\lambda}(d_{\hat X}^q(x,x'),d_{\hat Y}^q(y,y')) \, d\hat\gamma^{\otimes 2} +\lambda(|\mu|+|\nu|)\nonumber \quad (\text{since } \hat{L}|_{\mathbb R \times \mathbb R}(\cdot,\cdot)=(L(\cdot,\cdot)-2\lambda))\\
&=\int_{(\hat X\times \hat Y)^2} L_{\lambda}(d_{\hat X}^q(x,x'),d_{\hat Y}^q(y,y')) \, d\hat\gamma^{\otimes 2} +\underbrace{\lambda(|\mu|+|\nu|)}_{\text{does not depend on } \hat \gamma}.\nonumber \quad (\text{since } \hat L \text{ assigns } 0 \text{ to } \hat\infty)
\end{align}
Combining this with the fact that $F: \gamma\mapsto \hat\gamma$ is a bijection, we have that $\gamma$ is optimal for \eqref{eq:pgw} if and only if $\hat\gamma$ is optimal for \eqref{eq: GW variant}.
Under the assumptions of Proposition \ref{pro: pgw minimizer}, there exists an optimal $\gamma\in\Gamma_{\leq}(\mu,\nu)$ for the PGW problem exists, and so
%Since an optimal $\hat\gamma$ for \eqref{eq: GW variant} always exists because it is a GW problem, 
we have: 
\begin{equation}\label{eq: cost PGW and GW}
  \arg\min_{\hat\gamma\in \Gamma(\hat \mu,\hat\nu)}\hat{C}(\hat{\gamma};\mu,\nu)=\arg\min_{\gamma\in\Gamma_{\leq}(\mu,\nu)} C(\gamma;\lambda,\mu,\nu).  
\end{equation}
\end{proof}
%\end{proof}

% \begin{proof}[Proof of Corollary \ref{coro: existence of min}]
%     The identity \eqref{eq: cost PGW and GW} and Remark \eqref{remark: min in M iff min in Gamma<} prove the Corollary. Thus, we can replace $\inf$ by $\min$ in \eqref{eq: equality gamma leq}.

%     Besides, we point out that the direct proof provided in  Proposition 1 \cite{sejourne2021unbalanced} might be suitable, too. (In \cite{sejourne2021unbalanced}, the formulation of the unbalanced GW setting is slightly different to ours as they use \textit{Csiszár $\phi$-divergences} in the penalization term.) 
%     The idea relies on using a variant of the Weierstrass minimization argument: First, it is easy to check that the cost functional $\gamma\to C(\gamma;\lambda,\mu,\nu)$ is lower semi-continuous as a function of $\gamma$. The coercivity property, i.e., if $|\gamma| \to \infty$, then the functional goes to infinity, too, is also easy to check. Then, the authors in \cite{sejourne2021unbalanced} proceed by proving certain compactness properties; in particular, they use a variant based on results from \cite{Liero2018Optimal}: Essentially, it is needed to check that the sub-level sets of such functional are relatively compact (compact closure in the space of measures $\mathcal{M}(X\times Y)$).     
% \end{proof}

\begin{remark}
Both algorithms (Algorithm 1 and 2) are mathematically and computationally equivalent, owing to the equivalence between the POT problem in Solver 1 and the OT problem in Solver 2.
\end{remark}

\subsection{Frank-Wolfe for the PGW Problem -- Solver 2}\label{subsec: alg 2}
Similarly to the discrete PGW problem \eqref{eq:pgw_discrete}, consider the discrete version of \eqref{eq: mu_hat}: \begin{equation}\label{eq: hat p hat q}
  \hat{\mathrm{p}}=[\mathrm{p};|\mathrm{q}|]\in \mathbb{R}^{n+1}, \quad \hat{\mathrm{q}}=[\mathrm{q};|\mathrm{p}|]\in\mathbb{R}^{m+1},
 \end{equation}
and, in a similar fashion, we define $\hat M\in\mathbb{R}^{(n+1)\times (m+1)\times(n+1)\times (m+1)}$ as
\begin{align}
 \hat M_{i,j,i',j'}=\begin{cases}
     \tilde{M}_{i,j,i',j'} &\text{if }i,i'\in[1:n],j,j'\in[1:m],\\
     0  &\text{elsewhere}.
 \end{cases}\label{eq: M_hat}
 \end{align}    
Then, the GW-variant problem \eqref{eq: GW variant} can be written as 
\begin{align}
\widehat{GW}(\hat{\mathbb{X}},\hat{\mathbb{Y}})=\min_{\hat\gamma\in \Gamma(\hat{\mathrm{p}},\hat{\mathrm{q}})} \mathcal{L}_{\hat M}(\hat \gamma). \label{eq: variant gw discrete}
\end{align}
Based on Proposition \ref{pro: pgw and gw} (which relates $PGW_\lambda^L(\cdot,\cdot)$ with $\widehat{GW}(\cdot,\cdot)$),
%and Proposition \ref{pro: opt and ot} (which relates POT with OT), 
we propose two versions of the Frank-Wolfe algorithm \cite{frank1956algorithm} that can solve the PGW problem \eqref{eq:pgw_discrete}. Apart from Algorithm 1 in \cite{chapel2020partial}, which solves a different formulation of partial GW, and Algorithm 1 in \cite{sejourne2021unbalanced}, which applies the Sinkhorn algorithm to solve an entropic regularized version of \eqref{eq:ugw}, to the best of our knowledge, a precise computational method for the discrete PGW problem \eqref{eq:pgw_discrete} has not been studied.

Here, we discuss another version of the FW Algorithm for solving the PGW problem \eqref{eq:pgw_discrete}.
The main idea relies on solving first the GW-variant problem \eqref{eq: GW variant}, and, at the end of the iterations, by using Proposition \ref{pro: pgw and gw},  convert the solution of the GW-variant problem to a solution for the original partial GW problem \eqref{eq:pgw_discrete}. 

First, construct $\hat{\mathrm{p}},\hat{\mathrm{q}},\hat{\mathrm{M}}$ as described in Proposition \ref{pro: pgw and gw}. Then, for each iteration $k$, perform the following three steps.

\textbf{Step 1: Computation of gradient and optimal direction}.
% As we discussed above, the gradient of $\mathcal{L}_{\hat M}(\hat\gamma)$ is given by 
% \begin{align}
% 2\hat M\circ \hat\gamma \in \mathbb{R}^{(n+1)\times (m+1)} \label{eq: gradient, GW-v 0}. 
% \end{align}
% It can  be computed via the following lemma: 
% \begin{lemma}\label{lem: tensor dot v2}
% Let $\tilde{M}=M-2\lambda 1_{n,m,n,m}$ as defined in \eqref{eq: tilde M}. Then $\hat{M}\circ\hat\gamma\in \mathbb{R}^{(n+1)\times (m+1)}$ with 
% \begin{small}
% \begin{align}\label{eq: hat M}
% &(\hat M\circ \hat \gamma)_{ij}\\
% &\quad =\begin{cases}
%  \left(\tilde M\circ (\hat{\gamma}[1:n,1:m])\right)_{ij} &\text{if }i\in[1:n],j\in[1:m]\\
%  0 &\text{elsewhere} \notag
%  \end{cases}
% \end{align}
% \end{small}
% \end{lemma}
%Next, we discuss the step 1 in this algorithm. 
%In particular, 
Solve the OT problem: 
\begin{align}
  \hat\gamma^{(k)'}&\gets \arg\min_{\hat\gamma\in\Gamma(\hat{\mathrm{p}},\hat{\mathrm{q}})} \langle\mathcal{L}_{\hat M}(\hat{\gamma}^{(k)}),\hat\gamma\rangle_F.\nonumber  
\end{align}
The gradient $\mathcal{L}_{\hat M}(\gamma^{(k)})$ can be computed in a similar way as described in Lemma \ref{lem: grad M_tilde}. We refer to Section \ref{sec:gradient} for details.
%It can be computed in a similar way as described in lemma \ref{lem: grad M_tilde}. See the appendix \ref{sec:gradient} for details. 

% \begin{align}
% \hat\gamma^{(k)'}&\gets \arg\min_{\hat\gamma\in\Gamma(\hat{\mathrm{p}},\hat{\mathrm{q}})} \langle\mathcal{L}_{\hat M}(\hat{\gamma}^{(k)}),\hat\gamma\rangle_F\notag\\
% &=\arg\min_{\hat\gamma\in\Gamma(\hat{\mathrm{p}},\hat{\mathrm{q}})}\langle 2\hat{M}\circ \hat{\gamma}^{(k)},\hat\gamma \rangle_F. 
% \end{align}
\textbf{Step 2: Line search method}. 
Find optimal step size $\alpha^{(k)}$: 
$$\alpha^{(k)}=\arg\min_{\alpha\in[0,1]}\{\mathcal{L}_{\hat{M}}((1-\alpha)\hat\gamma^{(k)}+\alpha\hat\gamma^{(k)'})\}.$$
Similar to Solver 1, let 
\begin{align}
\begin{cases}
&\delta\hat\gamma^{(k)}=\hat\gamma^{(k)'}-\hat\gamma^{(k)},\\
&a=\langle \hat M\circ \delta\hat\gamma^{(k)},\delta\hat\gamma^{(k)} \rangle_F,\\
&b=2\langle \hat M \circ \delta \hat \gamma^{(k)},\hat \gamma^{(k)} \rangle_F .   
\end{cases}
\label{eq: a,b,v2}
\end{align}
Then the optimal $\alpha^{(k)}$ is given by formula  \eqref{eq:a,b,v1}. 
See Appendix \ref{sec:line_search_v2} for a detailed discussion. 

\textbf{Step 3}. Update $\hat\gamma^{(k+1)}\gets (1-\alpha^{(k)})\hat\gamma^{(k)}+\alpha^{(k)}\hat\gamma^{(k)'}$.

\begin{algorithm}[tb]
   \caption{Frank-Wolfe Algorithm for partial GW, ver 2}
   \label{alg:pgw_v2}
\begin{algorithmic}
   \STATE {\bfseries Input:}
   $\mu=\sum_{i=1}^np_i^X\delta_{x_i}, \nu=\sum_{j=1}^mq_j^Y\delta_{y_j},\gamma^{(1)}$
   %\REPEAT
   \STATE {\bfseries Output:}
   $\gamma^{(final)}$
   \STATE Compute $C^{X},C^{Y},\hat{\mathrm{p}},\hat{\mathrm{q}},\hat{\gamma}^{(1)}$\\
   \FOR{$k=1,2,\ldots$}
   \STATE 
   $\hat{G}^{(k)}\gets 2\hat{M}\circ \hat{\gamma}^{(k)}$ // Compute gradient 
   \STATE $\hat{\gamma}^{(k)'}\gets \arg\min_{\hat\gamma\in \Gamma(\hat{\mathrm{p}},\hat{\mathrm{q}})}\langle\hat{G}^{(k)}, \hat\gamma\rangle_F$ // Solve the OT problem \\ 
   \STATE Compute $\alpha^{(k)}\in[0,1]$ via \eqref{eq: a,b,v2}, \eqref{eq:a,b,v1} // Line Search
   \STATE $\hat\gamma^{(k+1)}\gets (1-\alpha^{(k)})\hat\gamma^{(k)'}+\alpha \hat\gamma^{(k)}$// Update $\hat\gamma$ 
   \STATE if convergence, break
   \ENDFOR
 \STATE $\gamma^{(final)}\gets\hat\gamma^{(k)}[1:n,1:m]$
\end{algorithmic}
\end{algorithm}

\section{Gradient Computation in Algorithms 1 and 2}\label{sec:gradient}
In this section, we discuss the computation of Gradient $\nabla \mathcal{L}_{\tilde M}(\gamma)$ in Algorithm 1 and $\nabla \mathcal{L}_{\hat {M}}(\hat\gamma)$ in Algorithm 2. 

\begin{proposition}[Proposition 1 \cite{peyre2016gromov}]\label{pro: tensor product gw}
If the cost function can be written as 
\begin{equation}
  L(r_1,r_2)=f_1(r_1)+f_2(r_2)-h_1(r_1)h_2(r_2) \label{eq:L_cond} 
\end{equation}
then 
\begin{equation}
  M\circ \gamma=u(C^X,C^Y,\gamma)-h_1(C^X)\gamma h_2(C^Y)^\top, \label{eq: tensor dot}
\end{equation}
where $u(C^X,C^Y,\gamma):=f_1(C^X)\gamma_1 1_{m}^\top+1_{n}\gamma_2^\top f_2(C^Y)$.
\end{proposition}

Additionally, the following lemma builds the connection between $\tilde M\circ\gamma$ and $M\circ \gamma$. 
\begin{lemma}\label{lem: grad M_tilde}
For any $\gamma\in \mathbb{R}^{n\times m}$, we have: 
\begin{equation}
\tilde{M}\circ \gamma= M\circ \gamma -2\lambda |\gamma|1_{n,m}\label{eq: tensor dot v1}.
\end{equation}
\end{lemma}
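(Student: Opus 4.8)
The statement to prove is Lemma~\ref{lem: grad M_tilde}: for any $\gamma\in\mathbb{R}^{n\times m}$, we have $\tilde{M}\circ\gamma = M\circ\gamma - 2\lambda|\gamma|1_{n,m}$. The plan is to unpack the definition of $\tilde{M}$ from \eqref{eq: tilde M}, namely $\tilde{M} = M - 2\lambda\,1_{n,m,n,m}$, and substitute it directly into the tensor-matrix product defined just above \eqref{eq: pgw discrete}.

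First I would write, for each pair of indices $(i,j)$,
\begin{align*}
(\tilde{M}\circ\gamma)_{ij} = \sum_{i',j'}\tilde{M}_{i,j,i',j'}\,\gamma_{i',j'} = \sum_{i',j'}\bigl(M_{i,j,i',j'} - 2\lambda\bigr)\gamma_{i',j'}.
\end{align*}
Then I would split the sum by linearity into $\sum_{i',j'}M_{i,j,i',j'}\gamma_{i',j'} - 2\lambda\sum_{i',j'}\gamma_{i',j'}$. The first term is exactly $(M\circ\gamma)_{ij}$ by definition, and the second term is $2\lambda|\gamma|$, where $|\gamma| = \sum_{i',j'}\gamma_{i',j'}$ is the total mass of $\gamma$ (this is the natural extension of the total-variation notation to the discrete setting, consistent with the usage in \eqref{eq: pgw discrete}). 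Since this holds for every $(i,j)$ with the same constant $2\lambda|\gamma|$, assembling the entries into a matrix gives $\tilde{M}\circ\gamma = M\circ\gamma - 2\lambda|\gamma|\,1_{n,m}$, which is the claim.

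There is essentially no obstacle here: the lemma is a one-line consequence of the bilinearity of the tensor-matrix contraction and the fact that $1_{n,m,n,m}\circ\gamma = |\gamma|\,1_{n,m}$, since contracting the all-ones tensor against $\gamma$ in the last two indices simply sums all entries of $\gamma$ and broadcasts that scalar across an $n\times m$ matrix of ones. The only point worth stating carefully is the identification $1_{n,m,n,m}\circ\gamma = |\gamma|\,1_{n,m}$, which I would either verify inline or note as immediate from the definition. Combined with Proposition~\ref{pro: tensor product gw}, this lemma is what lets one compute $\tilde{M}\circ\gamma$ — and hence the gradient $\nabla\mathcal{L}_{\tilde{M}}(\gamma) = 2\tilde{M}\circ\gamma$ in \eqref{eq: gradient of L} — at the reduced cost $\mathcal{O}(n^2m+m^2n)$, since the correction term $-2\lambda|\gamma|1_{n,m}$ costs only $\mathcal{O}(nm)$ to form.
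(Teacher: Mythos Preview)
Your proposal is correct and follows essentially the same approach as the paper: both arguments reduce to the linearity of the tensor-matrix product and the identity $1_{n,m,n,m}\circ\gamma = |\gamma|\,1_{n,m}$. The only cosmetic difference is that the paper invokes Lemma~\ref{lem: tensor prod} for linearity at the matrix level, whereas you verify it entrywise.
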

 
\begin{proof}
For any $\gamma\in\mathbb{R}^{n\times m}$, we have 
\begin{align}
\tilde{M}\circ \gamma&=(M 1_{n,n,m,m}-2\lambda)\circ\gamma\nonumber\\
&=(M-2\lambda 1_{n,n,m,m})\circ \gamma \nonumber\\
&=M\circ\gamma-2\lambda 1_{n,m,n,m}\circ \gamma\nonumber\\
&=M\circ \gamma-2 (\langle 1_{n,m},\gamma\rangle_F)1_{n,m}\nonumber\\
&=M\circ \gamma-2\lambda|\gamma|1_{n,m}\nonumber
\end{align}
where the second equality follows from Lemma \ref{lem: tensor prod}.
\end{proof}

% Note, when $\beta\ge \max(M)$, we can either employ $\hat{f}_1(r_1)=f_1(r_1)-2\lambda$ to replace $f_1$ in Proposition \ref{pro: tensor product gw}, or equivalently, lemma \ref{lem: grad M_tilde} to compute $\tilde{M}\circ \gamma$. 

% When $\beta < \max(M)$, the classical decoupling technique outlined in Proposition \ref{pro: tensor product gw} cannot be straightforwardly adapted. The challenge of approximating $L(r_1, r_2)$, that is, constructing functions $\tilde{f}_1, \tilde{f}_2, \tilde{g}_1^1, \tilde{g}_2^1, \ldots$ such that 
% \begin{align}
% L(r_1, r_2)  &= [f_1(r_1) + f_2(r_2) - g_1(r_1)g_2(r_2)]  \\
% &\approx \tilde{f}_1(r_1) + \tilde{f}_2(r_2) + \sum_{i \geq 1} \tilde{g}^i_1(r_1)\tilde{g}^i_2(r_2),
% \end{align}
% is left as a topic for future research. 

Next, in the setting of Algorithm 2, for any $\hat\gamma\in \mathbb{R}^{(n+1)\times(m+1)}$, we have 
\begin{align}
\nabla \mathcal{L}_{\hat M}(\hat\gamma)=2 \hat M\circ \hat \gamma
\end{align}
and $\hat M\circ \hat \gamma$ can be computed by the following lemma. 
\begin{lemma}\label{lem: grad M_hat}
For each $\hat\gamma\in \mathbb{R}^{(n+1)\times (m+1)}$, we have
$\hat M\circ \hat \gamma\in \mathbb{R}^{(n+1)\times (m+1)}$ with the following: 
\begin{align}
(\hat M\circ \hat\gamma)_{ij}=\begin{cases}
    (\tilde M\circ \hat\gamma[1:n,1:m])_{ij} &\text{if }i\in[1:n], j\in[1:m]\\
    0&\text{elsewhere} 
\end{cases}.
\end{align}
\end{lemma}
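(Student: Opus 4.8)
The plan is to compute $(\hat M \circ \hat\gamma)_{ij} = \sum_{i',j'} \hat M_{i,j,i',j'} \hat\gamma_{i',j'}$ directly from the definition of the tensor $\hat M$ in \eqref{eq: M_hat}, splitting into cases based on whether the indices $i,j$ fall in the ``finite'' block $[1:n]\times[1:m]$ or hit the auxiliary row/column $n+1$, $m+1$. First I would handle the case $i = n+1$ or $j = m+1$: by \eqref{eq: M_hat}, $\hat M_{i,j,i',j'} = 0$ whenever $i \notin [1:n]$ or $j \notin [1:m]$ (regardless of $i',j'$), so the entire sum vanishes, giving $(\hat M\circ\hat\gamma)_{ij} = 0$, which matches the ``elsewhere'' branch of the claimed formula.

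The main case is $i \in [1:n]$, $j \in [1:m]$. Here $\hat M_{i,j,i',j'} = \tilde M_{i,j,i',j'}$ when additionally $i' \in [1:n]$ and $j' \in [1:m]$, and $\hat M_{i,j,i',j'} = 0$ when $i' = n+1$ or $j' = m+1$. Therefore the sum over $i',j' \in [1:n+1]\times[1:m+1]$ collapses to a sum over only $i' \in [1:n]$, $j' \in [1:m]$:
\begin{align*}
(\hat M\circ\hat\gamma)_{ij} = \sum_{i'=1}^{n}\sum_{j'=1}^{m} \tilde M_{i,j,i',j'}\,\hat\gamma_{i',j'} = \sum_{i'=1}^{n}\sum_{j'=1}^{m} \tilde M_{i,j,i',j'}\,(\hat\gamma[1:n,1:m])_{i',j'} = (\tilde M\circ \hat\gamma[1:n,1:m])_{ij},
\end{align*}
where the last equality is just the definition of the tensor-matrix product $\circ$ applied to the $n\times m$ tensor $\tilde M$ and the $n\times m$ submatrix $\hat\gamma[1:n,1:m]$. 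This establishes the ``if $i\in[1:n], j\in[1:m]$'' branch and completes the proof.

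There is essentially no obstacle here: the lemma is a bookkeeping identity that unwinds the block structure of $\hat M$, and the only thing to be careful about is tracking which of the four index slots can equal $n+1$ or $m+1$ and noting that a zero in any of the first two slots kills the output entry, while a zero in the last two slots merely trims the summation range. One subtlety worth a sentence in the writeup is that the values $\hat\gamma_{i',n+1}$ and $\hat\gamma_{n+1,j'}$ (the mass sent to / from the auxiliary point) never enter the formula, which is precisely why the gradient of $\mathcal{L}_{\hat M}$ depends only on the restriction $\hat\gamma[1:n,1:m]$ — this is the conceptual point that makes Solvers 1 and 2 equivalent.
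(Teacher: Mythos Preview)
Your proof is correct and follows essentially the same approach as the paper: both compute $(\hat M\circ\hat\gamma)_{ij}$ directly from the block definition \eqref{eq: M_hat} of $\hat M$, splitting into the cases where an index hits the auxiliary row/column versus the main $[1:n]\times[1:m]$ block. The only cosmetic difference is that the paper treats the main block first and writes out the four pieces of the sum explicitly, whereas you handle the boundary case first and collapse the sum in one line.
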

\begin{proof}
Recall the definition of $\hat M$ is given by \eqref{eq: M_hat}, choose $i\in[1:n],j\in[1:m]$, we have 
\begin{align}
(\hat M\circ \hat \gamma)_{ij}&=\sum_{i'=1}^n\sum_{j'=1}^m \hat{M}_{i,j,i',j'}\hat\gamma_{i',j'}+\sum_{j'=1}^{m}\hat M_{i,j,n+1,j}\hat\gamma_{n+1,j'}+\sum_{i'=1}^n\hat M_{i,j,i',m+1}\hat\gamma_{i,m+1}\nonumber\\
&\qquad +\hat{M}_{i,j,n+1,m+1}\hat\gamma_{n+1,m+1}\nonumber\\
&=\sum_{i'=1}^n\sum_{j'=1}^m\hat{M}_{i,j,i',j'}\hat\gamma_{i',j'}+0+0+0\nonumber=\sum_{i'=1}^n\sum_{j'=1}^m\tilde{M}_{i,j,i',j'}\hat \gamma_{i',j'}\nonumber\\
&=(\tilde M\circ (\hat\gamma[1:n, 1:m]))_{ij}\nonumber 
\end{align}
If $i=n+1$, we have 
\begin{equation}
(\hat M\circ \hat\gamma)_{n+1,j}=\sum_{i'=1}^{n+1}\sum_{j'=1}^{m+1}\hat M_{n+1,j,i',j'}\hat \gamma_{i',j'}=0 \nonumber 
\end{equation}
Similarly, $(\hat M\circ \hat\gamma)_{i,m+1}=0$. 
Thus, we complete the proof. 
\end{proof}
\section{Line Search in Algorithm 1}\label{sec:line_search_v1}
In this section, we discuss the derivation of the line search algorithm. 

We observe that in the partial GW setting, for each $\gamma\in\Gamma_\leq(\mu,\nu)$, the marginals of $\gamma$ are not fixed. Thus, we can not directly apply the classical algorithm (e.g., \cite{titouan2019optimal}).

In iteration $k$, let $\gamma^{(k)},\gamma^{(k)'}$ be the previous and new transportation plans from step 1 of the algorithm. For convenience, we denote them as $\gamma$ and $\gamma'$, respectively.

The goal is to solve the following problem: 
\begin{align}
\min_{\alpha\in[0,1]}\mathcal{L}(\tilde{M},(1-\alpha)\gamma+\alpha \gamma')\label{eq: line search goal}
\end{align}
where 
$\mathcal{L}(\tilde M,\gamma)=\langle \tilde{M}\circ \gamma,\gamma\rangle_F$. By denoting
$\delta\gamma=\gamma'-\gamma$, we have  
$$
\mathcal{L}(\tilde{M},(1-\alpha)\gamma+\alpha \gamma')=\mathcal{L}(\tilde{M},\gamma+\alpha \delta\gamma).$$
Then, 
\begin{align*}
&\langle\tilde{M}\circ (\gamma+\alpha\delta\gamma),(\gamma+\alpha\delta\gamma)\rangle_F\nonumber\\
&=\langle \tilde{M}\circ \gamma,\gamma\rangle_F+\alpha\left(\langle\tilde{M}\circ \gamma,\delta\gamma\rangle_F+\langle \tilde{M}\circ \delta\gamma,\gamma\rangle_F\right)+\alpha^2\langle \tilde{M}\circ\delta\gamma,\delta\gamma\rangle_F
\end{align*}
Let  
\begin{align}
 a=&\langle \tilde{M}\circ\delta\gamma,\delta\gamma\rangle_F,\nonumber\\
 b=&\langle\tilde{M}\circ \gamma,\delta\gamma\rangle_F+\langle \tilde{M}\circ \delta\gamma,\gamma\rangle_F=2\langle \tilde{M}\circ \gamma,\delta\gamma\rangle_F, \label{pf: b} \\ c=&\langle\tilde{M}\circ \gamma,\gamma\rangle_F,\notag
\end{align}
where the second identity in \eqref{pf: b} follows from Lemma \ref{lem: tensor prod} and the fact that $\tilde{M}=M 1_{n,n,m,m}-2\lambda 1_{n,m,n,m}$ is symmetric. 

Therefore, the above problem \eqref{eq: line search goal} becomes 
$$\min_{\alpha\in[0,1]}a\alpha^2+b\alpha+c.$$
The solution is the following: 
\begin{align}
\alpha^*=
\begin{cases}
1 &\text{if }a\leq 0, a+b\leq  0,\\ 
0 &\text{if }a\leq 0, a+b> 0,\\
\text{clip}(\frac{-b}{2a},[0,1]) &\text{if }a>0, 
\end{cases}\label{eq: optimal alpha 2}   
\end{align}
where
$$\text{clip}(\frac{-b}{2a},[0,1])=\min\left\{1, \max\{0,\frac{-b} {2a}\}\right\}=\begin{cases}
    \frac{-b}{2a} &\text{if }\frac{-b}{2a}\in[0,1],\nonumber \\
    0 &\text{if }\frac{-b}{2a}<0, \nonumber\\ 
    1 &\text{if }\frac{-b}{2a}>1. 
\end{cases}$$

We can further discuss the difference in computation of $a$ and $b$ in the PGW setting and the classical GW setting. 
If the assumption in Proposition \ref{pro: tensor product gw} holds, by \eqref{eq: tensor dot} and \eqref{eq: tensor dot v1}, we have

\begin{align}
a&=\langle \tilde{M}\circ \delta\gamma,\delta\gamma\rangle_F\nonumber\\
&=\langle (M\circ\delta\gamma-2\lambda|\delta\gamma|I_{n,m}),
\delta\gamma\rangle_F\nonumber\\
&=\langle M\circ \delta\gamma,\delta\gamma\rangle_F-2\lambda |\delta\gamma|^2\label{eq: a compute v1}\\
&=\left\langle u(C^X,C^Y,\delta\gamma)-h_1(C^X)\delta\gamma h_2(C^Y)^\top, \delta\gamma\right\rangle_F-2\lambda|\delta\gamma|^2, \nonumber\\
b&=2\langle \tilde{M}\circ \gamma,\delta\gamma \rangle_F\nonumber\\
&=2\langle M\circ \gamma-2\lambda|\gamma|I_{n,m},\delta\gamma\rangle\nonumber\\
&=2(\langle M\circ \gamma,\delta\gamma\rangle_F-2\lambda |\delta\gamma||\gamma|)
\label{eq: b compute v1}
 \end{align}
 
Note that in the classical GW setting \cite{titouan2019optimal}, the term $u(C^X,C^Y,\delta\gamma)=0_{n\times m}$ and $|\delta \gamma|=0$. Therefore, in such line search algorithm (Algorithm 2 in \cite{titouan2019optimal}), the terms $u(C^X, C^Y,\delta\gamma),2\lambda |\delta\gamma|1_{n\times m}$ are not required. In addition, in equation \eqref{eq: b compute v1}, $M\circ \gamma,2\lambda|\gamma|$ have been computed in the gradient computation step. Thus, these two terms can be directly applied in this step.

\section{Line Search in Algorithm 2}\label{sec:line_search_v2}
Similar to the previous section, in iteration $k$, let $\hat\gamma^{(k)},\hat\gamma^{(k)'}$ denote the previous transportation plan and the updated transportation plan. For convenience, we denote them as $\hat\gamma,\hat\gamma'$, respectively. 

Let $\delta\hat\gamma=\hat\gamma-\hat\gamma'$. 

The goal is to find the following optimal $\alpha$: 
\begin{align}
\alpha=\arg\min_{\alpha\in[0,1]}\mathcal{L}(\hat M, (1-\alpha)\hat\gamma,\alpha\hat\gamma')=\arg\min_{\alpha\in[0,1]}\mathcal{L}(\hat M, \alpha \delta \hat\gamma+\hat\gamma),
\end{align}
where $\hat{M}\in \mathbb{R}^{(n+1)\times(m+1)\times (n+1)\times(m+1)}$, with $\hat{M}[1:n,1:m,1:n,1:m]=\tilde{M}=M-2\lambda 1_{n\times m\times n\times m}$.

Similar to the previous section,
let 
\begin{align}
a&=\langle \hat M\circ \delta \hat\gamma,\delta \hat\gamma\rangle_F, \nonumber\\
b&=\langle \hat M\circ \delta \hat\gamma,\hat \gamma\rangle_F+\langle \hat M\circ  \hat\gamma,\delta\hat \gamma\rangle_F=2\langle \hat M\circ \delta \hat\gamma,\hat \gamma\rangle_F,\label{pf: b v2}\\
c&=\langle \hat M\circ \hat\gamma,\hat \gamma\rangle_F,\notag
\end{align}
where $\eqref{pf: b v2}$ holds since $\hat{M}$ is symmetric. 
Then, the optimal $\alpha$ is given by \eqref{eq: optimal alpha 2}. 

It remains to discuss the computation. By Lemma \ref{lem: tensor prod}, we set $\gamma=\hat\gamma[1:n,1:m],\delta\gamma=\delta\hat\gamma[1:n,1:m]$. Then,  
\begin{align}
    a&=\langle (\hat M\circ \delta \hat\gamma)[1:n,1:m],\delta\gamma\rangle_F=\langle ( \tilde M\circ \delta \gamma,\delta\gamma\rangle_F,\nonumber\\ 
    b&=\langle (\hat M\circ \delta \hat\gamma)[1:n,1:m],\gamma\rangle_F =\langle ( \tilde M\circ \delta \gamma,\gamma\rangle_F.\nonumber
\end{align}
Thus, we can apply \eqref{eq: a compute v1}, \eqref{eq: b compute v1} to compute $a,b$ in this setting by plugging in $\gamma=\hat\gamma[1:n,1:m]$ and $\delta\gamma=\delta\hat\gamma[1:n,1:m]$.

\section{Convergence}\label{sec: convergence}

As in \cite{chapel2020partial}, we will use the results from \cite{lacoste2016convergence} on the convergence of the Frank-Wolfe algorithm for non-convex objective functions.

Consider the minimization problems
\begin{equation}\label{eq: min prob appendix}
\min_{\gamma\in\Gamma_{\leq}(\mathrm{p},\mathrm{q})}\mathcal{L}_{\tilde M}(\gamma) \qquad \text{ and } \qquad \min_{\hat\gamma\in \Gamma(\hat{\mathrm{p}}, \hat{\mathrm{q}})} \mathcal{L}_{\hat M}(\hat\gamma)
\end{equation}
that corresponds to the discrete partial GW problem and the discrete GW-variant problem (used in version 2), respectively. 

Consider the objective functions 
$\gamma \mapsto \mathcal{L}_{\tilde{M}}(\gamma) = \tilde{M}\,\gamma^{\otimes 2}$ 
and 
$\hat{\gamma} \mapsto \mathcal{L}_{\hat{M}}(\hat{\gamma}) = \hat{M}\,\hat{\gamma}^{\otimes 2}$,
where $\tilde{M} = M - 2\lambda\,1_{n,m}$ for a fixed matrix $M \in \mathbb{R}^{n \times m}$ with $\lambda > 0$, and $\hat{M}$ is given by \eqref{eq: M_hat}. 
Although $\tilde{M}$ and $\hat{M}$ are symmetric for $\lambda > 0$, they are not positive semi-definite, so these objective functions are generally non-convex.

However, the constraint sets $\Gamma_{\leq}(\mathrm{p}, \mathrm{q})$ and $\Gamma(\hat{\mathrm{p}}, \hat{\mathrm{q}})$ are convex and compact in $\mathbb{R}^{n \times m}$ (see Proposition B.2 in \cite{liu2023ptlp}) and in $\mathbb{R}^{(n+1) \times (m+1)}$, respectively.

From now on, we will concentrate on the first minimization problem in \eqref{eq: min prob appendix}, and the convergence analysis for the second one will be analogous.  

Consider the \textit{Frank-Wolfe gap} of $\mathcal{L}_{\tilde M}$ at the approximation $\gamma^{(k)}$ of the optimal plan $ \gamma$:
\begin{equation}\label{eq:g_k}
    g_k=\min_{\gamma\in\Gamma_{\leq}({\mathrm{p}}, {\mathrm{q}})} \langle \nabla\mathcal{L}_{\tilde M}(\gamma^{(k)}),\gamma^{(k)}-\gamma\rangle_F.
\end{equation}
It provided a good criterion to measure the distance to a stationary point at iteration $k$. Indeed, a plan $ \gamma^{(k)}$ is a stationary transportation plan for the corresponding constrained optimization problem in \eqref{eq: min prob appendix} if and only if $g_k=0$. Moreover, $g_k$ is always non-negative ($g_k\geq 0$). 

From Theorem 1 in \cite{lacoste2016convergence}, after $K$ iterations, we have the following upper bound for the minimal Frank-Wolf gap:
\begin{equation}
g_K:=\min_{1\leq k\leq K}g_k\leq \frac{\max\{2L_1,\mathrm{Lip} \cdot  (\text{diam}(\Gamma_{\leq}({\mathrm{p}}, {\mathrm{q}})))^2\}}{\sqrt{K}}, \label{eq:g_k_bound}
\end{equation}
where $$L_1:=\mathcal{L}_{\tilde M}(\gamma^{(1)})-\min_{\gamma\in\Gamma_{\leq}({\mathrm{p}}, {\mathrm{q}})}\mathcal{L}_{\tilde M}( \gamma)$$ is the initial global sub-optimal bound for the initialization $ \gamma^{(1)}$ of the algorithm; $\text{Lip}$ is the Lipschitz constant of function 
$\gamma\mapsto\nabla\mathcal{L}_{\tilde M}$; and 
$$\text{diam}(\Gamma_{\leq}({\mathrm{p}}, {\mathrm{q}}))=\sup_{\gamma,\gamma'\in\Gamma_\leq(\mu,\nu)}\|\gamma-\gamma'\|_F$$ is the $\|\cdot\|_F$ diameter of $\Gamma_{\leq}({\mathrm{p}}, {\mathrm{q}})$ in $\mathbb{R}^{n\times m}$. 

The important thing to notice is that the constant $\max\{2L_1,D_L\}$ does not depend on the iteration step $k$. Thus, according to Theorem 1 in \cite{lacoste2016convergence}, the rate in $\tilde g_K$ is $\mathcal{O}(1/\sqrt{K})$. That is, the algorithm takes at most $\mathcal{O}(1/\varepsilon^2)$ iterations to find an approximate stationary point with a gap smaller than $\varepsilon$.

\begin{lemma}\label{lem:diam_Gamma_bound}
In the discrete PGW problem, we have 
\begin{align}
\text{diam}(\Gamma_\leq(p,q))\leq 2s:=2 \min(|p|,|q|)\label{eq:diam_Gamma_bound}
\end{align}
\end{lemma}
\begin{proof}
Choose $\gamma,\gamma'\in \Gamma_\leq(p,q)$, 
we apply the property
\begin{align}
(a-b)^2\leq 2a^2+2b^2, \forall a,b\in \mathbb{R} \label{eq:ineq_ab}
\end{align} and obtain 
\begin{align}
\|\gamma-\gamma'\|_{F}^2&=\sum_{i,j}^{n,m}|\gamma_{i,j}-\gamma'_{i,j}|^2\nonumber\\
&\leq\sum_{i,j}^{n,m}2|\gamma_{i,j}|^2+2|\gamma'_{i,j}|^2\nonumber\\
&\leq 2\left[(\sum_{i,j}\gamma_{i,j})^2+(\sum_{i,j}\gamma'_{i,j})^2\right]\nonumber\\
&= 2(|\gamma|^2+|\gamma'|^2)\\
&\leq 2 \min(|p|,|q|)^2+2\min(|p|,|q|)^2\nonumber\\
&=4\min(|p|,|q|)^2 \nonumber 
\end{align}
and thus, we complete the proof. 
\end{proof}
\begin{lemma}\label{lem:lip_bound}
For the Lip term in \eqref{eq:g_k_bound} can be bounded as following:
\begin{align}
\text{Lip}\leq nm\max_{i,i,j,j'}(2(C^X_{i,i'})^2+2(C^Y_{j,j'})^2,2\lambda)\label{eq:lip_bound}
\end{align}
\end{lemma}
\begin{proof}
Pick $\gamma,\gamma'\in\Gamma_{\leq}({\mathrm{p}}, {\mathrm{q}})$ we have,
\begin{align*}
&\|\nabla\mathcal{L}_{\tilde M}(\gamma)-\nabla\mathcal{L}_{\tilde M}(\gamma')\|_F^2\nonumber\\
&=\|\tilde M\circ \gamma - \tilde M\circ\gamma'\|_F^2\\
&=\|[M -2\lambda ]\circ(\gamma-\gamma')\|_F^2\nonumber\\
&=\sum_{i,j}\left(\left[(M-2\lambda)\circ(\gamma-\gamma')\right]_{i,j}\right)^2\\
&=\sum_{i,j}\left(\sum_{i',j'}(M_{i,j,i',j'}-2\lambda)(\gamma_{i',j'}-\gamma'_{i',j'})\right)^2\\
&\leq \sum_{i,j}\left(\sum_{i',j'}|M_{i,j,i',j'}-2\lambda||\gamma_{i',j'}-\gamma'_{i',j'}|\right)^2 \nonumber \\
&\leq\underbrace{\left(\max_{i,j,i',j'}\{M_{i,j,i',j'}-2\lambda\}\right)^2}_{A}\cdot
\underbrace{\sum_{i,j}^{n,m}\left(\sum_{i',j'}^{n,m}|\gamma_{i',j'}-\gamma'_{i',j'}|\right)^2}_B
\end{align*}

For the first term, we have: 
\begin{align*}
A&\leq \max\{2(C^X)^2+2(C^Y)^2,2\lambda\}^2
\end{align*}
where 
$$\max\{2(C^X)^2+2(C^Y)^2,2\lambda\}:=\max\{\max_{i,i',j',j'}2(C^X_{i,i'})^2+2(C^Y_{j,j'})^2,2\lambda\}$$

For the second term, we have: 
\begin{align*}
&\sum_{i,j}^{n,m}\left(\sum_{i',j'}^{n,m}|\gamma_{i',j'}-\gamma'_{i',j'}|\right)^2\\
&\leq \sum_{i,j}^{n,m}\left(nm\sum_{i',j'}^{n,m}\left|\gamma_{i',j'}-\gamma'_{i',j'}\right|^2\right)\nonumber\\
&\leq n^2m^2 \|\gamma-\gamma'\|^2_F
\end{align*}
Thus we obtain
\begin{align}
\text{Lip}\leq \frac{\max(2(C^X)^2+2(C^Y)^2,2\lambda)nm\|\gamma-\gamma'\|_F}{\|\gamma-\gamma'\|_F}=nm\max(2(C^X)^2+2(C^Y)^2,2\lambda) \nonumber 
\end{align}
and we complete the proof. 
\end{proof}

Combined the above two lemmas, we derive the convergence rate of the Frank-Wolf gap \eqref{eq:g_k}: 
\begin{proposition}\label{pro:gap_bound_pgw}
When $L(r_1,r_2)=|r_1-r_2|^2$ in the PGW problem, the Frank-Wolfe gap of algorithm \ref{alg:pgw_v1}, defined in \eqref{eq:g_k} at iteration $k$ satisfies the following: 
\begin{align}
g_k \leq \frac{\max\Bigg\{2L_1,4\min(|\mathrm p|,|\mathrm q|)^2\cdot nm(\max\{2(C^X)^2+2(C^Y)^2,2\lambda\})\Bigg\}}{\sqrt{k}}\label{eq:g_k_bound_pgw}
\end{align}
\end{proposition}
\begin{proof}
The proof directly follows from the upper bounds \eqref{eq:diam_Gamma_bound},\eqref{eq:lip_bound} and the inequality \eqref{eq:g_k_bound}.
\end{proof}
\begin{remark}
Note, if the cost function in PGW is defined by $|r_1-r_2|^p$ for some $p\neq 2$, it is straightforward to verify that the upper bound of $g_k$ is obtained by replacing the term $\max ((C^X)^2+(C^Y)^2,2\lambda)$ should be replaced by 
$$\max_{i,j,i',j'}\left[2^{p-1}((C^X)^p+(C^Y)^p),2\lambda\right].$$
\end{remark}
\begin{remark}
It is straightforward to verify that the Frank-Wolf gap for algorithm 2 is also upper bounded by \eqref{eq:g_k_bound_pgw}.
\end{remark}
\begin{remark}
From the proposition \eqref{pro:gap_bound_pgw}, to achieve an $\epsilon-$accurate solution, the required number of iterations is 
\begin{align}
\frac{\max\Bigg\{2L_1,2\min(|\mathrm p|,|\mathrm q|)\cdot n^2m^2\max(\{2(C^X)^2+2(C^Y)^2,2\lambda\})\Bigg\}^2}{\epsilon^2}\nonumber
\end{align}
In practice, by lemma \ref{lem:large_lambda}, when $\lambda$ is large, it is equivalent to set $\lambda=\max((C^X)^2-(C^Y)^2)$ and thus, the value $\lambda$ will not affect the number of iterations. 
\end{remark}
\begin{remark}
In this remark, we compare the upper \eqref{eq:gamma_pgw_gw} and the upper bounds in previous works, including \cite{chapel2020partial} and {cang2024supervised}.  We observe that: 

If we ignore the term $\lambda$ (or equivalently, set $\lambda=\max_{i,i',j,j'}(2(C^X)^2+2(C^Y)^2)$), the upper bound \eqref{eq:g_k_bound_pgw} aligns with the upper bound for the Frank-Wolfe Gap in Mass-constraint PGW (see \citet[Eq. (10)]{chapel2020partial}) and balanced GW (see \citet[p. 18]{cang2024supervised}.

However, there exist several differences in details between our result and the result in \cite{chapel2020partial}. 

\begin{itemize}
    \item First, in \cite{chapel2020partial}, the term $nm$ is omitted. We suspect this omission is a typo, as the $nm$ term is contained in \cite{cang2024supervised}. 
    \item In \cite{chapel2020partial}, ``$s$'' plays the role of $\min(|\mathrm p|,|\mathrm q|)$ in our result. 
    The assumption in \citet[Lemma 1]{chapel2020partial} is $$|\gamma|=|\gamma'|=1$$ 
    If we normalize $\mathrm p,\mathrm q$ such that $\max(|\mathrm p|,|\mathrm q|)\leq 1$, 
we will obtain 
$$2 \min(|\mathrm p|,|\mathrm q|)^2\leq 2 \min(|\mathrm p|,|\mathrm q|),$$
and our result will induce the upper bound in \citet[Lemma 1]{chapel2020partial}. 
\end{itemize}

\end{remark}

\section{Related Work: Mass-Constrained Partial Gromov-Wasserstein}\label{sec: related work chapel}
Partial Gromov-Wasserstein is first introduced in \cite{chapel2020partial}. To distinguish the PGW problem in \cite{chapel2020partial} and the PGW problem in this paper, we call the former one the Mass-Constrained Gromov-Wasserstein problem (MPGW): 
\begin{equation}
MPGW_{\rho}(\mathbb{X},\mathbb{Y}):=\inf_{\gamma\in\Gamma_\leq^\rho(\mu,\nu)}\gamma^{\otimes2}(L(d_X^q,d_Y^q))\label{eq:mpgw},
\end{equation}
where $\rho\in [0,\min\{|\mu|,|\nu|\}]$, and 
\begin{equation}\label{eq:mpgw plans}
    \Gamma_\leq^\rho(\mu,\nu):=\{\gamma\in\mathcal{M}_+(X\times Y): \gamma_1\leq \mu, \, \gamma_2\leq \nu, \,  \, |\gamma|=\rho\}. 
\end{equation}
Unlike the relationship between Partial OT and OT, it is not rigorous to claim that PGW and MPGW are equivalent. The objective function
\begin{equation}\label{eq: costt}
  \gamma \;\mapsto\; \int_{(X\times Y)^2} L\bigl(d_X^2(x,x'),\,d_Y^2(y,y')\bigr)\,d\gamma^{\otimes2}
\end{equation}
is non-convex, even if the map $(r_1, r_2)\mapsto L(r_1,r_2)$ itself is convex \cite{peyre2019computational}. 
If the problem were convex, then MPGW, viewed as the `Lagrangian formulation' of PGW (adding the partial constraint into the functional via Lagrange multipliers), would coincide with PGW. However, since these problems are not convex, we cannot assert their equivalence in principle.

We can still investigate their relationship by the following lemma, based on which we design the wall-clock time experiment in Section \ref{sec:wall_time}. 

\begin{proposition}\label{pro:pgw_mpgw}
For each $\lambda\ge 0$, there exists $\rho\in [0, \min(|\mu|,|\nu|)]$ such that, for each $\gamma\in\Gamma_\leq(\mu,\nu)$ with $|\gamma|=\rho$, $\gamma$ is optimal in $PGW_\lambda(\mathbb{X},\mathbb{Y})$ iff $\gamma$ is optimal in $MPGW_\rho(\mathbb{X},\mathbb{Y})$. Furthermore, 

$$PGW_\lambda(\mu,\nu)=MPGW_{\rho}(\mu,\nu)+\lambda(|\mu|^2+|\nu|^2-2\rho^2).$$

\end{proposition}
\begin{proof}
Pick $\gamma'\in\Gamma^\rho_\leq(\mu,\nu)\subset \Gamma_\leq(\mu,\nu)$, since $\gamma$ is optimal in $PGW_\lambda(\mu,\nu)$, we have 
\begin{align}
0&\leq C(\gamma;\lambda,\mu,\nu)-C(\gamma';\lambda,\mu,\nu)\nonumber\\
&=\int_{(X\times Y)^2}L(d_X^2(x,x'),d_Y^2(y,y'))d(\gamma^{\otimes2}-\gamma'^{\otimes2})\nonumber
\end{align}
Thus, $\gamma$ is optimal in $\Gamma^\rho_\leq(\mu,\nu)$ for $MPGW_\rho(\mathbb{X},\mathbb{Y})$ and we complete the proof. 
\end{proof}

\begin{remark}
The above proposition clarifies a connection between $PGW$ and $MPGW$. Specifically, it implies that an optimal transportation plan for $PGW$ can be derived from $MPGW$ by appropriately setting $\rho$. 
\textbf{However, it is important to note that MPGW and PGW admit distinct transportation costs. MPGW does not define a metric, whereas PGW does.} 
Therefore, in experiments that require transportation costs, such as the shape retrieval experiment described in the main text, the results produced by the two methods will differ significantly.
\end{remark}

\begin{examplee}\label{ex:mpgw}
    Consider the following  three mm-spaces
$$\mathbb{X}_1=\left(\mathbb{R}^3,\|\cdot\|,\sum_{i=1}^{1000}\alpha\delta_{x_i}\right), \quad 
    \mathbb{X}_2=\left(\mathbb{R}^3,\|\cdot\|,\sum_{i=1}^{800}\alpha\delta_{x_i}\right),
    \quad \mathbb{X}_3=\left(\mathbb{R}^3,\|\cdot\|,\sum_{i=1}^{400}\alpha\delta_{x_i}\right),$$ where $\alpha>0$ is the mass of each point. For numerical stability reasons, we set $\alpha=1/1000$. 
On the one hand, if we compute MPGW, the mass is fixed to be a value $\rho\in[0,0.4]$, since the total mass in $\mathbb{X}_3$ is $0.4$. For our experiment, we set $\rho=0.4$, and we observe: 
\begin{align*}       &MPGW_\rho(\mathbb{X}_1,\mathbb{X}_2;\rho=0.4)=MPGW_\rho(\mathbb{X}_2,\mathbb{X}_3;\rho=0.4)=MPGW_\rho(\mathbb{X}_1,\mathbb{X}_3;\rho=0.4)=0
    \end{align*}
On the other hand, if we compute our PGW, considering any $\lambda>0$ (in particular, we set $\lambda=10$), we obtain 
    \begin{align*}
       &PGW_\lambda(\mathbb{X}_1,\mathbb{X}_2; \lambda=10)=3.6\\
       &PGW_\lambda(\mathbb{X}_2,\mathbb{X}_3; \lambda=10)=4.8\\      &PGW_\lambda(\mathbb{X}_1,\mathbb{X}_3;\lambda=10)=8.4
    \end{align*}
In particular, one can verify the triangular inequality.

As a conclusion, in this example, MPGW can not describe the dissimilarity of any two datasets taken from $\{\mathbb{X}_1,\mathbb{X}_2,\mathbb{X}_3\}$. They are three distinct datasets, but MPGW returns zero for each pair. On the contrary, our PGW can measure dissimilarity. 

In addition, the discrepancy provided by our PGW formulation is consistent with the following intuitive observation: One expects the dissimilarity between $\mathbb{X}_1$ and $\mathbb{X}_3$ to be larger than the difference  $\mathbb{X}_1$ and $\mathbb{X}_2$ and than the difference between $\mathbb{X}_1$ and $\mathbb{X}_2$. This is because we are considering discrete measures, with the same mass at each point concentrated on the sets $\{x_1,\dots,x_{400}\}\subset\{x_1,\dots,x_{400}, \dots,x_{800}\}\subset \{x_1,\dots,x_{400}, \dots,x_{800},\dots,x_{1000}\}$ for the datasets $\mathbb{X}_3,\mathbb{X}_2,\mathbb{X}_1$, respectively. 
% We further illustrate their differences with Example \ref{ex:mpgw}.
\end{examplee} 

\begin{remark}\label{ex:diff}

 Moreover, there may, in fact, exist instances of the two problems for which the solution sets are not equal for any value of the hyperparameters $\lambda$ and $\rho$; we illustrate this in the following example:
 
    Consider the mm-spaces given by 
    $$\mathbb{X}_1=\left(\mathbb{R}^{d_1},\|\cdot\|,\sum_{i=1}^n\delta_{x_i}\right) \quad \text{and} \quad \mathbb{X}_2=\left(\mathbb{R}^{d_2},\|\cdot\|,\sum_{j=1}^m\delta_{y_j}\right).$$
    If we let $\lambda = 0$, then $PGW_0(\mathbb{X}_1,\mathbb{X}_2)$ has $\delta_{(x_i,y_j)}$ as a solution for all $(i, j)$, as well as the zero measure.

    Now, we observe that for $\rho=0$, $\delta_{(x_i,y_j)}$ is not a solution of $MPGW_\rho(\mathbb{X}_1,\mathbb{X}_2)$; meanwhile, for any $\rho > 0$, the zero measure is not a solution of $MPGW_\rho(\mathbb{X}_1,\mathbb{X}_2)$.

    Hence, we see that the set of solutions between $PGW_0(\mathbb{X}_1,\mathbb{X}_2)$ and $MPGW_\rho(\mathbb{X}_1,\mathbb{X}_2)$ are distinct for all $\rho$.
\end{remark}

\begin{remark}
The other direction of the ``equivalence relation'' between PGW and MPGW given by Proposition \ref{pro:pgw_mpgw} may not hold. In particular, there exists a problem $MPGW_\rho(\mathbb{X}_1, \mathbb{X}_2)$ for some $\rho>0$, such that for all $\lambda> 0$, each solution to $MPGW_\rho(\mathbb{X}_1, \mathbb{X}_2)$ is not a solution to $PGW_\lambda(\mathbb{X}_1, \mathbb{X}_2)$, and each solution to $PGW_\lambda(\mathbb{X}_1, \mathbb{X}_2)$ is not a solution to $MPGW_\rho(\mathbb{X}_1, \mathbb{X}_2)$. 

As an example, consider $\mathbb{X}_1=\left(\mathbb{R}^{d},\|\cdot\|,\sum_{i=1}^{100}\delta_{x_i}\right)$ and $\mathbb{X}_2=\left(\mathbb{R}^{d},\|\cdot\|,\sum_{i=1}^{200}\delta_{x_i}\right)$, where $x_1,\ldots x_{200}$ are distinct. Then for each $\rho\in (0, 100]$, any solution to $MPGW_\rho(\mathbb{X}_1,\mathbb{X}_2)$ is not a solution for $PGW_\lambda(\mathbb{X}_1, \mathbb{X}_2)$ and vice versa.

\end{remark}

% At first glance, the formulations of the MPGW \eqref{eq:mpgw} and the PGW \eqref{eq:pgw} problems could be thought to be equivalent since tuning the hyper-parameter $\lambda$ for controlling the total mass in the PGW problem is quite similar in spirit to the approach in \cite{chapel2020partial} (MPGW) which instead constrains the total mass of
%   $\gamma$ by the hyper-parameter $\rho$. However, since classical GW and its variants (e.g., UPGW, PGW, MPGW) are not
% convex problems, mathematically, this equivalence relation is not verified. 

% Note, the Proposition \ref{pro:pgw_mpgw} can not be derived by the  "Lagrangian form" technique. 
% In particular, if we consider the Lagrangian form of the MPGW problem \eqref{eq:mpgw}, it is our PGW formulation \eqref{eq:pgw} by taking $2\lambda$ be the "Lagrange variable" of constraint $-|\gamma|^2+\rho^2\leq 0$. 
% % However, as said before, the equivalence is not direct as the cost functional \eqref{eq: costt} is not convex. 

% Besides, 
% the MPGW problem does not give rise to a metric, while our PGW formulation gives rise to a metric as shown in Proposition \ref{pro: pgw is metric}.    
% We will show this through the following example. In fact, we will see that by using the MPGW formulation, we cannot distinguish different mm-spaces, while with our PGW, we can discriminate different mm-spaces.

\section{Partial Gromov-Wasserstein Barycenter}\label{sec:barycenter}
We first introduce the classical Gromov-Wasserstein problem \cite{peyre2016gromov}:
Consider finite discrete probability measures $\mu^1,\ldots, \mu^K$, where $\mu^k=\sum_{i=1}^{n_k}p^k_{i}\delta_{x^k_i}$ and each $x^k_i\in \mathbb{R}^{d_k}$ for some $d_k\in \mathbb{N}$. Let $C^k=[\|x^k_i-x^k_{i'}\|^2]_{i,i'\in[1:n_k]}$ and $\mathrm{p}^k=[p^k_1,\dots,p^k_{n_k}]^\top$.
Given $\mathrm{p}\in \mathbb{R}_+^n$ with $|\mathrm{p}|=1$ for some $n\in\mathbb{N}$ and $\xi_1,\ldots, \xi_K\ge 0$ with $\sum_{k=1}^K\xi_k=1$, the GW barycenter problem is defined by: 

\begin{align}
\min_{ C,\gamma^k } \sum_{k=1}^K\xi_k\langle L(C,C^k)\circ \gamma^k, \gamma^k \rangle\label{eq:GW_bpc}, 
\end{align}
where the minimization is over all matrices $C\in \mathbb{R}^{n\times n},\gamma^k\in\Gamma(\mathrm{p},\mathrm{p}^k),\forall k\in[1:K]$. 

Similarly, we can extend the above definition into the PGW setting. In particular, we relax the assumptions $|\mathrm{p}|=1$ and $|\mathrm{p}^k|=1$ for each $k\in[1:K]$. Given $\lambda_1,\ldots, \lambda_K>0$, the PGW barycenter is the follow problem:  
\begin{align}
\min_{C,\gamma_k}\sum_k\xi_k\langle M(C,C^k)\circ \gamma^k,\gamma^k \rangle-2\lambda_k |\gamma^k|^2\label{eq:PGW_bpc_discrete}
\end{align}
where each $\gamma^k\in \Gamma_\leq(\mathrm{p},\mathrm{p}^k)$. 

The problem \eqref{eq:PGW_bpc_discrete} can be solved iterative by two steps: 

\textbf{Minimization with respect to $C$}: 
For each $k$, we solve the PGW problem $$\min_{\gamma^k\in\Gamma_\leq(p,p^k)}\langle M(C,C^k)\circ\gamma^k,\gamma^k\rangle-2\lambda_k|\gamma^k|^2$$
via solver \ref{alg:pgw_v1} or \ref{alg:pgw_v2}. 

\textbf{Minimization with respect to $\{\gamma^k\}_k$}:
\begin{align}
\min_C\sum_k\xi_k\langle M(C,C^k)\circ \gamma^k,\gamma^k\rangle \label{eq:pgw_bpc_C}
\end{align}
Note, we can ignore the $-2\lambda_k|\gamma^k|^2$ terms as $\gamma^k$ is fixed in this case. 

It has closed form solution due to the following lemma and proposition:

\begin{lemma}\label{lem:deriv_dot}
Given matrices $A\in \mathbb{R}^{n,m},B\in \mathbb{R}^{m,l}, C\in \mathbb{R}^{n,l}$, let 
$$\mathcal{L}=\langle AB,C\rangle,$$
then $\frac{d\mathcal{L}}{dA}=CB^\top$. 
\end{lemma}
\begin{proof}
For any $i\in [1:n], j\in [1:m]$, we have 
\begin{align}
\frac{d\mathcal{L}}{dA_{ij}}&:=\sum_{i',j'}\frac{d}{dA_{ij}}C_{i',j'}(AB)_{i',j'}\nonumber\\
&=\sum_{i',j'}C_{i',j'}\frac{d(\sum_{k}A_{i',k}B_{k,j'})}{dA_{ij}}\nonumber\\
&=\sum_{j'}C_{i,j'}B_{k,j'}=(CB^\top)_{ij}\nonumber. 
\end{align}
\end{proof}

\begin{proposition}\label{pro:pgw_bpc}
If $L$ satisfies \eqref{eq:L_cond}, and $f_1'/h_1'$ is invertible, then  \eqref{eq:pgw_bpc_C} can be solved by
\begin{align}
    C=\left(\frac{f_1'}{h_1'}\right)^{-1}\left(\frac{\sum_k\xi_k\gamma^kh_2(C^k)(\gamma_k)^\top}{\sum_k\xi_k\gamma^k_1(\gamma^k_1)^\top}\right),\label{eq:pgw_bpc_C_sol}
\end{align}
where 
$$\frac{A}{B}=\left[\frac{A_{ij}}{B_{ij}}\right]_{ij}, \text{with convention }\frac{0}{0}=0.$$

Special case: if $|\mathrm{p}|\leq |\mathrm{p}^k|,\forall k$, when $\lambda$ is sufficiently large, \eqref{eq:pgw_bpc_C_sol} and \citet[Proposition 3]{peyre2016gromov} coincide. 
\end{proposition}

\begin{proof}
%We first introduce the following lemma: 
From Proposition \ref{pro: tensor product gw}, the objective in \eqref{eq:pgw_bpc_C} becomes
\begin{align}
\mathcal{L}=&\sum_k\xi_k\langle f_1(C)\gamma_1^11^\top_{n_k}+1_{n}(\gamma_2^k)^\top f_2(C^k)-h_1(C)\gamma^kh_2(C^k)^\top,\gamma^k\rangle\nonumber\\
&=\sum_k\xi_k\langle f_1(C)\gamma_1^11^\top_{n_k},\gamma^k\rangle+\underbrace{\sum_k\xi_k\langle 1_n(\gamma_2^k)^\top f_2(C^k),\gamma^k\rangle}_{\text{constant}}-\sum_k \xi_k\langle h_1(C)\gamma^kh_2(C^k)^\top,\gamma^k\rangle\nonumber 
\end{align}
We set $\frac{d\mathcal{L}}{dC}=0$. From Lemma \ref{lem:deriv_dot}, we have: 
\begin{align}
0&=\frac{d\mathcal{L}}{dC}\nonumber\\
&=\sum_k\xi_k f_1'(C)\odot \gamma^k1_{n_k}(\gamma_1^k)^\top-\sum_k\xi_kh_1'(C)\odot\gamma^kh_2(C^k)(\gamma^k)^\top\nonumber\\
&=f_1'(C)\odot\sum_k\xi_k\gamma^k1_{n_k}(\gamma_1^k)^\top-h_1'(C)\odot \sum_k\xi_k\gamma^kh_2(C^k)(\gamma^k)^\top\nonumber \\
&=f_1'(C)\odot\underbrace{\sum_k\xi_k\gamma^k_1(\gamma_1^k)^\top}_B-h_1'(C)\odot \underbrace{\sum_k\xi_k\gamma^kh_2(C^k)(\gamma^k)^\top}_A.\label{pf:A/B}\end{align}
We claim $\frac{A}{B}$ is well-defined, i.e., if $B_{ij}=0$, then $A_{ij}=0$. 

For each $i,j\in[1:n]$, if $B_{ij}=0$, we have two cases:

Case 1: $\forall k\in[1:K]$, we have $\gamma^k_1[i]=0$. 

Thus, 
$\gamma^k[i,:]=0_{n_k}^\top$. So $A[i,:]=(\gamma^kh_2(C^k)(\gamma^k)^\top)[i,:]=0_{n_k}^\top$.  

Case 2: $\forall k\in[1:K]$, we have $\gamma^k_1[j]=0$. 

It implies $(\gamma^k)^\perp[:,j]=0_{n}$, thus $A[:,j]=(\gamma^kh_2(C^k))(\gamma^k)^\top[:,j]=0_{n_k}$. Therefore, $A_{ij}=0$. 

Thus $\frac{A}{B}$ is well-defined. 

In addition, in these two cases, if we change the value $C_{ij}^k$, $\mathcal{L}$ will not change.  

From \eqref{pf:A/B}, we have:  
\begin{align}
\left(\frac{f_1'}{h_1'}(C)\right)_{ij}=\frac{\left(\sum_k\xi_k\gamma^kh_2(C^k)(\gamma^k)^\top\right)_{ij}}{\left(\sum_k\xi_k\gamma^k_1(\gamma_1^k)^\top\right)_{ij}}\nonumber
\end{align}
if $B_{ij}>0$. In addition, if $B_{ij}=0$, there is no constraint for $C_{ij}$.  

Combining it with the fact that if $B_{i,j}=0$, then $C_{i,j}$ does not affect $\mathcal{L}$. 
Thus, 

we have the following is a solution: 
$$C=\left(\frac{f_1'}{h_1'}\right)^{-1}\left(\frac{\sum_k\xi_k\gamma^kh_2(C^k)(\gamma^k)^\top}{\sum_k\xi_k\gamma^k_1(\gamma_1^k)^\top}\right).$$

In particular case: $|\mathrm{p}|\leq |\mathrm{p}^k|,\forall k$, suppose $\lambda > \max \{c^2: c\in \bigcup_kC^k\cup C \}$, by lemma \ref{lem:gamma_mass}, we have for each $k$,  $|\gamma^k|=\min(|\mathrm{p}|,|\mathrm{p}|^k)=|\mathrm{p}|$, that is $\gamma^k_1=\mathrm{p}$. 

Thus, \begin{align}
\sum_k\xi_k\gamma^k_1(\gamma_1^1)^\top=\sum_k \xi_k\gamma_1^k(\gamma_1^k)^\top=\sum_k\xi_k\mathrm{p}\mathrm{p}^\top=\mathrm{p}\mathrm{p}^\top\nonumber 
\end{align}
Thus, $C=\left(\frac{f_1'}{h_1'}\right)^{-1}\left(\frac{\sum_k\xi_k\gamma^kh_2(C^k)(\gamma^k)^\top}{\mathrm{p}\mathrm{p}^\top}\right).$
\end{proof}
\begin{remark}
    In $l^2$ loss case, i.e. $L(r_1,r_2)=|r_1-r_2|^2$, \eqref{eq:pgw_bpc_C_sol} becomes    
\begin{align}
C=\frac{\sum_k\xi_k\gamma^kC^k(\gamma^k)^\top}{\sum_k\xi_k\gamma^k_1(\gamma^k_1)^\top}. \label{eq:pgw_bpc_C_sol_l2} 
\end{align}
\end{remark}
Since in this case, we can set  $$f_1(x)=x^2,f_2(y)=y^2,h_1(x)=2x,h_2(y)=y.$$
Thus $\frac{f_1'}{h_1'}(x)=\frac{2x}{2}=x$ and $\left(\frac{f_1'}{h_1'}\right)^{-1}(x)=x$. Therefore, \eqref{eq:pgw_bpc_C_sol} becomes \eqref{eq:pgw_bpc_C_sol_l2}. 

% Furthermore, if each $C^k$ is positive semi-definite, then $C$ is positive semi-definite. 
% %\end{proposition}

\begin{algorithm}[tb]
   \caption{Partial Gromov-Wasserstein Barycenter}
   \label{alg:pgw_bpc}
\begin{algorithmic}
   \STATE {\bfseries Input:}
   $\{C^k,\mathrm p^k,\lambda_k\}_{k=1}^K,\mathrm p$
   %\REPEAT
   \STATE {\bfseries Output:}
   $C$
   \STATE Initialize $C$. 
   \FOR{$i=1,2,\ldots$}
   \STATE compute $\gamma^k\gets\arg\min_{\gamma\in\Gamma_\leq(\mathrm p,\mathrm p^k)}\langle \mathcal{L}(C,C^k)-2\lambda_k,\gamma\rangle,\forall k\in[1:K]$. 
   \STATE Update $C$ by \eqref{eq:pgw_bpc_C_sol}. 
   \STATE if convergence, break
   \ENDFOR
\end{algorithmic}
\end{algorithm}

Similarly, we can also extend the above PGW Barycenter into the MPGW setting: 
% $$\min_{\mathbb{S}}\sum_{k=1}^KMPGW_{\rho_k}(\mathbb{S},\mathbb{X}^k),$$
% where $\mathbb{S}=(S,d_S,\sigma)$, $\rho_k\in[0, \min(|\mathrm p|,|\mathrm p^ k|)]$, $\xi_k\ge 0$, $\sum\xi_k=1$.
% In the discrete measure fixed pmf setting, it becomes 
$$\min_{C,\gamma^k}\sum_{k=1}^K\xi_k\langle L(C,C^k)\circ\gamma^k,\gamma^k \rangle,$$
where, for  each $k\in[1:K]$, $\rho_k\in[0, \min(|\mathrm p|,|\mathrm p^ k|)]$, and the optimization is over $C\in \mathbb{R}^n$ and   $\gamma_k\in\Gamma^{\rho_k}_\leq(\mathrm p,\mathrm p^k)$ for $k\in[1:K]$. 

It can be solved by the following algorithm \ref{alg:mpgw_bpc}. 
\begin{algorithm}[tb]
   \caption{Mass-Constrained Partial Gromov-Wasserstein Barycenter}
   \label{alg:mpgw_bpc}
\begin{algorithmic}
   \STATE {\bfseries Input:}
   $\{C^k,\mathrm p^k,\lambda_k\}_{k=1}^K,\mathrm p$
   %\REPEAT
   \STATE {\bfseries Output:}
   $C$
   \STATE Initialize $C$. 
   \FOR{$i=1,2,\ldots$}
   \STATE compute $\gamma^k\gets\arg\min_{\gamma\in\Gamma^{\rho_k}_\leq(\mathrm p,\mathrm p^k)}\langle \mathcal{L}(C,C^k),\gamma\rangle,\forall k\in[1:K]$. 
   \STATE Update $C$ by \eqref{eq:pgw_bpc_C_sol}. 
   \STATE if convergence, break
   \ENDFOR
\end{algorithmic}
\end{algorithm}
\begin{figure}
    \centering
\includegraphics[width=1.0\textwidth]{pics/interpolation_data.png}
\caption{We visualize the dataset in point cloud interpolation. The first row is the original images in \href{https://github.com/gpeyre/2016-ICML-gromov-wasserstein}{Link}. The second row is the point clouds obtained by the k-mean method, where $k=1024$.  }
    \label{fig:interpolation_data}
\end{figure}
\begin{figure}
    \centering    \includegraphics[width=\textwidth]{pics/interporlation_u.png}
    \caption{We test interpolation tasks in 3 scenarios: source data is clean, target data is selected from three cases as described in section \textbf{dataset and data processing}. In each scenario, we test $\eta=5\%,10\%$ respectively. In the first column, we present the source and target point cloud visualization in each task. In columns 2-9, we present GW, PGW barycenter for $t=0/7,1/7,\ldots, 7/7$. }
    \label{fig:interporlation_all}
\end{figure}

\subsection{Details of Point Cloud Interpolation Experiment}\label{sec:shape_interpolation_2}
\textbf{Dataset and data processing.}
We apply the dataset in \cite{peyre2016gromov} with download \href{https://github.com/gpeyre/2016-ICML-gromov-wasserstein}{link}. The original data are images, which we convert into a point cloud using the k-mean algorithm, where $k=1024$ (see the second row of Figure \ref{fig:interpolation_data}).

Suppose $\mathcal{D}\subset\mathbb{R}^2$ is a region that contains these point clouds. 
Let $\mathcal{R}\subset\mathbb{R}^2$ denote another region. In $\mathcal{R}$, we randomly select and add $n\eta$ noise points to these point clouds. In particular, we consider noise corruption in the following three cases: 

Case 1: $\mathcal{R}$ is a rectangle region which is disjoint to $\mathcal{D}$.  See the third row in  Figure \ref{fig:interpolation_data}. 

Case 2: 
$\mathcal{R}=\mathcal{R}_1\cup \mathcal{R}_2$, where $\mathcal{R}_1,\mathcal{R}_2$ are rectangles which are disjoint to $\mathcal{D}$. See the fourth row in Figure \ref{fig:interpolation_data}. 

Case 3: $\mathcal{R}$ contains $\mathcal{D}$. See the fifth row in Figure \ref{fig:interpolation_data}. 

\textbf{GW Barycenter and PGW Barycenter methods}. 
We select $t_1,\ldots, t_K$ with 
$0=t_1<t_2<\ldots< t_K=1$. For each $t\in \{t_1,\ldots, t_K\}$, we compute the GW Barycenter 
\begin{align}
\arg\min_{C,\gamma^1,\gamma^2}(1-t)\langle L(C,C^1)\circ\gamma^1,\gamma^1\rangle+t\langle L(C,C^2)\circ\gamma^2,\gamma^2\rangle, \label{ex:gw_bp}    
\end{align}
where $\gamma_1\in\Gamma(\mathrm p,\mathrm p^1),\gamma_2\in\Gamma(\mathrm p,\mathrm p^2)$.
Apply Smacof-MDS to the minimizer $C$, the resulting embedding, denoted as $X_t\in \mathbb{R}^{n\times 2}$ (where $n=1024$) is the GW-based interpolation. 

Replacing the GW Barycenter with the PGW Barycenter 
\begin{align}
\arg\min_{C,\gamma^1,\gamma^2}(1-t)(\langle L(C,C^1)\circ\gamma^1,\gamma^1\rangle+\lambda_1|\gamma^1|^2)+t(\langle L(C,C^2)\circ\gamma^2,\gamma^2\rangle+\lambda_2|\gamma^2|),\label{ex:pgw_bp}    
\end{align}
where $\lambda_1,\lambda_2>0, \gamma^1\in\Gamma_\leq(\mathrm p,\mathrm p^1),\gamma^2\in\Gamma_\leq(\mathrm p,\mathrm p^2)$. Then, we obtain PGW-based interpolation. 

\textbf{Problem setup}. 
We select one point cloud from the clean dataset denoted as $X=\{x_i\}_{i=1}^n$ (source point cloud), $n=1024$. 

Next, we select one noise-corrupted point cloud, as described in Case 1, Case 2, and Case 3, respectively. 
In these three scenarios, we test $\eta=0.5\%$ and $\eta=10\%$ where $\eta$ is the noise level. Therefore, we test $3*2=6$ different interpolation tasks for these two methods.
The size of the target point cloud is then  $m=n+n\eta$. 
See Figure \ref{fig:interporlation_all} for details.

\textbf{Numerical details.}
In the GW-barycenter method, because of the balanced mass setting, we set 
$$\mathrm p^1=\frac{1}{n}1_{n},\mathrm p^2=\frac{1}{m}1_m,\mathrm p=\frac{1}{n}1_n.$$ 
In PGW-barycenter, we set 
$$\mathrm p^1=\frac{1}{n}1_n,\mathrm p^2=\frac{1}{n}1_m,\mathrm p=\frac{1}{n}1_n.$$
% where $m=n+n\eta$, which is the size of the target point cloud. 
In addition, we set $\lambda_1,\lambda_2$ such that $2\lambda_1,2\lambda_2\ge \max(\max(C_1)^2,\max(C_2)^2)$. 
We compute GW/PGW barycenter for $t=0/7,1/7,\ldots, 7/7$. 

In both GW and PGW barycenter algorithms, we set the largest number of iterations to 100. The threshold for convergence is set to be 1e-5.

\textbf{Performance analysis.}
Each interpolation task is essentially unbalanced: the source point cloud contains clean data, while the target point cloud contains clean and noise points. We observe that in the first two scenarios, the interpolation derived from GW is clearly disturbed by the noise data points. For example, in rows $1,3,5,7$, columns $t=1/7, 2/7, 3/7$, we see that the point clouds reconstructed by MDS have significantly different width-height ratios from those of the source and target point clouds.

In contrast, PGW is significantly less disturbed, and the interpolation is more natural. The width-height ratio of the point clouds generated by the PGW barycenter is consistent with that of the source/target point clouds.

In the third scenario, the noise data is uniformly selected from a large region that contains the domain of all clean point clouds. In this case, we observe that the GW and PGW barycenters perform similarly. However, at $t=1/7, 2/7,4/7$, GW-barycenters present more noise points than PGW-barycenters in the same truncated region.

\textbf{Limitations and future work}.
The main issue of the above GW/PGW techniques arises from the MDS method:

Given minimizer $C\in \mathbb{R}^{n \times n}$ of GW/PGW barycenter problem \eqref{ex:gw_bp} (or \eqref{ex:pgw_bp}), MDS studies the following problem:
\begin{align}
\min_{X \in \mathbb{R}^{n \times d}} \sum_{i,i'=1}^n \left| C_{i,i'}^{1/2} - \|X_i - X_{i'}\| \right|^2 \label{eq:mds}
\end{align}

Let \( O(n) \) denote the set of all \( n \times n \) orthonormal matrices. Suppose \( X^* \) is a minimizer, then \( RX^* \) is also a minimizer for the above problem for all \( R \in O(n) \).

In practice, this means manually setting suitable rotation and flipping matrices for each method at each step, especially for the GW method.

However, we understand that this issue stems from the inherent properties of the GW/PGW method. GW can be seen as a tool that describes the similarity between two graphs, which are rotation-invariant and flipping-invariant. Therefore, the GW/PGW barycenter essentially describes the interpolation between two graphs rather than two point clouds.

\subsection{Multi-shapes interpolation.}
In this section, we present the interpolation between 4 different shapes. In addition, let $\eta$ be the percentage of outliers; we test $\eta=0$ and $5\%$. 

\paragraph{Experiment setup} We select 4 shapes (bird, cat, human, and rooster) from the 2D clouds dataset.  Two of them, i.e., cat and rooster, are embedded into 4D space and are randomly rotated. The goal is to find the interpolation between them. 

\paragraph{Baselines} We select 4 baselines, optimal transport barycenter \cite{cuturi2014fast}, partial optimal transport barycenter \cite{Bonneel2019sliced},
GW barycenter \cite{peyre2016gromov} and our PGW barycenter. 

Note that OT and partial OT barycenter can not be directly applied to this setting. We first embed the 2D shapes into 4D space via mapping 
$$\mathbb{R}^2\ni x\mapsto [x;0;0]\in \mathbb{R}^4,$$
and then compute the OT/POT barycenter. Finally, we apply PCA to project the result back to 2D space for visualization.  

\paragraph{Result anlysis}
We present the result in the following 

\begin{figure}[h] 
    \centering
\begin{subfigure}[t]{0.49\textwidth}
        \centering
\includegraphics[width=\columnwidth]{pics/interporlation/0_ot.jpg}
\caption{OT barycenter}
\label{fig:ot_bt}
\vspace{-.0in}
\end{subfigure}
    \begin{subfigure}[t]        {0.49\textwidth}
        \centering
\includegraphics[width=\columnwidth]{pics/interporlation/0_pot.jpg}
\caption{Partial OT barycenter}
\label{fig:pot_bt}
        \vspace{-.0in}
    \end{subfigure}
\begin{subfigure}[t]{0.49\textwidth}
        \centering
\includegraphics[width=\columnwidth]{pics/interporlation/0_gw.jpg}
\caption{GW barycenter}
\label{fig:gw_bt}
\vspace{-.0in}
\end{subfigure}
    \begin{subfigure}[t]        {0.49\textwidth}
        \centering
\includegraphics[width=\columnwidth]{pics/interporlation/0_pgw.jpg}
\caption{Partial  GW barycenter}
\label{fig:pgw_bt}
        \vspace{-.0in}
    \end{subfigure}

\caption{We visualize multi-shapes interpolation, where the proposition of noise $\eta$ is 0. ``bird'' and ``human'' shapes are distributed in 2D space, ``cat'' and ``rooster'' are distributed in 4D space. }
    \label{fig:4_shapes_interpolation_0}
\end{figure}

\begin{figure}[h] 
    \centering
\begin{subfigure}[t]{0.49\textwidth}
        \centering
\includegraphics[width=\columnwidth]{pics/interporlation/5_ot.jpg}
\caption{OT barycenter}
\label{fig:ot_bt2}
\vspace{-.0in}
\end{subfigure}
    \begin{subfigure}[t]        {0.49\textwidth}
        \centering
\includegraphics[width=\columnwidth]{pics/interporlation/5_pot.jpg}
\caption{Partial OT barycenter}
\label{fig:pot_bt2}
        \vspace{-.0in}
    \end{subfigure}
\begin{subfigure}[t]{0.49\textwidth}
        \centering
\includegraphics[width=\columnwidth]{pics/interporlation/5_gw.jpg}
\caption{GW barycenter}
\label{fig:gw_bt2}
\vspace{-.0in}
\end{subfigure}
    \begin{subfigure}[t]        {0.49\textwidth}
        \centering
\includegraphics[width=\columnwidth]{pics/interporlation/5_pgw.jpg}
\caption{Partial GW barycenter}
\label{fig:pgw_bt2}
        \vspace{-.0in}
    \end{subfigure}

\caption{We visulize of multi-shapes interporlation. ``bird'',``human'' shapes are distributed in 2D space, ``cat'', ``rooster'' are distributed in 4D space. The noise level, $\eta$, is $5\%$.  }
\label{fig:4_shape_interporlation_5}
\end{figure}

When $\eta=0$, from figure \ref{fig:4_shapes_interpolation_0}, we observe that  OT/POT admits good interpolation results between 2D shapes (bird and human). However, the OT/POT interpolation between 2D and 4D is not as good as GW/PGW.  The main reason is that the OT/POT barycenter incorporates the absolute coordinates of these shapes for interpolation, which is affected by random rotation. However,  GW/PGW is not affected by the absolute coordinates of the shapes. 

In the second figure, we demonstrate the result for $\eta=5\%$. Two shapes are corrupted by the noise points (cat and rooster). We observe that GW's interpolation is affected by the outliers (see, e.g., from bird to human or from cat to rooster), while PGW is less affected by these outlier points and admits much better/smoother interpolation.

\section{Details of Point Cloud Matching}\label{sec:shape_matching_2}

\begin{figure}[h!]
    \centering
    \begin{subfigure}[t]{.7\textwidth}
        \centering
        \includegraphics[width=\columnwidth]{pics/shape_match_full_2d.png}
        \vspace{-.2in}
        \caption{2D and 2D matching}
        \label{fig:shape_matching_full_2d}
    \end{subfigure}
    \begin{subfigure}[t]{.7\textwidth}
        \centering
\includegraphics[width=\columnwidth]{pics/shape_match_full_3d.png}
        \vspace{-.2in}
        \caption{2D and 3D matching}
        \label{fig:shape_matching_full_3d}
    \end{subfigure}
    \caption{We visualize the shape-matching result for all tested parameters.}
    \label{fig:shape_matching_2}
\end{figure}

\textbf{Dataset setup}. 
In the Moon dataset (see \href{https://scikit-learn.org/stable/modules/generated/sklearn.datasets.make_moons.html}{link}), 
we apply $n=200$ and set Gaussian variance to be $0.2$. The outliers are sampled from region $[[-2,-1.5]\times[-3.5,-3]]$. 

In the second experiment, the circle data is uniformly sampled from a 2D circle 
$$\mathbb{S}^1=\{s\in \mathbb{R}^2: \|s\|^2=1\}$$

and spherical data is uniformly sampled from 3D sphere $$\mathbb{S}^{2}=\{s+[0,0,4]\in \mathbb{R}^2: \|s\|^2=1\},$$
where the shift $[0,0,4]$ is applied for visualization. 

We set sample size $n=200$ for both 2D and 3D samples. 

In both experiments, the number of outliers is $\eta n=0.2n=40$. 

\textbf{Numerical details}. 
In GW, we normalize the two-point clouds as 
$$\mathbb{X}=(X,d_X,\sum_{i=1}^n\frac{1}{n}\delta_{x_i}),\mathbb{Y}=(Y,d_Y,\sum_{j=1}^{n+n\eta}\frac{1}{n+n\eta}\delta_{y_j}).$$

In PGW, MPGW, and UGW, we define the point clouds as 

$$\mathbb{X}=(X,d_X,\sum_{i=1}^n\frac{1}{n}\delta_{x_i}),\mathbb{Y}=(Y,d_Y,\sum_{j=1}^{n+n\eta}\frac{1}{n}\delta_{y_j}).$$

In PGW, we choose $\lambda $ such that $\lambda\ge \max(\max((C^X)^2),\max((C^Y)^2))$, in particular, $\lambda=10.0$. 
In addition, we tested $\lambda=0.01,0.1,1.0,10.0$.

In MPGW, we tested $\rho=0.3,0.5,0.8,1.0$ and selected $\rho=1.0$ 

In UGW, we tested $[1e-10,1e-1,1.0,10]$ and selected 
$\rho_1=\rho_2=1.0$, $\epsilon=0.05$. 

We also refer to the following figure for the visualization of all tested parameters. 

\begin{figure}[h] 
    \centering
    \begin{subfigure}[t]{0.5714\textwidth}
        \centering
\includegraphics[width=\columnwidth]{pics/bone_star_data.pdf}
\caption{Dataset I}
\label{fig:sr_data_1}
\vspace{-.0in}
\end{subfigure}%
    ~ 
    \begin{subfigure}[t]        {0.4286\textwidth}
        \centering
\includegraphics[width=\columnwidth]{pics/rectangle_house_data.pdf}
\caption{Dataset II}
\label{fig:sr_data_2}
        \vspace{-.0in}
    \end{subfigure}
    \caption{Visualization of a representative shape from each class of the two datasets.}
    \label{fig:shape_retrieval_data}
\end{figure}

\begin{figure}[tb]
    \centering
    \begin{subfigure}[t]{.9\textwidth}
        \centering
        \includegraphics[width=\columnwidth]{pics/bone_star_dists.pdf}
        \vspace{-.2in}
        \caption{Dataset I}
        \label{fig:dist_data_1_2}
    \end{subfigure}
    \begin{subfigure}[t]{.9\textwidth}
        \centering
\includegraphics[width=\columnwidth]{pics/rectangle_house_dist.pdf}
        \vspace{-.2in}
        \caption{Dataset II}
        \label{fig:dist_data_2_2}
    \end{subfigure}
    \caption{Pairwise distance matrices computed for each dataset.}
    \label{fig:dist_mats}
\end{figure}

\begin{figure}[tb]
    \centering
    \begin{subfigure}[t]{0.9\textwidth}
        \centering
        \includegraphics[width=\columnwidth]{pics/bone_star_conf.pdf}
        \vspace{-.2in}
        \caption{Dataset I}
        \label{fig:conf_data_1}
    \end{subfigure}
    
    \begin{subfigure}[t]{0.9\textwidth}
        \centering
        \includegraphics[width=\columnwidth]{pics/rectangle_house_conf.pdf}
        \vspace{-.2in}
        \caption{Dataset II}
        \label{fig:conf_data_2}
\end{subfigure}

    \caption{Confusion matrices computed from nearest neighbor classification experiments.}
    \label{fig:conf_mats}
\end{figure}

\section{Details of Shape Retrieval Experiment}\label{sec:shape_retrieval_2}

\textbf{Dataset details.}
We test two datasets in this experiment, which we refer to as Dataset I and Dataset II. We visualize Dataset I in Figure \ref{fig:sr_data_1} and Dataset II in Figure \ref{fig:sr_data_2}. The complete datasets can be accessed from the supplementary materials.

\textbf{Numerical details.}
We represent the shapes in each dataset as mm-spaces $\mathbb{X}^i= \left(\mathbb{R}^2, \| \cdot \|_2, \mu^i = \sum_{k=1}^{n^i} \alpha^i \delta_{x^i_k}\right)$. We use $\alpha^i = \frac{1}{n^i}$ to compute the GW distances for the balanced mass constraint setting. For the remaining distances, we set $\alpha=\frac{1}{N}$, where $N$ is the median number of points across all shapes in the dataset. For the SVM experiments, we use $\exp(-\sigma D)$ as the kernel for the SVM model. Here, we normalize the matrix $D$ and choose the best $\sigma \in \{0.001, 0.01, 0.1, 1, 10, 100, 1000, 10000\}$ for each method used in order to facilitate a fair comparison of the resulting performances. We note that the resulting kernel matrix is not necessarily positive semidefinite.

In computing the pairwise distances, for the PGW method, we set $\lambda$ such that $\lambda \leq \lambda_{max} = \max_i{(|C^i|^2)}$. In particular, we compute $\lambda_{max}$ for each dataset and use $\lambda = \frac{1}{5}\lambda_{max}$ for each experiment. For UGW, we use $\varepsilon = 10^{-1}$ and $\rho_1 = \rho_2 = 1$ for both experiments. Finally, for MPGW, we set the mass-constrained term to be $\rho=\min(|\mu^i|,|\mu^j|)$ when computing the similarity between shape $\mathbb{X}^i$ and $\mathbb{X}^j$.

\textbf{Performance analysis.}
The pairwise distance matrices are visualized for each dataset in Figure \ref{fig:dist_mats}, and the confusion matrices computed with each dataset are given in Figure \ref{fig:conf_mats}. Finally, the classification accuracy with the SVM experiments is reported in Table \ref{tb:svm_accs}. The results indicate that the PGW distance is able to obtain high performance across both datasets consistently. 

In addition, from Figure \ref{fig:dist_mats}, we observe that PGW qualitatively admits a more reasonable similarity measure compared to other methods. For example, in Dataset I, class ``bone'' and ``rectangle'' should have relatively smaller distances than ``bone'' and ``annulus''. Ideally, a reasonable distance should satisfy the following: 

$$0<d(\text{bone},\text{rectangle})< d(\text{bone},\text{anulus}).$$

However, we do not observe this relation in GW and UGW\footnote{For UGW, this is due to the Sinkhorn regularization term.}, and for the MPGW method, $MPGW(\text{bone},\text{rectangle})\approx 0$, which is also undesirable. For PGW, however, we do observe this relation. Additionally, we report the wall-clock time comparison in Table \ref{tb:svm_time}.

\subsection{Additional experiment with OT/UT baselines.}
In this subsection, we incorporate optimal transport distance \cite{Villani2003Topics} and unbalanced optimal transport distance \cite{chizat2018unbalanced} into the shape retrieval experiment.

\paragraph{Dataset Setup}  
We perform shape retrieval experiments in two scenarios: the original 2D dataset and a 4D dataset. For the 4D scenario, we embed the shapes into 4D space and then apply a random rotation to each shape.

\paragraph{Baselines}  
We evaluate the performance of the following methods: Optimal Transport distance (OT) \cite{Villani2003Topics}, Unbalanced Optimal Transport (UOT) distance \cite{chizat2018unbalanced}, Gromov-Wasserstein distance (GW) \cite{gromov2001metric}, Mass-constrained Gromov-Wasserstein discrepancy (MPGW) \cite{chapel2020partial}, Unbalanced Gromov-Wasserstein discrepancy (UGW) \cite{sejourne2021unbalanced}, and our Partial Gromov-Wasserstein (PGW) distance.

The parameter settings for GW, MPGW, UGW, and PGW are described in the previous section. For the UOT method, we set the weight of Sinkhorn regularization to 0.1 and the weight of the marginal penalty to 0.2.

\paragraph{Result.}

\begin{table}[t]
    \centering
    \begin{tabular}{lcc}
        \toprule
        Method & Dataset I & Dataset II \\
        \midrule
        OT ($d=2$) & 95.6\% & 55.8\% \\
        OT ($d=4$) & 79.3\% & 50.8\% \\
        UOT ($d=2$) & 73.5\% & 73.3\% \\
        UOT ($d=4$) & 56.9\% & 63.3\% \\
        GW ($d=2,4$) & 98.1\% & 80.8\% \\
        MPGW ($d=2,4$) & 23.7\% & 25.0\% \\
        UGW ($d=2,4$) & 89.4\% & 90.0\% \\
        PGW (ours, $d=2,4$) & 96.2\% & 100\% \\
        \bottomrule
    \end{tabular}
    \caption{Performance comparison across different methods and datasets.}
    \label{tb:shape_retrieval_2}
\end{table}

In Dataset 1, when data is in the 2D space, OT achieves an accuracy of 95.6\%, while UOT achieves 73.5\%. OT's accuracy is slightly lower than GW/PGW, but it remains a strong classifier in this setting. In Dataset 2, UOT outperforms OT with an accuracy of 73.3\% compared to 55.8\%.

When the shapes are embedded into 4D space, the accuracy of OT and UOT drops significantly, ranging from 56.9\% to 79.3\%, far below the performance of GW and PGW. This decline highlights the reliance of OT/UOT on absolute coordinates, which becomes more problematic in higher dimensions. In contrast, GW-based methods (GW, MPGW, UGW, PGW) remain unaffected, as they are invariant to absolute locations.
Overall, regardless of whether the data is in 2D or 4D space, OT and UOT consistently perform worse than GW and its variants. The gap in performance becomes even more pronounced in 4D space.

\section{Other numerical implementations}
\subsection{Initial methods.}
In this experiment, we discuss several different methods to define the initial guess in the Frank-Wolfe algorithm proposed in this paper. Note some of these methods have been applied in FW algorithms/Sinkhorn solvers in classical GW \cite{memoli2011gromov}, Mass-constraint GW \cite{chapel2020partial} and Unbalanced GW \cite{sejourne2021unbalanced}

Given two mm-spaces $\mathbb{X}=(X,d_X,\mu),\mathbb{X}=(Y,d_Y,\nu)$, we consider the two cases:

Case 1: Dimensions of $X$ and $Y$ are the same. Note, in this case, we can define classical OT/partial OT/unbalanced OT between $\mu$ and $\nu$. Thus, \cite{chapel2020partial} proposed the following ``\textbf{POT initilization}'' method: 

\begin{equation}
\gamma^{(1)}\gets \arg\min_{\gamma\in \Gamma_{\leq,\pi}(\mathrm{p},\mathrm{q})}\langle L(X,Y),\gamma\rangle_F,
\label{eq:pot_init}
\end{equation}
where $L(X,Y)\in \mathbb{R}^{n\times m}, \,  (L(X,Y))_{ij}=\|x_i-y_j\|^p$ for some fixed $p\ge 1$ and 
\begin{align}
&\Gamma_{\leq ,\pi}(\mathrm{p},\mathrm{q}):=\{\gamma\in\mathbb{R}_+^{n\times m}: (\gamma^\top 1_{n})_j\in \{q_j^Y,0\},\forall j; \gamma 1_{m}\leq \mathrm{p}, |\gamma|=\pi\}.\label{eq:pot_init_constraint}
\end{align}
The above problem can be solved by a Lasso ($L^1$ norm) regularized OT solver. 

Case 2: Dimensions of $X$ and $Y$ are different. The above technique can not be applied since the problem \eqref{eq:pot_init} (in particular $L(X, Y)$) is not well-defined. 

In  this case, \cite{sejourne2021unbalanced} introduced the \textbf{``FLB-UOT''} method: 
\begin{align}
\gamma^{(1)}\gets \arg\min_{\gamma\in \Gamma_\leq(\mathrm{p},\mathrm{q})}&\int_{X\times Y}|s_{X,p}(x)-s_{Y,p}(y)|^pd\gamma(x,y)+\lambda(D_{KL}(\gamma_1,\mathrm{p})+D_{KL}(\gamma_2,\mathrm{q})), \label{eq: FLB UOT}  
\end{align}
where $s_{X,p}(x)=\int_{X}|x-x'|^pd\mu(x)$ and $s_{Y,p}$ is defined similarly. The problem \eqref{eq: FLB UOT} is called Hellinger Kantorovich, which is a classical unbalanced optimal transport problem. The Sinkhorn solver can solve it \cite{chizat2018scaling}. 

Analog to the above method, we propose the third method, called ``\textbf{FLB-POT}'' (first lower bound-partial optimal transport)
\begin{align}
\gamma^{(1)}\gets\arg\min_{\gamma\in \Gamma_\leq(\mathrm{p},\mathrm{q})}&\int_{X\times Y}|s_{X,p}(x)-s_{Y,p}(y)|^2d\gamma(x,y)+\lambda(|\mathrm{p}-\gamma_1|+|\mathrm{q}-\gamma_2|). \label{eq: FLB POT}  
\end{align}
The above problem is a partial OT problem and can be solved by classical linear programming \cite{caffarelli2010free}. 

\subsection{parameter setting for PGW}
Setting parameter $\lambda$ in PGW is important in the numerical implementation. There are two scenarios to consider:
\begin{itemize}
    \item  Both source and target measures contain outliers:

    The parameter $\lambda$ acts as an upper bound for the transported distance. Specifically, if  
     $$
     \|d_X(x,x') - d_Y(y,y')\|^2 \geq 2\lambda,
     $$
     then either $(x, y)$ or $(x', y')$ will not be transported.  
   - When the distance between outliers and clean data is large, and the pairwise distances within the clean data are relatively small, $\lambda$ should be set to lie between these two scales.
   \item Only one measure contains outliers:
   
   As stated in Lemma \ref{lem:large_lambda}, $\lambda$ can be set sufficiently large to satisfy:  
     $$
     2\lambda \geq \max_{x,x' \in X, y,y' \in Y} \|d_X(x,x') - d_Y(y,y')\|^2.
     $$

In the interpolation and shape retrieval experiments, we fall under the second scenario, which does not require significant parameter tuning.
\end{itemize}

% \begin{table}[h]
%     \centering
%     \begin{subtable}[t]{0.4\textwidth}
%         \centering
%         \begin{tabular}{lcc}
%             \toprule
%             Distance & Dataset I & Dataset II \\
%             \midrule
%             GW   & \textbf{0.9875} & 0.7667 \\
%             MPGW & 0.1188 & 0.0000 \\
%             UGW  & 0.8938 & 0.8917 \\
%             PGW (ours)  & 0.9625 & \textbf{1.00} \\
%             \bottomrule
%         \end{tabular}
%         \caption{Mean accuracy of SVM using each distance in the kernel.}
%         \label{tb:svm_accs}
%     \end{subtable}
%     ~\hspace{.45in}~
%     \begin{subtable}[t]{0.4\textwidth}
%         \begin{tabular}{lcc}
%         \toprule
%         Distance & Dataset I & Dataset II \\
%         \midrule
%         GW   & 47.25s & 128.37s \\
%         MPGW & 46.02s & 98.04s \\
%         UGW  & 2361.7s & 7050.25s \\
%         PGW (ours)  & \textbf{33.34s} & \textbf{83.49s}  \\
%         \bottomrule
%         \end{tabular}
%         \caption{Wall-clock time comparison.}
%         \label{tb:svm_time}
%     \end{subtable}
% \end{table}

\clearpage
\section{Wall-Clock Time Comparison for Partial GW Solvers}\label{sec:wall_time}

In this section, we present the wall-clock time comparison between our method Algorithms \ref{alg:pgw_v1}, \ref{alg:pgw_v2}, the Frank-Wolf algorithm proposed in \cite{chapel2020partial}, and its Sinkhorn version \cite{peyre2016gromov,chapel2020partial}. Note that these two baselines solve a mass constraint version of the PGW problem, which we refer to as the ``MPGW'' problem. The proposed PGW formulation in this paper can be regarded as a ``Lagrangian formulation'' of MPGW\footnote{Due to the non-convexity of GW, we do not have a strong duality in some of the GW representations. Thus, the Lagrangian form is not a rigorous description.} formulation to the PGW problem defined in \eqref{eq:pgw}.
In this paper, we call these two baselines the ``MPGW algorithm'' and the ``Sinkhorn PGW algorithm''.  

\textbf{Numerical details.}
The data is generated as follows: let $\mu=\text{Unif}([0,2]^2)$ and $\nu=\text{Unif}([0,2]^3)$, we select i.i.d. samples $\{x_i\sim \mu\}_{i=1}^n, \{y_j\sim \nu\}_{j=1}^m$, where $n$ is selected from $[10,50,100,150,...,10000]$ and $m=n+100$,
$\mathrm{p}=1_{n}/m , \mathrm{q}=1_m/m $. For each $n$, we set $\lambda=0.2, 1.0, 10.0$. The mass constraint parameter for the algorithm in  \cite{chapel2020partial}, and Sinkhorn is computed by the mass of the transportation plan obtained by Algorithm \ref{alg:pgw_v1} or \ref{alg:pgw_v2}. The runtime results are shown in Figure \ref{fig:time}. 

Regarding the acceleration technique, for the POT problem in step 1, our algorithms and the MPGW algorithm apply the linear programming solver provided by Python OT package \cite{flamary2021pot}, which is written in C++. The Sinkhorn algorithm from Python OT does not have an acceleration technique. Thus, we only test its wall-clock time for $n\leq 2000$. The data type is a 64-bit float number.

From Figure \ref{fig:time}, we can observe the Algorithms \ref{alg:pgw_v1}, \ref{alg:pgw_v2}, and MPGW algorithm have a similar order of time complexity. However, using the column/row-reduction technique for the POT computation discussed in previous sections and the fact the convergence behaviors of Algorithms \ref{alg:pgw_v1} and \ref{alg:pgw_v2} are similar to the MPGW algorithm, we observe that the proposed algorithms \ref{alg:pgw_v1}, \ref{alg:pgw_v2} admits a slightly faster speed than MPGW solver.  

\begin{figure}[h]
    \label{fg: time}
    \centering    
    \includegraphics[width=\columnwidth]{pics/time.pdf}
    \vspace{-.3in}
    \caption{We test the wall-clock time of our Algorithm \ref{alg:pgw_v1} and Algorithm \ref{alg:pgw_v2}, the  MPGW solver (Algorithm 1 in \cite{chapel2020partial}) , and the Sinkhorn algorithm \cite{peyre2016gromov}. We denote these methods as ``v1'', ``v2'', ``m'' and ``s'', respectively. The linear programming solver applied in the first three methods is from POT \cite{flamary2021pot}, which is written in C++. The maximum number of iterations for all the methods is set to be $1000$. The maximum iteration for OT/OPT solvers is set to be $300n$. The maximum Sinkhorn iteration is set to be $1000$. The convergence tolerance for the Frank-Wolfe algorithm and the Sinkhorn algorithm are set to be $1e-5$. 
    To achieve the best performance, we set the number of dummy points to 1 for both MPGW and PGW.
}
    \label{fig:time}
    \vspace{-.25in}
\end{figure}

\clearpage

\section{Positive Unlabeled  Problem}\label{sec:}

\subsection{Problem setup.}
Positive unlabeled (PU) learning  \cite{bekker2020learning,elkan2008learning,kato2018learning}  is a semi-supervised binary classification problem for which the training set only contains positive samples. 
In particular, suppose there exists a fixed unknown overall distribution over triples $(x,o,l)$, where $x$ is data, $l\in \{0,1\}$ is the label of $x$, $o\in\{0,1\}$ where $o=1$, $o=0$ denote that $l$ is observed or not, respectively.  In the PU task, the assumption is that only positive samples' labels can be observed, i.e.,
$\text{Prob}(o=1|x,l=0)=0.$
Consider training labeled data $X^{pu}=\{(x_i^{pu},l)\}_{i=1}^n\subset \{x: o=1\}$ and testing data $X^{un}=\{x_j^{un}\}_{j=1}^m\subset \{x: o=0\}$, where $x_ip_i^X\in \mathbb{R}^{d_1}, x^u_j\in \mathbb{R}^{d_2}$. In the classical PU learning setting, $d_2=d_1$. However, in \cite{sejourne2021unbalanced}, this assumption is relaxed. The goal is to leverage $X^p$ to design a classifier $\hat{l}: x^u\to \{0,1\}$ to predict $l(x^u)$ for all $x^u\in X^u$.\footnote{In the classical setting, the goal is to learn a classifier for all $x$. In this experiment, we follow the setting in \cite{sejourne2021unbalanced}.}

Following \cite{elkan2008learning,chapel2020partial,sejourne2021unbalanced}, in this experiment, we assume that the ``select completely at random'' (SCAR) assumption holds: 
%\begin{align}
$\text{Prob}(o=1|x,l=1)=\text{Prob}(o=1|l=1)$. %\label{eq: scar}
%\end{align} 
In addition, we use $\pi=\text{Prob}(l=1)\in[0,1]$ to denote the ratio of positive samples in testing set\footnote{In the classical setting, the prior distribution $\pi$ is the ratio of positive samples of the original dataset. For convenience, we ignore the difference between this ratio in the original dataset and the test dataset.}. Following the PU learning setting in \cite{kato2018learning,hsieh2019classification, chapel2020partial,sejourne2021unbalanced}, we assume
$\pi$ is known. In all the PU learning experiments, we fix $\pi=0.2$. 

\subsection{Our method.} Similar to \cite{chapel2020partial} our method is designed as follows: 
We set $\mathrm{p}\in \mathbb{R}^n,\mathrm{q}\in\mathbb{R}^m$ as 
$p_i^X=\frac{\pi}{n},  i\in[1:n];\quad q_j^Y=\frac{1}{m}, j\in[1:m].$
%\end{align}
Let $\mathbb{X}^p=(X^p,\|\cdot\|_{d_1},\sum_{i=1}^np_i^X\delta_{x_i}),\mathbb{X}^u=(X^u,\|\cdot\|_{d_2},\sum_{j=1}^nq_j^Y\delta_{y_j})$. 
We solve the partial GW problem $PGW_{\lambda}(\mathbb{X}^p,\mathbb{X}^u)$ and suppose $\gamma$ is a solution. Let $\gamma_2=\gamma^\top1_n$. The classifier $\hat{l}$ is defined by the indicator function
\vspace{-0.4em}
\begin{align}
\hat{l}_\gamma(x^u)=\mathbbm{1}_{\{x^u: \, \gamma_2(x^u)\ge \mathrm{quantile}\}}, \label{eq: l_hat}
\end{align}
where $\mathrm{quantile}$ is the quantile value of $\gamma_2$ according to $1-\pi$.%\footnote{In practice, we regard $\gamma_2$ as a continuous array.}

Regarding the initial guess $\gamma^{(1)}$,
\cite{chapel2020partial} proposed a POT-based approach when $X$ and $Y$ are sampled from the same domain, i.e., $d_1=d_2$, which we refer to as ``POT initialization.''

When $X,Y$ are sampled from different spaces, that is, $d_1\neq d_2$, the above technique \eqref{eq:pot_init} is not well-defined. Inspired by \cite{memoli2011gromov,sejourne2021unbalanced}, we propose the following ``first lower bound-partial OT'' (FLB-POT) initialization: 
\begin{align}
  \gamma^{(1)}=\arg\min_{\gamma\in \Gamma_\leq(\mathrm{p},\mathrm{q})}&\int_{X\times Y}|s_{X,2}(x)-s_{Y,2}(y)|^2d\gamma(x,y)\nonumber+\lambda(|\mathrm{p}-\gamma_1|+|\mathrm{q}-\gamma_2|), \label{eq: FLB POT}  
\end{align}
where $s_{X,2}(x)=\int_{X}|x-x'|^2d\mu(x)$ and $s_{Y,2}$ is defined similarly. The above formula is analog to Eq. (7) in \cite{sejourne2021unbalanced}, which is designed for the unbalanced GW setting. To distinguish them, in this paper, we call Eq. (7) in 
\cite{sejourne2021unbalanced} as ``FLB-UOT initialization''.

\subsection{Dataset.} The datasets include \href{https://pytorch.org/vision/stable/generated/torchvision.datasets.MNIST.html}{MNIST}, \href{https://pytorch.org/vision/main/generated/torchvision.datasets.EMNIST.html}{EMNIST}, and the following three domains of \href{https://faculty.cc.gatech.edu/~judy/domainadapt/#datasets_code}{Caltech Office}:
Amazon (A), Webcam (W), and DSLR (D) \cite{saenko2010adapting}. For each domain, we select the SURF features \cite{saenko2010adapting} and DECAF features \cite{donahue2014decaf}.
For MNIST and EMNIST, we train an auto-encoder, respectively, and the embedding space dimension is $4$ and $6$, respectively. See Figure \ref{fig:pu_data} for the TSNE visualization of these datasets. 
\begin{figure}  
\begin{subfigure}[b]{0.52\linewidth}
    \centering
\includegraphics[height=0.22\textheight]{pics/MNIST.jpg} 
    \caption{MNIST} 
  \end{subfigure} 
  \begin{subfigure}[b]{0.52\linewidth}
    \centering
    \includegraphics[height=0.22\textheight]{pics/EMNIST.jpg} 
    \caption{EMNIST}  
  \end{subfigure} 
  
  \begin{subfigure}[b]{0.52\linewidth}
    \centering
    \includegraphics[height=0.22\textheight]{pics/amazon_surf.jpg} 
  \caption{Surf(A)}  
  \end{subfigure} 
  \begin{subfigure}[b]{0.52\linewidth}
    \centering
    \includegraphics[height=0.22\textheight]{pics/amazon_decaf.jpg} 
    \caption{Decaf(A)} 
    \label{fig: amazon decaf} 
  \end{subfigure} 
  \begin{subfigure}[b]{0.52\linewidth}
    \centering
    \includegraphics[height=0.22\textheight]{pics/dslr_surf.jpg}
    \caption{Surf(D)} 
  \end{subfigure}
  \begin{subfigure}[b]{0.52\linewidth}
    \centering
    \includegraphics[height=0.22\textheight]{pics/dslr_decaf.jpg} 
    \caption{Decaf(D)} 
    \label{fig: dslr decaf} 
  \end{subfigure} 
   \begin{subfigure}[b]{0.52\linewidth}
    \centering
    \includegraphics[height=0.22\textheight]{pics/webcam_surf.jpg} 
    \caption{Surf(W)} 
     
  \end{subfigure}
  %\hspace{\fill}
  \begin{subfigure}[b]{0.52\linewidth}
    \centering
    \includegraphics[height=0.22\textheight]{pics/webcam_decaf.jpg} 
    \caption{Decaf(w)} 
    \label{fig: webcam decaf} 
  \end{subfigure} 
\caption{TSNE visulization for datasets \href{https://pytorch.org/vision/stable/generated/torchvision.datasets.MNIST.html}{MNIST},\href{https://pytorch.org/vision/main/generated/torchvision.datasets.EMNIST.html}{EMNIST},\href{https://faculty.cc.gatech.edu/~judy/domainadapt/\#datasets_code}{Caltech Office}.
}
\label{fig:pu_data} 
\end{figure}

% \begin{table}[h!]
% {\small
% \label{tb: pu accuracy}
% \vskip 0.15in
% \begin{center}
% \begin{sc}
% \begin{tabular}{lcccc}
% \toprule
% Dataset & Init & MPGW & UGW & PGW \\
% \midrule
% M $\to $ M   & POT, 99\% & 99\%& 96\%&99\% \\
% M $\to $ M   & FLB-U, 75\%& 96\%& 96\%&96\% \\
% M $\to $ M   & FLB-P, 75\%& 99\%& 96\%&98\% \\
% EM $\to $ EM   & POT, 99\%& 99\%& 96\%&99\% \\
% EM $\to $ EM   & FLB-U, 78\%& 93\%& 95\%& 93\% \\
% EM $\to $ EM   &FLB-P, 78\%& 95\%& 95\%&95\% \\
% \bottomrule
% M $\to $ EM   & FLB-U, 78\%& 94\%& 65\%&94\% \\
% M $\to $ EM   & FLB-P, 78\%& 94\%& 95\%&94\% \\
% EM $\to $ M   & FLB-U, 74\%& 97\%& 97\%&97\% \\
% EM $\to $ M   & FLB-P, 75\%& 97\%& 97\%&97\% \\
% \bottomrule
% \end{tabular}
% \end{sc}
% \end{center}
% \caption{In this table, we present the accuracy of primal-PPW, UGW, and PGW. In the ``Init'' column, the first entry is the initialization method; the second entry is the accuracy percentage. We test 5 times and present the average accuracy for each method and initial method. }
% \vskip -0.1in}
% \end{table}

\subsection{Numerical details and performance.} 
\textbf{Accuracy Comparison.}
\begin{table}[b]
\vskip 0.15in
\begin{center}
\begin{sc}
\begin{tabular}{lccccc}
\toprule
Dataset & Init Method & Init Accuracy & MPGW & UGW & PGW (ours) \\
\midrule
M $\to$ M   & POT & 100\% & 100\% & 95\% & 100\% \\
M $\to$ M   & FLB-U & 75\% & 96\% & 95\% & 96\% \\
M $\to$ M   & FLB-P & 75\% & 99\% & 95\% & 99\% \\
M $\to$ EM  & FLB-U & 78\% & 94\% & 95\% & 94\% \\
M $\to$ EM  & FLB-P & 78\% & 94\% & 95\% & 94\% \\
EM $\to$ M  & FLB-U & 75\% & 97\% & 96\% & 97\% \\
EM $\to$ M  & FLB-P & 75\% & 97\% & 96\% & 97\% \\
EM $\to$ EM & POT & 100\% & 100\% & 95\% & 100\% \\
EM $\to$ EM & FLB-U & 78\% & 94\% & 95\% & 94\% \\
EM $\to$ EM & FLB-P & 78\% & 95\% & 95\% & 95\% \\
\bottomrule
\end{tabular}
\end{sc}
\end{center}
\vskip -0.0in
\caption{Accuracy comparison of the MPGW, UGW, and the proposed PGW method on PU learning. Here, `M' denotes MNIST, and `EM' denotes EMNIST.}
\label{tb:pu_accuracy1}
\end{table}
In Table \ref{tb:pu_accuracy1} and Table \ref{tb:pu_accuracy2}, we present the accuracy results for the MPGW, UGW, and the proposed PGW methods when using three different initialization methods: POT, FLB-UOT, and FLB-POT.

Following \cite{chapel2020partial},  in the MPGW and PGW methods, we incorporate the prior knowledge $\pi$ into the definition of $p$ and $q$. Thus, it is sufficient to set $mass=\pi$ for MPGW and choose a sufficiently large value for $\lambda$ in the PGW method. This configuration ensures that the mass matched in the target domain $\mathcal{Y}$ is exactly equal to $\pi$. However, in the UGW method \cite{sejourne2021unbalanced}, the setting is $p=\frac{1}{n}1_n$ and $q=\frac{1}{m}1_m$. 

Overall, all methods show improved performance in MNIST and EMNIST datasets. One possible reason for this could be the better separability of the embeddings in MNIST and EMNIST, as illustrated in Figure \ref{fig:pu_data}. Additionally, since MPGW and PGW incorporate information from $r$ into their formulations, they exhibit slightly better accuracy in many experiments.

%In Table \ref{tb:pu_accuracy1}, we present the accuracy comparison between PGW, UGW, and our method on MNIST and EMNIST datasets. The PU learning results on the Caltech Office dataset are included in the appendix. Regarding the wall-clock time, typically, MPGW and PGW require 1-35 seconds, and UGW requires 100-160 seconds. See the Appendix for the complete accuracy table on all datasets, the visualization of all datasets, and the wall-clock timetable. 

\textbf{Numerical details.} In this experiment, to prevent unexpected convergence to local minima in the Frank-Wolf algorithms, we manually set $\alpha=1$ during the line search step for both MPGW and PGW methods.

For the convergence criteria, we set the tolerance term for Frank-Wolfe convergence and the main loop in the UGW algorithm to be $1e-5$. Additionally, the tolerance for Sinkhorn convergence in UGW was set to $1e-6$. The maximum number of iterations for the POT solver in PGW and MPGW was set to $500n$. In addition, for MPGW, we set $\text{mass}=0.2$ and for PGW method, based on lemma \ref{lem:large_lambda}, we set $\lambda$ to be constant such that $2\lambda\ge (\max(|C^X|)^2+\max(|C^Y|)^2)$. 
For UGW, as we directly apply the numerical method in \cite{sejourne2023unbalanced}, where the prior knowledge $\pi=0.2$ is not Incorporated in the setting of $p$ and $q$. Thus, in each experiment, we test different parameters $(\rho,\rho_2,\epsilon)$ and select the ones that result in transported mass close to $\pi$.

Regarding data types, we used 64-bit floating-point numbers for MPGW and PGW and 32-bit floating-point numbers for UGW.

For the MNIST and EMNIST datasets, we set $n=1000$ and $m=5000$. In the Surf(A) and Decaf(A) datasets, each class contained an average of 100 samples. To ensure the SCAR assumption, we set $n=1/2*100=50$ and $m=250$. Similarly, for the Surf(D) and Decaf(D) datasets, we set $n=15$ and $m=75$. Finally, for Surf(W) and Decaf(W), we used $n=20$ and $m=100$.

\textbf{Wall-clock time}
In Table \ref{tb:pu_time}, we provide a comparison of wall-clock times for the MNIST and EMNIST datasets.

\begin{table}[h!]
\vskip 0.15in
\begin{center}
\begin{sc}
\begin{tabular}{ccccccc}
\toprule
Source & Target & Init Method & Init Time & MPGW & UGW & PGW (ours) \\
\midrule
M(1000) & M(5000) & POT & 0.5 & \textbf{7.2} & 152.0 & 7.4 \\
M(1000) & M(5000) & FLB-U & 0.02 & 30.5 & 152.6 & \textbf{27.8} \\
M(1000) & M(5000) & FLB-P & 0.5 & 27.8 & 144.9 & \textbf{26.9} \\
EM(1000) & EM(5000) & POT & 0.5 & \textbf{7.3} & 157.3 & 7.5 \\
EM(1000) & EM(5000) & FLB-U & 0.02 & 30.0 & 181.8 & \textbf{29.9} \\
EM(1000) & EM(5000) & FLB-P & 0.5 & \textbf{22.2} & 155.1 & 22.3 \\
M(1000) & EM(5000) & FLB-U & 0.02 & \textbf{34.0} & 157.9 & 34.4 \\
M(1000) & EM(5000) & FLB-P & 0.5 & \textbf{34.9} & 155.5 & 35.0 \\
EM(1000) & M(5000) & FLB-U & 0.02 & 24.3 & 139.3 & \textbf{22.2} \\
EM(1000) & M(5000) & FLB-P & 0.5 & 32.0 & 162.7 & \textbf{29.9} \\
\midrule
M(2000) & M(10000) & POT & 1.7 & \textbf{31.1} & 1384.8 & 32.1 \\
M(2000) & M(10000) & FLB-U & 0.1 & 209.0 & 1525.8 & \textbf{192.5} \\
M(2000) & M(10000) & FLB-P & 1.7 & 208.0 & 1418.4 & \textbf{192.1} \\
M(2000) & EM(10000) & FLB-U & 0.1 & 165.1 & 1606.1 & \textbf{164.2} \\
M(2000) & EM(10000) & FLB-P & 1.7 & 224.1 & 1420.7 & \textbf{223.7} \\
EM(2000) & M(10000) & FLB-U & 0.1 & 149.1 & 1426.5 & \textbf{138.1} \\
EM(2000) & M(10000) & FLB-P & 1.7 & 113.9 & 1407.6 & \textbf{103.9} \\
EM(2000) & EM(10000) & POT & 1.6 & \textbf{32.4} & 1445.9 & 33.4 \\
EM(2000) & EM(10000) & FLB-U & 0.1 & \textbf{233.0} & 1586.3 & 233.9 \\
EM(2000) & EM(10000) & FLB-P & 1.8 & \textbf{142.1} & 1620.6 & \textbf{142.1} \\
\bottomrule
\end{tabular}
\end{sc}
\end{center}
\caption{In this table, we present the wall-clock time for the MPGW, UGW, and the proposed PGW method, as well as three different initialization methods (POT, FLB-UOT, FLB-POT). In the ``Source'' (or ``Target'') column, M (or EM) denotes the MNIST (or EMNIST) dataset, and the value $1000$ (or $5000$) denotes the sample size of $X$ (or $Y$).  The units of all reported wall-clock times are seconds.}
\label{tb:pu_time}
\end{table}

\newpage
\begin{table}[h!]

\vskip 0.15in
\begin{center}
\begin{sc}
\begin{tabular}{lccccc}
\toprule
Dataset & Init Method & Init Accuracy & MPGW & UGW & PGW (ours) \\
\midrule
surf(A) $\to$ surf(A) & POT & 81.2\% & 74.7\% & 66.5\% & 74.7\% \\
surf(A) $\to$ surf(A) & FLB-U & 64.9\% & 65.7\% & 66.5\% & 65.7\% \\
surf(A) $\to$ surf(A) & FLB-P & 63.3\% & 66.5\% & 66.5\% & 66.5\% \\

decaf(A) $\to$ decaf(A) & POT & 95.1\% & 95.1\% & 60.8\% & 95.1\% \\
decaf(A) $\to$ decaf(A) & FLB-U & 78.0\% & 67.4\% & 83.7\% & 67.4\% \\
decaf(A) $\to$ decaf(A) & FLB-P & 78.0\% & 74.7\% & 88.6\% & 74.7\% \\

surf(D) $\to$ surf(D) & POT & 100\% & 100\% & 89.3\% & 100\% \\
surf(D) $\to$ surf(D) & FLB-U & 62.7\% & 73.3\% & 84.0\% & 73.3\% \\
surf(D) $\to$ surf(D) & FLB-P & 60.0\% & 60.0\% & 78.7\% & 60.0\% \\

decaf(D) $\to$ decaf(D) & POT & 100\% & 100\% & 100\% & 100\% \\
decaf(D) $\to$ decaf(D) & FLB-U & 76.0\% & 68.0\% & 70.7\% & 68.0\% \\
decaf(D) $\to$ decaf(D) & FLB-P & 73.3\% & 73.3\% & 86.7\% & 73.3\% \\

surf(W) $\to$ surf(W) & POT & 100.0\% & 100.0\% & 81.3\% & 100.0\% \\
surf(W) $\to$ surf(W) & FLB-U & 76.0\% & 70.7\% & 81.3\% & 70.7\% \\
surf(W) $\to$ surf(W) & FLB-P & 73.3\% & 68.0\% & 78.7\% & 68.0\% \\

decaf(W) $\to$ decaf(W) & POT & 100\% & 100\% & 100\% & 100\% \\
decaf(W) $\to$ decaf(W) & FLB-U & 73.3\% & 68.0\% & 62.7\% & 68.0\% \\
decaf(W) $\to$ decaf(W) & FLB-P & 70.7\% & 70.7\% & 73.3\% & 70.7\% \\

\midrule

surf(A) $\to$ decaf(A) & FLB-U & 73.9\% & 83.7\% & 91.8\% & 83.7\% \\
surf(A) $\to$ decaf(A) & FLB-P & 73.9\% & 83.7\% & 87.8\% & 83.7\% \\

decaf(A) $\to$ surf(A) & FLB-U & 67.3\% & 67.3\% & 69.0\% & 67.3\% \\
decaf(A) $\to$ surf(A) & FLB-P & 67.3\% & 68.2\% & 71.4\% & 68.2\% \\

surf(D) $\to$ decaf(D) & FLB-U & 76.0\% & 76.0\% & 65.3\% & 76.0\% \\
surf(D) $\to$ decaf(D) & FLB-P & 76.0\% & 76.0\% & 65.3\% & 76.0\% \\

decaf(D) $\to$ surf(D) & FLB-U & 73.3\% & 62.7\% & 73.3\% & 62.7\% \\
decaf(D) $\to$ surf(D) & FLB-P & 73.3\% & 73.3\% & 73.3\% & 73.3\% \\

surf(W) $\to$ decaf(W) & FLB-U & 70.7\% & 70.7\% & 76.0\% & 70.7\% \\
surf(W) $\to$ decaf(W) & FLB-P & 70.7\% & 70.7\% & 76.0\% & 70.7\% \\

decaf(W) $\to$ surf(W) & FLB-U & 68.0\% & 68.0\% & 65.3\% & 68.0\% \\
decaf(W) $\to$ surf(W) & FLB-P & 68.0\% & 68.0\% & 70.7\% & 68.0\% \\
\bottomrule
\end{tabular}
\end{sc}
\end{center}
\vskip -0.0in
\caption{In this table, we present the accuracy comparison of the MPGW, UGW, and the proposed PGW method. We first describe the initialization method and report its accuracy, followed by the accuracy of MPGW, UGW, and PGW. The prior distribution $\pi=p(l=1)$ is set to be 0.2 in all experiments. To guarantee the SCAR assumption, for Surf(A) and Decaf(A), we set $n=50$, which is half of the total number of data in one single class. $m$ is set to be $250$. Similarly, we set suitable $n,m$ for Surf(D), Decaf(D), Surf(W), Decaf(W).}
\label{tb:pu_accuracy2}
\end{table}

\newpage

\begin{table}[h!]

\vskip 0.15in
\begin{center}
\begin{sc}
\begin{tabular}{lccccc}
\toprule
Dataset & Init Method & Init Time & MPGW & UGW & PGW (ours) \\
\midrule
surf(A) $\to$ surf(A) & POT & 1.4e-3 & 1.9e-2 & 3.8 & 2.0e-2 \\
surf(A) $\to$ surf(A) & FLB-U & 2.2e-3 & 1.8e-2 & 3.6 & 1.9e-2 \\
surf(A) $\to$ surf(A) & FLB-P & 1.7e-3 & 1.8e-2 & 3.8 & 1.5e-2 \\

decaf(A) $\to$ decaf(A) & POT & 1.7e-3 & 1.9e-2 & 7.3 & 1.9e-2 \\
decaf(A) $\to$ decaf(A) & FLB-U & 9.6e-3 & 1.8e-2 & 6.8 & 1.5e-2 \\
decaf(A) $\to$ decaf(A) & FLB-P & 2.0e-3 & 1.8e-2 & 6.7 & 1.6e-2 \\

surf(D) $\to$ surf(D) & POT & 2.9e-4 & 5.8e-4 & 3.1 & 3.8e-4 \\
surf(D) $\to$ surf(D) & FLB-U & 1.4e-3 & 3.0e-3 & 5.4 & 2.2e-3 \\
surf(D) $\to$ surf(D) & FLB-P & 3.1e-4 & 2.9e-3 & 5.4 & 2.1e-3 \\

decaf(D) $\to$ decaf(D) & POT & 3.1e-4 & 6.0e-4 & 3.3 & 3.6e-4 \\
decaf(D) $\to$ decaf(D) & FLB-U & 1.4e-3 & 2.9e-3 & 5.8 & 2.1e-3 \\
decaf(D) $\to$ decaf(D) & FLB-P & 3.4e-4 & 2.8e-3 & 5.3 & 2.0e-3 \\

surf(W) $\to$ surf(W) & POT & 3.0e-4 & 6.0e-4 & 5.2 & 3.6e-4 \\
surf(W) $\to$ surf(W) & FLB-U & 1.3e-3 & 2.9e-3 & 5.1 & 2.1e-3 \\
surf(W) $\to$ surf(W) & FLB-P & 3.3e-4 & 2.9e-3 & 5.1 & 2.1e-3 \\

decaf(W) $\to$ decaf(W) & POT & 3.3e-4 & 6.2e-4 & 3.3 & 3.4e-4 \\
decaf(W) $\to$ decaf(W) & FLB-U & 1.2e-3 & 2.9e-3 & 5.8 & 2.1e-3 \\
decaf(W) $\to$ decaf(W) & FLB-P & 3.3e-4 & 2.8e-3 & 5.4 & 2.0e-3 \\

\midrule

surf(A) $\to$ decaf(A) & FLB-U & 1.1e-1 & 2.8e-2 & 6.7 & 2.6e-2 \\
surf(A) $\to$ decaf(A) & FLB-P & 1.9e-3 & 2.2e-2 & 0.2 & 2.1e-2 \\

decaf(A) $\to$ surf(A) & FLB-U & 0.1 & 5e-2 & 6.7 & 4e-2 \\
decaf(A) $\to$ surf(A) & FLB-P & 2e-3 & 1.8 & 6.8 & 1.5 \\

surf(D) $\to$ decaf(D) & FLB-U & 1.8e-3 & 5.3e-3 & 6.0 & 2.3e-3 \\
surf(D) $\to$ decaf(D) & FLB-P & 3.5e-4 & 3.9e-4 & 5.9 & 3.8e-4 \\

decaf(D) $\to$ surf(D) & FLB-U & 1.8e-3 & 0.296 & 5.6 & 0.165 \\
decaf(D) $\to$ surf(D) & FLB-P & 3.3e-4 & 0.218 & 5.6 & 0.170 \\

surf(W) $\to$ decaf(W) & FLB-U & 1.8e-3 & 5.3e-3 & 5.0 & 2.3e-3 \\
surf(W) $\to$ decaf(W) & FLB-P & 3.4e-4 & 4.1e-4 & 5.0 & 3.9e-4 \\

decaf(W) $\to$ surf(W) & FLB-U & 1.8e-3 & 5.1e-3 & 5.8 & 2.1e-3 \\
decaf(W) $\to$ surf(W) & FLB-P & 3.4e-4 & 2.9e-3 & 5.6 & 2.2e-3 \\
\bottomrule
\end{tabular}
\end{sc}
\end{center}
\vskip -0.0in
\caption{In this table, we present the wall-clock time comparison of the MPGW, UGW, and the proposed PGW method. We report the initialization method and its wall-clock time, followed by the wall-clock time of each of the methods MPGW, UGW, and PGW. The units of all reported wall-clock times are seconds. The prior distribution $\pi=p(l=1)$ is set to be 0.2 in all experiments. To guarantee the SCAR assumption, for Surf(A) and Decaf(A), we set $n=50$, which is half of the total number of data in one single class. $m$ is set to be $250$. Similarly, we set suitable $n,m$ for Surf(D), Decaf(D), Surf(W), Decaf(W).}
\label{tb:pu_time2}
\end{table}

\section{Compute Resources}\label{sec:computation_resource}

All experiments presented in this paper are conducted on a computational machine with an AMD EPYC 7713 64-Core Processor, 8 $\times$ 32GB DIMM DDR4, 3200 MHz, and an NVIDIA RTX A6000 GPU.

% \section{Impact Statement}\label{sec:impact}
% The work presented in this paper aims to advance the field of machine learning, particularly the supplementary theoretical developments and explorations of computational optimal transport. There are many potential societal consequences of our work, none of which we feel must be specifically highlighted here.

\end{document}